\DeclareMathAlphabet{\mathsf}{OT1}{cmss}{m}{n}
\SetMathAlphabet{\mathsf}{bold}{OT1}{cmss}{bx}{n}
\providecommand{\norm}[1]{\|#1\|}
\begin{document}

\title{\huge \bf On Computation and Generalization of Generative Adversarial Imitation Learning}

\author
{Minshuo Chen, Yizhou Wang, Tianyi Liu, Zhuoran Yang, \\ Xingguo Li, Zhaoran Wang, and Tuo Zhao\thanks{Minshuo Chen, Tianyi Liu, and Tuo Zhao are affiliated with Georgia Tech; Yizhou Wang is affiliated with Xi'an Jiaotong University; Zhuoran Yang, Xingguo Li are affiliated with Princeton University. Zhaoran Wang is affiliated with Northwestern University. Email:$\{$mchen393, tourzhao$\}$@gatech.edu.}}

\date{}

\maketitle

\begin{abstract}
Generative Adversarial Imitation Learning (GAIL) is a powerful and practical approach for learning sequential decision-making policies. Different from Reinforcement Learning (RL), GAIL takes advantage of demonstration data by experts (e.g., human), and learns both the policy and reward function of the unknown environment. Despite the significant empirical progresses, the theory behind GAIL is still largely unknown. The major difficulty comes from the underlying temporal dependency of the demonstration data and the minimax computational formulation of GAIL without convex-concave structure. To bridge such a gap between theory and practice, this paper investigates the theoretical properties of GAIL. Specifically, we show: (1) For GAIL with general reward parameterization, the generalization can be guaranteed as long as the class of the reward functions is properly controlled; (2) For GAIL, where the reward is parameterized as a reproducing kernel function, GAIL can be efficiently solved by stochastic first order optimization algorithms, which attain sublinear convergence to a stationary solution. To the best of our knowledge, these are the first results on statistical and computational guarantees of imitation learning with reward/policy function approximation. Numerical experiments are provided to support our analysis.
\end{abstract}


\section{Introduction}

As various robots \citep{tail2018socially}, self-driving cars \citep{kuefler2017imitating}, unmanned aerial vehicles \citep{pfeiffer2018reinforced} and other intelligent agents are applied to complex and unstructured environments, programming their behaviors/policy has become increasingly challenging. These intelligent agents need to accommodate a huge number of tasks with unique environmental demands. To address these challenges, many reinforcement learning (RL) methods have been proposed for learning sequential decision-making policies \citep{sutton1998introduction,kaelbling1996reinforcement,mnih2015human}. These RL methods, however, heavily rely on human expert domain knowledge to design proper reward functions. For complex tasks, which are often difficult to describe formally, these RL methods become impractical.

The Imitation Learning (IL, \cite{argall2009survey,abbeel2004apprenticeship}) approach is a powerful and practical alternative to RL. Rather than having a human expert handcrafting a reward function for learning the desired policy, the imitation learning approach only requires the human expert to demonstrate the desired policy, and then the intelligent agent (a.k.a. learner) learns to match the demonstration. Most of existing imitation learning methods fall in the following two categories:

\noindent $\bullet$ Behavioral Cloning (BC, \cite{pomerleau1991efficient}). BC treats the IL problem as supervised learning. Specifically, it learns a policy by fitting a regression model over expert demonstrations, which directly maps states to actions. Unfortunately, BC has a fundamental drawback. Recall that in supervised learning, the distribution of the training data is decoupled from the learned model, whereas in imitation learning, the agent's policy affects what state is queried next. The mismatch between training and testing distributions, also known as covariate shift \citep{ross2010efficient,ross2011reduction}, yields significant compounding errors. Therefore, BC often suffers from poor generalization.

\noindent $\bullet$ Inverse Reinforcement Learning (IRL, \cite{russell1998learning,ng2000algorithms,finn2016guided,levine2012continuous}). IRL treats the IL problem as bi-level optimization. Specifically, it finds a reward function, under which the expert policy is uniquely optimal. Though IRL does not have the error compounding issue, its computation is very inefficient. Many existing IRL methods need to solve a sequence of computationally expensive reinforcement learning problems, due to their bi-level optimization nature. Therefore, they often fail to scale to large and high dimensional environments.

More recently, \cite{ho2016generative} propose a Generative Adversarial Imitation Learning (GAIL) method, which obtains significant performance gains over existing IL methods in imitating complex expert policies in large and high-dimensional environments. GAIL generalizes IRL by formulating the IL problem as minimax optimization, which can be solved by alternating gradient-type algorithms in a more scalable and efficient manner.

Specifically, we consider an infinite horizon Markov Decision Process (MDP), where $\cS$ denotes the state space, $\cA$ denotes the action space, $P$ denotes the Markov transition kernel, $r^*$ denotes the reward function, 
and $p_0$ denotes the distribution of the initial state. We assume that the Markov transition kernel $P$ is fixed and there is an unknown expert policy $\pi^*\colon \cS \rightarrow \cP(\cA)$, where $\cP(\cA)$ denotes the set of distributions over the action space. As can be seen, $\{s_t\}_{t=0}^{T-1}$ essentially forms a Markov chain with the transition kernel induced by $\pi^* $ as $P^{\pi^*} (s,s') = \sum_{a\in \cA}~\pi^*(a\given s) \cdot P(s' \given s, a).$ Given $n$ demonstration trajectories from $\pi^*$ denoted by 
 $\{s_t^{(i)},a_t^{(i)}\}_{t=0}^{T-1}$, where $i=1,...,n$,
 $s_0 \sim p_0$, $a_t \sim \pi^*(\cdot \given s_t) $, and $s_{t+1} \sim P(\cdot \given s_t, a_t) $, GAIL aims to learn $\pi^*$ by solving the following minimax optimization problem,
\begin{align}\label{GAIL}
\textstyle\min_{\pi}\max_{r\in\cR}\EE_{\pi}[r(s,a)]-\EE_{\pi^*_n}[r(s,a)],
\end{align}
where $\EE_{\pi}[r(s,a)]= \lim_{T\rightarrow \infty}\EE[\frac{1}{T} \sum_{t=0}^{T-1} r(s_t,a_t)|\pi]$ denotes the average reward under the policy $\pi$ when the reward function  is $r$, and $\EE_{\pi^*_n}[r(s,a)]=\frac{1}{n T}\sum_{i=1}^n\sum_{t=0}^{T-1}[r(s_t^{(i)},a_t^{(i)})]$ denotes the empirical average reward over the demonstration trajectories. As shown in \eqref{GAIL}, GAIL aims to find a policy, which attains an average reward similar to that of the expert policy with respect to any reward belonging to the function class $\cR$.

For large and high-dimensional imitation learning problems, we often encounter infinitely many states. To ease computation, we need to consider function approximations. Specifically, suppose that for every $s\in\cS$ and $a\in\cA$, there are feature vectors $\psi_s\in\RR^{d_\cS}$ and $\psi_a\in\RR^{d_\cA}$ associated with $a$ and $s$, respectively. Then we can approximate the policy and reward as
\begin{align*}
\pi(\cdot|s) = \tilde{\pi}_\omega(\psi_s)\quad\textrm{and}\quad r(s,a) = \tilde{r}_\theta(\psi_s,\psi_a), 
\end{align*}
where $\tilde{\pi}$ and $\tilde{r}$ belong to certain function classes (e.g. reproducing kernel Hilbert space or deep neural networks, \cite{ormoneit2002kernel,lecun2015deep}) associated with parameters $\omega$ and $\theta$, respectively. Accordingly, we can optimize \eqref{GAIL} with respect to the parameters $\omega$ and $\theta$ by scalable alternating gradient-type algorithms.

Although GAIL has achieved significant empirical progresses, its theoretical properties are still largely unknown. There are three major difficulties when analyzing GAIL: 1). There exists temporal dependency in the demonstration trajectories/data due to their sequential nature \citep{howard1960dynamic,puterman2014markov,abounadi2001learning}; 2). GAIL is formulated as a minimax optimization problem. Most of existing learning theories, however, focus on empirical risk minimization problems, and therefore are not readily applicable \citep{vapnik2013nature,mohri2018foundations,anthony2009neural}; 3). The minimax optimization problem in \eqref{GAIL} does not have a convex-concave structure, and therefore existing theories in convex optimization literature cannot be applied for analyzing the alternating stochastic gradient-type algorithms \citep{willem1997minimax,ben1998robust,murray1980projected,chambolle2011first,chen2014optimal}. Some recent results suggest to use stage-wise stochastic gradient-type algorithms \citep{rafique2018non,dai2017sbeed}. More specifically, at every iteration, they need to solve the inner maximization problem up to a high precision, and then apply stochastic gradient update to the outer minimization problem. Such algorithms, however, are rarely used by practitioners, as they are inefficient in practice (due to the computationally intensive inner maximization).

To bridge such a gap between practice and theory, we establish the generalization properties of GAIL and the convergence properties of the alternating mini-batch stochastic gradient algorithm for solving \eqref{GAIL}.
Specifically, our contributions can be summarized as follows:

\noindent $\bullet$ We formally define the generalization of GAIL under the ``so-called" $\cR$-reward distance, and then show that the generalization of GAIL can be guaranteed under reward distance as long as the class of the reward functions is properly controlled; 

\noindent $\bullet$ We provide sufficient conditions, under which an alternating mini-batch stochastic gradient algorithm can efficiently solve the minimax optimization in \eqref{GAIL}, and attains sublinear convergence to a stationary solution.

To the best of our knowledge, these are the first results on statistical and computational theories of imitation learning with reward/policy function approximations. 

Our work is related to \cite{syed2008apprenticeship,cai2019global}. \cite{syed2008apprenticeship} study the generalization and computational properties of apprenticeship learning. Since they assume that the state space of the underlying Markov decision process is finite, they do not consider any reward/policy function approximations; \cite{cai2019global} study the computational properties of imitation learning under a simple control setting. Their assumption on linear policy and quadratic reward is very restrictive, and does not hold for many real applications.

\noindent {\bf Notation.} Given a vector $x = (x_1,...,x_d)^\top \in \RR^d$, we define $\norm{x}_2^2=\sum_{j=1}^d x_j^2$. Given a function $f : \RR^d \mapsto \RR$, we denote its $\ell_\infty$ norm as $\norm{f}_\infty = \max_x |f(x)|$.


\section{Generalization of GAIL} \label{Generalization}

To analyze the generalization properties of GAIL, we first assume that we can access an infinite number of the expert's demonstration trajectories (underlying population), and that the reward function is chosen optimally within some large class of functions. This allows us to remove the maximum operation from \eqref{GAIL}, which leads to an interpretation of how and in what sense the resulting policy is close to the true expert policy. Before we proceed, we first introduce some preliminaries.

\begin{definition}[Stationary Distribution]
Note that any policy $\pi$ induces a  Markov chain on $\cS \times \cA$. The transition kernel is given by
\begin{align*}
P_{\pi}  (s',a'\given s,a)  = \pi (a'\given s')\cdot  P(s'\given s,a )  , \quad \forall (s,a) , (s', a') \in \cS\times \cA.
\end{align*}
When such a Markov chain is aperiodic and  recurrent, we denote its stationary distribution as $\rho_{\pi}$.
\end{definition}
Note that a policy $\pi$ is uniquely determined by its stationary distribution $\rho_{\pi}$ in the sense that 
\begin{align*}
\textstyle\pi(a\given s) = \rho_{\pi} (s,a)/\sum_{a\in \cA} \rho_{\pi}(s,a).
\end{align*}
Then we can write the expected average reward of $r(s,a)$ under the policy $\pi$ as
\begin{align*}
\textstyle\EE_{\pi}[r(s,a)]= \lim_{T \rightarrow \infty} \EE \big[\frac{1}{T} \sum_{t=0}^{T-1 } r(s_t,a_t)\big|\pi \big] = \EE _{\rho_{\pi} } \big [ r(s,a)\big] = \sum_{(s,a)\in\cS\times\cA} \rho_{\pi} (s,a) \cdot r(s,a) .
\end{align*}

%
We further define the $\mathcal{\cR}$-distance between two policies $\pi$ and $\pi'$ as follows.
\begin{definition}\label{R-distance}
Let $\cR$ denote a class of symmetric reward functions from $\cS\times\cA$ to $\RR$, i.e., if $r\in\cR$, then $-r\in\cR$. Given two policy $\pi'$ and $\pi$, the $\cR$-distance for GAIL is defined as
\begin{align*}
\textstyle d_{\cR}(\pi,\pi')=\sup_{r\in \cR}[\EE_{\pi}r(s,a)-\EE_{\pi'}r(s,a)].
\end{align*}
\end{definition}

\vspace{-0.1in}

The $\mathcal{\cR}$-distance over policies for Markov decision processes is essentially an Integral Probability Metric (IPM) over stationary distributions \citep{muller1997integral}. For different choices of $\cR$, we have various $\cR$-distances. For example,  we can choose $\cR$ as the class of all $1$-Lipschitz continuous functions, which yields that $d_{\cR} (\pi, \pi')$ is the Wasserstein distance between $\rho_{\pi}$ and $\rho_{\pi'}$ \citep{vallender1974calculation}. For computational convenience, GAIL and its variants usually choose $\cR$ as a class of functions from some reproducing kernel Hilbert space, or a class of neural network functions.
\begin{definition}
Given $n$ demonstration trajectories from time $0$ to $T-1$ obtained by an expert policy $\pi^*$ denoted by $(s_t^{(i)},a_t^{(i)})_{t=0}^{T-1}$, where $i=1,...,n$, a policy $\hat{\pi}$ learned by GAIL generalizes under the $\cR$-distance  $d_\cR(\cdot, \cdot)$ with generalization error $\epsilon$, if with high probability, we have
\begin{align*}
|d_\cR(\pi^*_n,\hat{\pi}) - d_\cR(\pi^*,\hat{\pi})|\leq\epsilon,
\end{align*}
where $d_\cR(\pi^*_n,\hat{\pi})$ is the empirical $\cR$-distance between $\pi^*$ and $\hat{\pi}$ defined as
\begin{align*}
\textstyle d_\cR(\pi^*_n,\hat{\pi}) = \sup_{r\in \cR}[\EE_{\pi^*_n}r(s,a)-\EE_{\hat{\pi}}r(s,a)]~~\textrm{with}~~\EE_{\pi^*_n}[r(s,a)]=\frac{1}{n T }\sum_{i=1}^n\sum_{t=0}^{T-1}[r(s_t^{(i)},a_t^{(i)})].
\end{align*}
\end{definition}
The generalization of GAIL implies that the $\cR$-distance between the expert policy $\pi^*$ and the learned policy $\hat{\pi}$ is close to the empirical $\cR$-distance between them. Our analysis aims to prove the former distance to be small, whereas the latter one is what we attempts to minimize in practice. 

We then introduce the assumptions on the underlying Markov decision process and expert policy.
\begin{assumption}\label{state-spacebounded}
Under the expert policy $\pi^*$, $(s_t, a_t)_{t=0}^{T-1}$ forms a stationary and exponentially $\beta$-mixing Markov chain, i.e.,
\begin{align*}
\textstyle\beta(k) = \sup_{n} \mathop{\EE}_{B \in \sigma_{0}^{n}} \sup_{A \in \sigma_{n+k}^{\infty}} |\PP(A|B) - \PP(A)| \leq \beta_0 \exp(-\beta_1 k^\alpha),
\end{align*}
where $\beta_0, \beta_1, \alpha$ are positive constants, and $\sigma_{i}^j$ is the $\sigma$-algebra generated by $(s_t, a_t)_{t=i}^j$ for $i \leq j$.

Moreover, for every $s\in\cS$ and $a\in\cA$, there are feature vectors $\psi_s\in\RR^{d_\cS}$ and $\psi_a\in\RR^{d_\cA}$ associated with $a$ and $s$, respectively, and $\psi_s$ and $\psi_a$ are uniformly bounded, where
\begin{align*}
\|\psi_s\|_{2} \le 1\quad\textrm{and}\quad\| \psi_a\|_{2} \le 1,\quad\forall s\in\cS\quad\textrm{and}\quad\forall a\in\cA.
\end{align*}
\end{assumption}
Assumption \ref{state-spacebounded} requires the underlying MDP to be ergodic \citep{levin2017markov}, which is a commonly studied assumption in exiting reinforcement learning literature on maximizing the expected average reward \citep{strehl2005theoretical,li2011knows,brafman2002r,kearns2002near}. The feature vectors associated with $a$ and $s$ allow us to apply function approximations to parameterize the reward and policy functions. Accordingly, we write the reward function as $r(s,a) = \tilde{r}(\psi_s,\psi_a)$, which is assumed to be bounded.
\begin{assumption}\label{assump_rbound}
The reward function class is uniformly bounded, i.e., $\norm{r}_\infty \leq B_r$ for any $r \in \cR$.
\end{assumption}

Now we proceed with our main result on generalization properties of GAIL. We use $\cN\left(\cR, \epsilon, \| \cdot \|_{\infty}\right)$ to denote the covering number of the function class $\cR$ under the $\ell_\infty$ distance $\| \cdot \|_{\infty}$. 
\begin{theorem}[Main Result]\label{generalization}
Suppose Assumptions \ref{state-spacebounded}-\ref{assump_rbound} hold, and the policy learned by GAIL satisfies
\begin{align*}
\textstyle d_{\cR}(\pi_n^*,\hat{\pi})-\inf_{\pi}{d_{\cR}(\pi_n^*,\pi)}<\epsilon,
\end{align*}
where the infimum is taken over all possible learned policies. Then with probability at least $1-\delta$ over the joint distribution of $\{(a_t^{(i)},s_t^{(i)})_{t=0}^{T-1}\}_{i=1}^n$, we have
\begin{align}
\textstyle d_{\mathcal{R}}(\pi^*,\hat{\pi})-\inf\limits_{\pi}{d_{\mathcal{R}}(\pi^{*},\pi)}\nonumber
\leq O\left( \cfrac{B_r }{\sqrt{nT/\zeta}}\sqrt{\log \cN\Big(\cR,\sqrt{\frac{\zeta}{nT}}, \|\cdot\|_{\infty}\Big)}   + B_r\sqrt{\cfrac{\log(1/\delta)}{nT/\zeta}}\right) +\epsilon ,
\end{align}
where $\zeta=(\beta_1^{-1}\log\frac{\beta_0T}{\delta})^{\frac1\alpha}$.
\end{theorem}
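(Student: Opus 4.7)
The plan is to reduce the stated excess $\cR$-distance to a uniform concentration bound for $\sup_{r\in\cR}|\EE_{\pi^*_n} r - \EE_{\pi^*} r|$ over the demonstration data, and then handle the temporal dependence within each trajectory by a blocking argument for $\beta$-mixing sequences combined with a covering-number union bound over $\cR$.

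First, I would perform the decomposition
\begin{align*}
d_\cR(\pi^*,\hat\pi) - \inf_\pi d_\cR(\pi^*,\pi)
&= \left[d_\cR(\pi^*,\hat\pi) - d_\cR(\pi^*_n,\hat\pi)\right] + \left[d_\cR(\pi^*_n,\hat\pi) - \inf_\pi d_\cR(\pi^*_n,\pi)\right]\\
&\quad + \left[\inf_\pi d_\cR(\pi^*_n,\pi) - \inf_\pi d_\cR(\pi^*,\pi)\right].
\end{align*}
The middle bracket is at most $\epsilon$ by the near-optimality hypothesis on $\hat\pi$. Using the symmetry of $\cR$ together with the elementary inequality $|\sup f - \sup g|\le\sup|f-g|$, the first and third brackets are both dominated by $\Delta_n := \sup_{r\in\cR}|\EE_{\pi^*_n} r - \EE_{\pi^*} r|$, so everything reduces to a high-probability bound on $\Delta_n$.

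Second, to treat the dependence inside each trajectory I would invoke the blocking technique of Yu, based on Berbee's coupling lemma. The $n$ trajectories are mutually independent and each is $\beta$-mixing with rate $\beta(k)\le \beta_0\exp(-\beta_1 k^\alpha)$ by Assumption~\ref{state-spacebounded}. For every trajectory, partition the $T$ indices into consecutive blocks of width $\zeta$, retain every other block, and couple the retained blocks with genuinely independent copies drawn from their marginal distributions; the coupling fails on an event of probability at most $n(T/\zeta)\beta(\zeta)\le n(T/\zeta)\beta_0\exp(-\beta_1\zeta^\alpha)$. Choosing $\zeta = (\beta_1^{-1}\log(\beta_0 T/\delta))^{1/\alpha}$ renders this at most $\delta/2$, after which one may proceed as if working with $m := nT/\zeta$ independent ``effective samples.''

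Third, on the coupled independent blocks Hoeffding's inequality applied to a fixed $r\in\cR$ gives a deviation of order $B_r\sqrt{\log(1/\delta)/m}$: each block contributes a sum of range $\zeta B_r$, and dividing the total by $nT$ yields the scaling $1/\sqrt{nT/\zeta}$. To upgrade this pointwise bound to a uniform bound over $\cR$, I would take an $\eta$-net of $\cR$ in $\|\cdot\|_\infty$ at scale $\eta = \sqrt{\zeta/(nT)}$, apply Hoeffding with a union bound over the $\cN(\cR,\eta,\|\cdot\|_\infty)$ net centers, and absorb the in-net approximation error (which is $O(\eta)$ and matches the target rate) into the leading term. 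Combining these pieces yields
\begin{align*}
\Delta_n \le O\!\left(\frac{B_r}{\sqrt{nT/\zeta}}\sqrt{\log \cN\!\left(\cR,\sqrt{\tfrac{\zeta}{nT}},\|\cdot\|_\infty\right)} + B_r\sqrt{\frac{\log(1/\delta)}{nT/\zeta}}\right)
\end{align*}
with probability at least $1-\delta$, and substituting back into the decomposition delivers the theorem.

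I expect the main obstacle to lie in the second step: executing the blocking argument cleanly and tracking constants so that (i) the range of each block-sum is $\zeta B_r$ rather than $B_r$, producing the effective sample size $m = nT/\zeta$ in precisely the right places after dividing by the original $nT$, and (ii) the coupling error from $\beta$-mixing is absorbed into the $\delta$-budget by the single logarithmic choice of $\zeta$ above, uniformly across the $n$ trajectories. The remaining covering and union-bound steps are routine empirical-process calculations.
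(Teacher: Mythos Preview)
Your proposal is correct and shares the paper's skeleton: reduce the excess $\cR$-distance to the uniform deviation $\sup_{r\in\cR}\big|\EE_{\pi^*_n}r - \EE_{\pi^*}r\big|$, then handle the $\beta$-mixing dependence by blocking. The difference lies in the concentration step after blocking. The paper applies McDiarmid's inequality to the supremum functional on the independent blocks, bounds its expectation via symmetrization and Rademacher complexity, and then uses Dudley's entropy integral to pass to covering numbers; you instead apply Hoeffding pointwise and union-bound over a single $\|\cdot\|_\infty$-net at scale $\sqrt{\zeta/(nT)}$. Both routes deliver the stated $O(\cdot)$ bound. Yours is more elementary and direct; the paper's is the standard empirical-process pipeline and would extend more naturally to chaining-type refinements if sharper dependence on the covering numbers were desired. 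One small point your outline elides (and which is easy to add) is that after retaining every other block you must still relate the full empirical average over all $nT$ points to the retained half; the paper does this by splitting into odd- and even-indexed block sequences, using convexity of the supremum, and union-bounding over the two halves, at the cost of a constant factor.
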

Theorem \ref{generalization} implies that the policy $\hat{\pi}$ learned by GAIL generalizes as long as the complexity of the function class $\cR$ is well controlled. To the best of our knowledge, this is the first result on the generalization of imitation learning with function approximations. As the proof of Theorem \ref{generalization} is involved, we only present a sketch due to space limit. More details are provided in Appendix \ref{pf:thm1}.
\begin{proof}[\bf Proof Sketch]
Our analysis relies on characterizing the concentration property of the empirical average reward under the expert policy. For notational simplicity, we define
\begin{align*}
\textstyle \phi= \EE_{\pi^{*}}r(s,a) -\frac{1}{nT}\sum_{i=1}^n\sum_{t=0}^{T-1} r(s_t^{(i)},a_t^{(i)}).
\end{align*}
The key challenge comes from the fact that $(s_t^{(i)},a_t^{(i)})$'s are dependent. To handle such a dependency, we adopt the independent block technique from \citet{yu1994rates}. Specifically, we partition every trajectory into disjoint blocks (where the block size is of the order $O((\log(T)+\log(1/\delta))^{1/\alpha})$, and construct two separable trajectories: One contains all blocks with odd indices (denoted by $\cB_{\rm odd}$), and the other contains all those with even indices (denoted by $\cB_{\rm even}$). We define
\begin{align*}
\phi_1&\textstyle= \EE_{\pi^{*}}r(s,a) -\frac{2}{nT}\sum_{i=1}^n\sum_{(s_t^{(i)}, a_t^{(i)})\in\cB_{\rm odd}} r(s_t^{(i)},a_t^{(i)}),
\end{align*}
and analogously for $\phi_2$ with $(s_t^{(i)}, a_t^{(i)})\in\cB_{\rm even}$. Then we have
\begin{align*}
\textstyle\PP(\sup_{r\in\cR}\phi \geq \varepsilon) \le\PP(\sup_{r\in\cR}\frac{\phi_1}2+\sup_{r\in\cR}\frac{\phi_2}2\geq\varepsilon) \leq\PP(\sup_{r\in\cR}\phi_1\geq\varepsilon) + \PP(\sup_{r\in\cR}\phi_2\geq\varepsilon).
\end{align*}
We consider a block-wise independent counterpart of $\phi_1$ denoted by $\tilde{\phi}_1$, where each block is sampled independently from the same Markov chain as $\phi_1$, i.e., $\tilde{\phi}_1$ has independent blocks of samples from the same exponentially $\beta$-mixing Markov chain . Accordingly, we denote $\tilde{\phi}_1$ as
\begin{align*}
 \tilde{\phi}_1&=\textstyle \EE_{\pi^{*}}r(s,a) -\frac{2}{nT}\sum_{i=1}^n\sum_{(s_t^{(i)},a_t^{(i)})\in\tilde{\cB}_{\rm odd}} r(s_t^{(i)},a_t^{(i)}),
\end{align*}
where $\tilde{\cB}_{\rm odd}$ denotes i.i.d. blocks of samples. Now we bound the difference between $\phi_1$ and $\tilde{\phi}_1$ by
\begin{align*}
\PP(\sup_{r\in\cR}\phi_1-\EE[\sup_{r \in \cR}\tilde{\phi}_1]\geq\varepsilon - \EE[\sup_{r \in \cR} \tilde{\phi}_1])&\leq\PP(\sup_{r\in\cR}\tilde{\phi}_1-\EE[\sup_{r \in \cR}\tilde{\phi}_1]\geq\varepsilon - \EE[\sup_{r \in \cR}\tilde{\phi}_1]) \\
& \quad \quad + C\beta T/(\log(T)+\log(1/\delta))^{1/\alpha},
\end{align*}
where $C$ is a constant, and $\beta$ is the mixing coefficient, and $\PP(\sup_{r\in\cR}\tilde{\phi}_1-\EE[\sup_{r \in \cR}\tilde{\phi}_1]\geq\varepsilon - \EE[\sup_{r \in \cR}\tilde{\phi}_1])$ can be bounded using the empirical process technique for independent random variables. The details of the above inequality can be found in Corollary \ref{apply-yu} in Appendix \ref{pf:thm1}, where the proof technique is adapted from Lemma 1 in \cite{mohri2009rademacher}. Let $\tilde{\phi}_2$ be defined analogously as $\tilde{\phi}_1$. With a similar argument further applied to $\phi_2$ and $\tilde{\phi}_2$, we obtain
\begin{align*}
\PP(\sup_{r\in\cR}\phi \geq \varepsilon)\leq 2\PP(\sup_{r\in\cR}\tilde{\phi}_1-\EE[\sup_{r \in \cR}\tilde{\phi}_1]\geq\varepsilon - \EE[\sup_{r \in \cR}\tilde{\phi}_1])+2C\beta T/(\log(T)+\log(1/\delta))^{1/\alpha}.
\end{align*}
The rest of our analysis follows the PAC-learning framework using Rademacher complexity and is omitted \citep{mohri2018foundations}. We complete the proof sketch.
\end{proof}

\noindent{\bf Example 1: Reproducing Kernel Reward Function}. One popular option to parameterize the reward by functions is the reproducing kernel Hilbert space (RKHS, \cite{kim2018imitation,li2018learning}). There have been several implementations of RKHS, and we consider the feature mapping approach. Specifically, we consider $g:\RR^{d_\cS}\times\RR^{d_\cA}\rightarrow \RR^{q}$, and the reward can be written as
\begin{align*}
r(s,a) = \tilde{r}_{\theta}(\psi_s,\psi_a) = \theta^\top g(\psi_s,\psi_a),
\end{align*}
where $\theta\in\RR^{q}$. We require $g$ to be Lipschitz continuous with respect to $(\psi_a,\psi_s)$.
\begin{assumption}\label{g-Lipschitz}
The feature mapping $g$ satisfies $g(0,0) = 0$, and there exists a constant $\rho_g$ such that for any $\psi_a$, $\psi_a'$, $\psi_s$ and $\psi_s'$, we have
\begin{align*}
\|g(\psi_s, \psi_a) - g(\psi_s', \psi_a') \|_{2}^2 &\le \rho_g\sqrt{\| \psi_s - \psi_s'\|_2^2+\| \psi_a - \psi_a' \|_2^2}.
\end{align*}
\end{assumption}
Assumption \ref{g-Lipschitz} is mild and satisfied by popular feature mappings, e.g., random Fourier feature mapping\footnote{More precisely, Assumption \ref{g-Lipschitz} actually holds with overwhelming probability over the distribution of the random mapping.} \citep{rahimi2008random,bach2017equivalence}. The next corollary presents the generalization bound of GAIL using feature mapping.
\begin{corollary}\label{RF-gen}
Suppose $\| \theta \|_2 \le B_{\theta}$. For large enough $n$ and $T$, with probability at least $1-\delta$ over the joint distribution of $\{(a^{(i)}_t,s^{(i)}_t)_{t=0}^{T-1}\}_{i=1}^n$, we have 
\begin{align*}
d_{\mathcal{R}}(\pi^*,\hat{\pi})-\inf_{\pi}{d_{\mathcal{R}}(\pi^{*},\pi)} \le O\left( \frac{\rho_g B_\theta }{\sqrt{nT/\zeta}}\sqrt{q \log\big(\rho_g B_{\theta} \sqrt{nT/\zeta}\big)} + \rho_g B_{\theta}\sqrt{\frac{\log(1/\delta)}{nT/\zeta}}\right) +\epsilon.
\end{align*}
\end{corollary}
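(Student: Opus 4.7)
The plan is to specialize Theorem~\ref{generalization} to the reward class $\cR = \{\tilde r_\theta(s,a) = \theta^\top g(\psi_s,\psi_a) : \|\theta\|_2 \le B_\theta\}$. Since the generalization bound depends on $\cR$ only through the uniform bound $B_r$ of Assumption~\ref{assump_rbound} and the covering number $\cN(\cR,\epsilon,\|\cdot\|_\infty)$, the proof reduces to supplying concrete estimates for these two quantities in terms of $q$, $\rho_g$, and $B_\theta$, and then substituting into Theorem~\ref{generalization}.

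First I would bound $\|g(\psi_s,\psi_a)\|_2$ uniformly. Applying Assumption~\ref{g-Lipschitz} with $(\psi_s',\psi_a')=(0,0)$ and using $g(0,0)=0$ together with the feature bounds $\|\psi_s\|_2,\|\psi_a\|_2\le 1$ from Assumption~\ref{state-spacebounded} gives $\sup_{(s,a)}\|g(\psi_s,\psi_a)\|_2 \le C_0\rho_g$ for some absolute constant $C_0$. Cauchy--Schwarz then yields $|\tilde r_\theta(s,a)| \le \|\theta\|_2\,\|g(\psi_s,\psi_a)\|_2 \le C_0\rho_g B_\theta$, so Assumption~\ref{assump_rbound} holds with $B_r = O(\rho_g B_\theta)$.

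Next I would convert an $\ell_2$-cover of the parameter ball into an $\ell_\infty$-cover of $\cR$. For any $\theta,\theta'\in\RR^q$,
\begin{align*}
\|\tilde r_\theta - \tilde r_{\theta'}\|_\infty \;=\; \sup_{(s,a)}|(\theta-\theta')^\top g(\psi_s,\psi_a)| \;\le\; C_0\rho_g\,\|\theta-\theta'\|_2.
\end{align*}
A standard volumetric argument produces an $(\epsilon/(C_0\rho_g))$-net of $\{\theta\in\RR^q:\|\theta\|_2\le B_\theta\}$ of cardinality at most $(1+2C_0\rho_g B_\theta/\epsilon)^q$, so
\begin{align*}
\log\cN(\cR,\epsilon,\|\cdot\|_\infty) \;\le\; q\,\log\!\big(1+2C_0\rho_g B_\theta/\epsilon\big).
\end{align*}
Setting $\epsilon=\sqrt{\zeta/(nT)}$ and absorbing the absolute constant gives $\log\cN \le q\log(C\rho_g B_\theta\sqrt{nT/\zeta})$ once $n$ and $T$ are large enough for the additive $1$ inside the logarithm to be negligible; this is the ``large enough $n$ and $T$'' clause in the corollary.

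Plugging $B_r=O(\rho_g B_\theta)$ and this covering estimate into Theorem~\ref{generalization} reproduces the stated rate directly, since the $B_r$ prefactor becomes $\rho_g B_\theta$ and the $\sqrt{\log\cN}$ term becomes $\sqrt{q\log(\rho_g B_\theta\sqrt{nT/\zeta})}$. I do not see a genuinely hard step: this corollary is a mechanical instantiation of the main theorem. The only two places that merit care are (i) using $g(0,0)=0$ to promote the Lipschitz property of Assumption~\ref{g-Lipschitz} into a uniform bound on $\|g(\psi_s,\psi_a)\|_2$, which feeds both $B_r$ and the $\ell_2$-to-$\ell_\infty$ conversion factor, and (ii) tracking $\rho_g$ through that conversion so that the final logarithmic factor scales as $\log(\rho_g B_\theta\sqrt{nT/\zeta})$ and not some weaker function of the problem parameters.
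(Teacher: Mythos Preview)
Your proposal is correct and follows essentially the same route as the paper: bound $\|g(\psi_s,\psi_a)\|_2$ via Assumption~\ref{g-Lipschitz} and $g(0,0)=0$ (the paper's constant is $C_0=\sqrt{2}$), use Cauchy--Schwarz to get $B_r=O(\rho_g B_\theta)$ and the $\ell_\infty$-Lipschitz bound $\|\tilde r_\theta-\tilde r_{\theta'}\|_\infty\le \sqrt{2}\rho_g\|\theta-\theta'\|_2$, apply the standard volume-ratio covering of the $\ell_2$-ball in $\RR^q$, and substitute into Theorem~\ref{generalization}. There is no meaningful difference in strategy or in the identification of the two places requiring care.
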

Corollary \ref{RF-gen} indicates that with respect to a class of properly normalized reproducing kernel reward functions, GAIL generalizes in terms of the $\cR$-distance.

\noindent{\bf Example 2: Neural Network Reward Function}. Another popular option to parameterize the reward function is to use neural networks. Specifically, let $\sigma(v) = [\max\{v_1,0\},...,\max\{v_d,0\}]^\top$ denote the ReLU activation for $v\in\RR^{d}$. We consider a $D$-layer feedforward neural network with ReLU activation as follows,
\begin{align*}
r(s,a) = \tilde{r}_{\cW}(\psi_s,\psi_a) = W_D^\top\sigma(W_{D-1}\sigma(...\sigma(W_{1}[\psi_{a}^\top,\psi_s^\top]^\top))),
\end{align*}
where $\cW=\{W_k ~|~W_{k}\in\RR^{d_{k-1}\times d_k},~k=1,...,D-1,~W_D\in\RR^{d_{D-1}}\}$ and $d_0= d_{\cA}+d_{\cS}$. The next corollary presents the generalization bound of GAIL using neural networks.
\begin{corollary}\label{NN-gen}
Suppose $\norm{W_i}_2\leq 1$, where $i=1,...,D$. For large enough $n$ and $T$, with probability at least $1-\delta$ over the joint distribution of $\{(a_t^{(i)},s_t^{(i)})_{t=0}^{T-1}\}_{i=1}^n$, we have 
\begin{align*}
&d_{\mathcal{R}}(\pi^*,\hat{\pi})-\inf\limits_{\pi}{d_{\mathcal{R}}(\pi^{*},\pi)}\nonumber
\le  O\Bigg( \cfrac{1}{\sqrt{nT/\zeta}}\sqrt{d^2D\log\left(D\sqrt{dnT/\zeta}\right)}   + \sqrt{\cfrac{\log(1/\delta)}{nT/\zeta}}\Bigg) +\epsilon.
\end{align*}
\end{corollary}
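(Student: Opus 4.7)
The plan is to instantiate Theorem \ref{generalization} with the neural network reward class $\cR = \{\tilde{r}_{\cW} : \|W_i\|_2 \leq 1,\ i=1,\ldots,D\}$. Two ingredients are needed: a uniform $\ell_\infty$ bound $B_r$ on $\cR$ (to verify Assumption \ref{assump_rbound}), and a covering-number bound of the form $\log \cN(\cR, \epsilon, \|\cdot\|_\infty) = O(d^2 D \log(Dd/\epsilon))$. Both follow from the spectral-norm constraint $\|W_i\|_2 \leq 1$.

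For boundedness, I would exploit that ReLU is $1$-Lipschitz with $\sigma(0) = 0$, and Assumption \ref{state-spacebounded} gives $\|[\psi_a^\top,\psi_s^\top]^\top\|_2 \leq \sqrt{2}$. A straightforward induction on the layer index then yields $\|\sigma(W_k \sigma(\cdots \sigma(W_1 [\psi_a^\top,\psi_s^\top]^\top)))\|_2 \leq \sqrt{2}\prod_{i=1}^k \|W_i\|_2 \leq \sqrt{2}$; taking the inner product with $W_D$ and using $\|W_D\|_2 \leq 1$ gives $\|\tilde{r}_{\cW}\|_\infty \leq \sqrt{2}$, so I would set $B_r = \sqrt{2}$.

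For the covering number, I would use a layer-wise perturbation argument. If $\|W_i - \widetilde{W}_i\|_2 \leq \delta$ for every $i$, then by adding and subtracting one layer at a time and applying the $1$-Lipschitzness of $\sigma$ recursively, together with the uniform $O(1)$ bound on intermediate activations established above, one obtains $\|\tilde{r}_{\cW} - \tilde{r}_{\widetilde{\cW}}\|_\infty \leq C D \delta$ for an absolute constant $C$. Each $W_i \in \RR^{d_{k-1}\times d_k}$ with $\|W_i\|_2 \leq 1$ satisfies $\|W_i\|_F \leq \sqrt{d}$ under the width bound $d_{k-1},d_k \leq d$, so standard volume arguments yield a $\delta$-cover in Frobenius (hence spectral) norm of size at most $(C'\sqrt{d}/\delta)^{d^2}$. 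Taking a product of these covers across the $D$ layers and setting $\delta = \epsilon/(CD)$ gives $\log \cN(\cR, \epsilon, \|\cdot\|_\infty) \leq C'' D d^2 \log(D d/\epsilon)$.

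Substituting $B_r = O(1)$ and the covering-number bound evaluated at scale $\epsilon = \sqrt{\zeta/(nT)}$ into Theorem \ref{generalization} will reproduce the stated rate, up to the harmless replacement of $\log(Dd\sqrt{nT/\zeta})$ by $\log(D\sqrt{d}\sqrt{nT/\zeta})$ absorbed into the $O(\cdot)$. The hard part will be the covering-number step: for generic deep networks the network Lipschitz constant is $\prod_i \|W_i\|_2$ and a perturbation analysis would blow up exponentially in $D$, which would dominate the final bound. The spectral-norm constraint $\|W_i\|_2 \leq 1$ is exactly what keeps the end-to-end Lipschitz constant bounded, so that layer-wise perturbations accumulate only linearly in depth and the parameter count $d^2 D$ appears inside the square root in the final bound.
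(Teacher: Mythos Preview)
Your proposal is correct and follows essentially the same route as the paper: bound the intermediate activations by $\sqrt{2}$ via the spectral-norm constraint to obtain $B_r=\sqrt{2}$, derive a layer-wise Lipschitz estimate $\|\tilde r_{\cW}-\tilde r_{\widetilde\cW}\|_\infty\le \sqrt{2}\sum_{i=1}^D\|W_i-\widetilde W_i\|_2$, cover each weight matrix by the standard volume argument (using $\|W_i\|_F\le\sqrt{d}$), take the product over layers, and plug the resulting $\log\cN(\cR,\epsilon,\|\cdot\|_\infty)\le d^2D\log(1+\sqrt{2}D\sqrt{d}/\epsilon)$ into Theorem~\ref{generalization}. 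The only cosmetic difference is that the paper states the Lipschitz bound as a sum $\sum_i\|W_i-W_i'\|_2$ rather than $CD\delta$, but this is the same estimate.
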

Corollary \ref{NN-gen} indicates that with respect to a class of properly normalized neural network reward functions, GAIL generalizes in terms of the $\cR$-distance.

\begin{remark}[{\bf The Tradeoff between Generalization and Representation of GAIL}]
As can be seen from Definition \ref{R-distance}, the $\cR$-distances are essentially differentiating two policies. For the Wasserstein-type distance, i.e., $\cR$ contains all $1$-Lipschitz continuous functions, if $d_{\cR}(\pi,\pi')$ is small, it is safe to conclude that two policies $\pi$ and $\pi'$ are nearly the same almost everywhere. However, when we choose $\cR$ to be the reproducing kernel Hilbert space or the class of neural networks with relatively small complexity, $d_{\cR}(\pi,\pi')$ can be small even if $\pi$ and $\pi'$ are not very close. Therefore, we need to choose a sufficiently diverse class of reward functions to ensure that we recover the expert policy.

As Theorem \ref{generalization} suggests, however, that we need to control the complexity of the function class $\cR$ to guarantee the generalization. This implies that when parameterizing the reward function, we need to carefully choose the function class to attain the optimal tradeoff between generalization and representation of GAIL.
\end{remark}


\section{Computation of GAIL}\label{sec:computation}

To investigate the computational properties of GAIL, we parameterize the reward by functions belonging to some reproducing kernel Hilbert space. The implementation is based on feature mapping, as mentioned in the previous section. The policy can be parameterized by functions belonging to some reproducing kernel Hilbert space or some class of deep neural networks with parameter $\omega$. Specifically, we denote $\pi(a|s) = \tilde{\pi}_\omega(\psi_s),$
where $\tilde{\pi}_\omega(\psi_s)$ is the parametrized policy mapping from $\RR^{d_{\cS}}$ to a simplex in $\RR^d_{\cA}$ with $|\cA| = d$. For computational convenience, we consider solving a slightly modified minimax optimization problem:
\begin{align}\label{GAIL-opt}
\min_{\omega}\max_{\norm{\theta}_2\leq\kappa}\EE_{\tilde{\pi}_{\omega}}[\tilde{r}_{\theta}(s,a)]-\EE_{\pi^{*}}[\tilde{r}_{\theta}(s,a)] - \lambda H(\tilde{\pi}_{\omega})- \frac{\mu}{2}\norm{\theta}_2^2,
\end{align}
where $\tilde{r}_\theta (s, a) = \theta^\top g(\psi_s, \psi_a)$, $H(\tilde{\pi}_{\omega})$ is some regularizer for the policy (e.g., causal entropy regularizer, \cite{ho2016generative}), and $\lambda>0$ and $\mu>0$ are tuning parameters. Compared with \eqref{GAIL}, the additional regularizers in \eqref{GAIL-opt} can improve the optimization landscape, and help mitigate computational instability in practice.

\subsection{Alternating Minibatch Stochastic Gradient Algorithm}
%
We first apply the alternating mini-batch stochastic gradient algorithm to \eqref{GAIL-opt}. Specifically, we denote the objective function in \eqref{GAIL-opt} as $F(\omega,\theta)$ for notational simplicity. At the $(t+1)$-th iteration, we take
\begin{align}
\theta^{(t+1)} &\textstyle= \Pi_{\kappa}\big(\theta^{(t)} + \frac{\eta_\theta}{q_{\theta}}\sum_{j\in\cM_{
\theta}^{(t)}}\nabla_{\theta}f_{j}(\omega^{(t)},\theta^{(t)})\big)\quad\textrm{and}\label{update-rule-theta} \\
\omega^{(t+1)} &\textstyle= \omega^{(t)} - \frac{\eta_{\omega}}{q_{\omega}}\sum_{j\in\cM_{\omega}^{(t)}}\nabla_{\omega}\tilde{f}_j(\omega^{(t)},\theta^{(t+1)}),\label{update-rule-omega}
\end{align}
where $\eta_{\theta}$ and $\eta_{\omega}$ are learning rates, the projection $\textstyle\Pi_{\kappa}(v)=\mathds{1}(\norm{v}_2\leq\kappa)\cdot v + \mathds{1}(\norm{v}_2>\kappa)\cdot\kappa\cdot v/\norm{v}_2,$ $\nabla f_j$'s and $\nabla \tilde{f}_j$'s are independent stochastic approximations of $\nabla F$ \citep{sutton2000policy}, and $\cM_\theta^{(t)}$, $\cM_\omega^{(t)}$ are mini-batches with sizes $q_\theta$ and $q_\omega$, respectively. Before we proceed with the convergence analysis, we impose the follow assumptions on the problem.
\begin{assumption}\label{gradient-noise}
There are two positive constants $M_{\omega}$ and $M_{\theta}$ such that for any $\omega$ and $\norm{\theta}_{2}\leq\kappa$,
\begin{align*}
\textrm{Unbiased}:~&\EE\nabla f_j(\omega,\theta)=\EE\nabla \tilde{f}_j(\omega,\theta) = \nabla F(\omega,\theta),\\
\textrm{Bounded}:~&\EE\|\nabla_{\omega} \tilde{f}_j(\omega,\theta) - \nabla_\omega F (\omega, \theta)\|_2^2\leq M_{\omega}\quad\textrm{and}\quad\EE\|\nabla_{\theta} f_j(\omega,\theta) - \nabla_\theta F (\omega, \theta)\|_2^2\leq M_{\theta}.
\end{align*}
\end{assumption}
Assumption \ref{gradient-noise} requires the stochastic gradient to be unbiased with a bounded variance, which is a common assumption in existing optimization literature \citep{nemirovski2009robust,ghadimi2013stochastic,duchi2011adaptive,bottou2010large}.

\begin{assumption}\label{regularity}
    (i) For any $\omega$, there exists some constant $\chi>0$ and $\upsilon \in (0,1)$ such that
    \begin{align*}
        \|(P_{\tilde{\pi}_{\omega}})^t\rho_0  - \rho_{\tilde{\pi}_\omega}\|_{\textrm{TV}} \le \chi \upsilon^t,
    \end{align*}
    where $P_{\tilde{\pi}_{\omega}}(s',a' \given s,a) = \tilde{\pi}_{\omega}(a'|s')P(s' \given s,a)$ is the transition kernel induced by $\tilde{\pi}_{\omega}$, $\rho_0$ is the initial distribution of $(s_0,a_0)$, and $\rho_{\tilde{\pi}_\omega}$ is the stationary distribution induced by $\tilde{\pi}_\omega$.

    (ii) There exist constants $ S_{\tilde{\pi}}, B_{\omega}, L_{\rho}, L_Q>0$ such that for any $\omega, \omega'$, we have
    \begin{align*}
    &\|\nabla_{\omega}\log(\tilde{\pi}_{\omega}(a|s)) - \nabla_{\omega} \log (\tilde{\pi}_{\omega'}(a|s))\|_2 \le S_{\tilde{\pi}} \norm{\omega - \omega'}_2, && \|\nabla_{\omega} \log \tilde{\pi}_{\omega}(a|s)\|_2 \le B_{\omega},\\
    &\|\rho_{\tilde{\pi}_\omega}- \rho_{\tilde{\pi}_\omega'}\|_{\textrm{TV}} \le L_{\rho} \|\omega - \omega'\|_2, && \|Q^{\tilde{\pi}_{\omega}} - Q^{\tilde{\pi}_{\omega'}}\|_{\infty} \le L_{Q} \norm{\omega - \omega'}_2,
    \end{align*}
    where $Q^{\tilde{\pi}_{\omega}} (s,a) = \sum_{t=0}^\infty \EE \left[\tilde{r}(s_t,a_t) - \EE_{\tilde{\pi}_{\omega}} [\tilde{r}] \given s_0=s, a_0=a, \tilde{\pi}_{\omega}\right]$ is the action-value function.

    (iii) There exist constants $B_H$  and $S_H >0$ such that for any $\omega, \omega'$, we have
    \begin{align*}
        H(\tilde{\pi}_{\omega}) \le B_H \quad \textrm{and}\quad \norm{\nabla_\omega H(\tilde{\pi}_\omega) - \nabla_\omega H(\tilde{\pi}_{\omega'})}_{2} \leq S_H \norm{\omega - \omega'}_{2}.
    \end{align*}

\end{assumption}
Note that (i) of Assumption \ref{regularity} requires the Markov Chain to be geometrically mixing. (ii) and (iii) state some commonly used regularity conditions for policies \citep{sutton2000policy, pirotta2015policy}.

We then define $L$-stationary points of $F$. Specifically, we say that $(\omega^*, \theta^*)$ is a stationary point of $F$, if and only if, for any fixed $\alpha > 0$,
\begin{align*}
\nabla_\omega F(\omega^*, \theta^*) = 0 ~~\textrm{and}~~ \theta^* - \Pi_\kappa (\theta^* + \alpha\nabla_\theta F(\omega^*, \theta^*)) = 0.
\end{align*}
The $L$-stationarity is a generalization of the stationary point for unconstrained optimization, and is a necessary condition for optimality. Accordingly, we take $\alpha = 1$ and measure the sub-stationarity of the algorithm at the iteration $N$ by
\begin{align*}
\textstyle J_N = \min_{1\le t \le N}\EE\|\theta^{(t)} - \Pi_\kappa (\theta^{(t)} + \nabla_\theta F(\omega^{(t)}, \theta^{(t)}))\|_2^2 + \EE \|\nabla_\omega F (\omega^{(t)}, \theta^{(t+1)})\|_{2}^2.
\end{align*}
We then state the global convergence of the alternating mini-batch stochastic gradient algorithm.
\begin{theorem}\label{thm:sgdconverge} 
Suppose Assumptions \ref{state-spacebounded}-\ref{regularity} hold. We choose step sizes $\eta_\theta, \eta_\omega$ satisfying
$$ \eta_{\omega} \leq \min\bigg\{\frac{L_{\omega}}{S_{\omega}(8L_{\omega} + 2)}, \frac{1}{2L_{\omega}}\bigg\}, ~~\eta_{\theta} \leq \min\bigg\{\frac1{150\mu}, \frac{7L_{\omega}+1}{150S_{\omega}^2}, \frac{1}{100(2\mu + S_{\omega})}\bigg\},$$ 
and meanwhile $\eta_{\omega}/\eta_{\theta} \leq \mu/(30L_{\omega}+5)$, where $L_\omega =2\sqrt{2} (S_{\tilde{\pi}} + 2B_{\omega}L_{\rho}) \kappa \rho_g \chi/(1-\upsilon) +B_\omega L_Q$, and $S_{\omega} =2\sqrt{2q}\kappa\rho_g \chi B_\omega/(1-\upsilon)$. Given any $\epsilon > 0$, we choose batch sizes $q_\theta =  \tilde{O}(1/\epsilon)$ and $q_\omega = \tilde{O}(1/\epsilon)$. Then we need at most $$N = \eta (C_0+ 4\sqrt{2}\rho_g \kappa + \mu \kappa^2 +2\lambda B_H)\epsilon^{-1}$$ iterations such that $J_N \leq \epsilon$, where $C_0$ depends on the initialization, and $\eta$ depends on $\eta_\omega$ and $\eta_\theta$.
\end{theorem}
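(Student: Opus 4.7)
The plan is to treat this as a nonconvex-strongly concave minimax problem and prove convergence via a Lyapunov (potential) function argument, tailored to handle the MDP-induced biases in the stochastic gradients. The $-\tfrac{\mu}{2}\|\theta\|_2^2$ regularizer makes $F(\omega,\cdot)$ $\mu$-strongly concave on the ball $\{\|\theta\|_2\le\kappa\}$, so the inner maximizer $\theta^*(\omega)=\arg\max_{\|\theta\|_2\le\kappa}F(\omega,\theta)$ is unique and, by a standard envelope/implicit-function argument combined with the Lipschitz bound on $\nabla_\omega\nabla_\theta F$, is $S_\omega/\mu$-Lipschitz in $\omega$. The primal function $P(\omega)=\max_{\|\theta\|_2\le\kappa} F(\omega,\theta)$ then inherits smoothness: it is $L_P$-smooth with $L_P = O(L_\omega + S_\omega^2/\mu)$, which explains the stepsize conditions in the theorem.

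First I will establish the regularity of $F$. Using the policy-gradient identity together with Assumption \ref{regularity}(ii) (bounded score function, Lipschitz score, Lipschitz $\rho_{\tilde\pi_\omega}$ and $Q^{\tilde\pi_\omega}$) and the boundedness $\rho_g$ from Assumption \ref{g-Lipschitz}, I will show $\|\nabla_\omega F(\omega,\theta)-\nabla_\omega F(\omega',\theta)\|_2 \le L_\omega\|\omega-\omega'\|_2$ and $\|\nabla_\omega F(\omega,\theta)-\nabla_\omega F(\omega,\theta')\|_2\le S_\omega\|\theta-\theta'\|_2$. The factor $\chi/(1-\upsilon)$ comes from summing the geometric mixing tail in Assumption \ref{regularity}(i), which is needed when differentiating the expected cumulative reward in the average-reward setting. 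Concavity in $\theta$ is immediate since $F(\omega,\theta)$ is affine in $\theta$ plus $-\tfrac{\mu}{2}\|\theta\|_2^2$.

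Next I will introduce the Lyapunov function
\begin{align*}
\Phi_t \;=\; P(\omega^{(t)}) - P^\star \;+\; c\,\bigl\|\theta^{(t)}-\theta^*(\omega^{(t)})\bigr\|_2^2,
\end{align*}
where $c>0$ is tuned in terms of $\mu, L_\omega, S_\omega$. For the $\theta$-update, I will expand the projected gradient ascent step and use $\mu$-strong concavity to obtain a contraction of the form $\|\theta^{(t+1)}-\theta^*(\omega^{(t)})\|_2^2 \le (1-c_1\eta_\theta\mu)\|\theta^{(t)}-\theta^*(\omega^{(t)})\|_2^2 - c_2\eta_\theta\|\theta^{(t)}-\Pi_\kappa(\theta^{(t)}+\nabla_\theta F)\|_2^2 + O(\eta_\theta^2 M_\theta/q_\theta)$. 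For the $\omega$-update, using $L_P$-smoothness of $P$ and the descent lemma, together with the decomposition $\nabla_\omega F(\omega^{(t)},\theta^{(t+1)}) = \nabla P(\omega^{(t)}) + (\nabla_\omega F(\omega^{(t)},\theta^{(t+1)}) - \nabla_\omega F(\omega^{(t)},\theta^*(\omega^{(t)})))$, I will bound the error term by $S_\omega\|\theta^{(t+1)}-\theta^*(\omega^{(t)})\|_2$. The tracking error $\|\theta^{(t+1)}-\theta^*(\omega^{(t+1)})\|_2$ picks up an extra $\|\theta^*(\omega^{(t+1)})-\theta^*(\omega^{(t)})\|_2\le (S_\omega/\mu)\|\omega^{(t+1)}-\omega^{(t)}\|_2$ drift, which is absorbed using Young's inequality and forces the ratio $\eta_\omega/\eta_\theta\le\mu/(30L_\omega+5)$.

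Combining the two bounds, choosing $c$ so that cross terms are dominated, and taking expectation, I obtain
\begin{align*}
\EE[\Phi_{t+1}] \;\le\; \EE[\Phi_t] \;-\; c_3\bigl(\eta_\omega\EE\|\nabla_\omega F(\omega^{(t)},\theta^{(t+1)})\|_2^2 + \eta_\theta\EE\|\theta^{(t)}-\Pi_\kappa(\theta^{(t)}+\nabla_\theta F)\|_2^2\bigr) + E_t,
\end{align*}
where $E_t = O(\eta_\omega^2 M_\omega/q_\omega + \eta_\theta^2 M_\theta/q_\theta)$. Telescoping from $t=1$ to $N$ and using $\Phi_1 \le C_0 + 4\sqrt{2}\rho_g\kappa + \mu\kappa^2 + 2\lambda B_H$ (from boundedness of $F$ and $H$), I get $J_N \le \epsilon$ after $N = \tilde{O}(1/\epsilon)$ iterations, provided the batch sizes are $q_\theta=\tilde{O}(1/\epsilon)$ and $q_\omega=\tilde{O}(1/\epsilon)$ to make the residual variance term of order $\epsilon$. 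The main obstacle I expect is step one: carefully propagating the geometric-mixing constant $\chi/(1-\upsilon)$ through the policy-gradient formula to certify the precise forms of $L_\omega$ and $S_\omega$ stated in the theorem, since the average-reward policy gradient involves an infinite sum whose Lipschitz control is delicate. Once that smoothness is in hand, the Lyapunov analysis is a structured but standard combination of nonconvex descent and strongly concave tracking arguments.
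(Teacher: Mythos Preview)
Your approach is correct but takes a genuinely different route from the paper. You use the standard nonconvex--strongly-concave template: the Lyapunov function $\Phi_t = P(\omega^{(t)}) - P^\star + c\,\|\theta^{(t)}-\theta^*(\omega^{(t)})\|_2^2$, which couples descent of the primal value function $P(\omega)=\max_{\|\theta\|\le\kappa}F(\omega,\theta)$ to contraction of the dual tracking error, after invoking Danskin's theorem and the $S_\omega/\mu$-Lipschitzness of $\theta^*(\cdot)$ to certify $L_P$-smoothness of $P$. The paper instead never introduces $P$ or $\theta^*(\omega)$; its potential is
\[
\cE^{(t)} = \EE F(\omega^{(t)},\theta^{(t)}) + s\Bigl(\tfrac{1+2\eta_\omega L_\omega}{2}\|\omega^{(t)}-\omega^{(t-1)}\|_2^2 + c_1\|\theta^{(t+1)}-\theta^{(t)}\|_2^2 + c_2\|\theta^{(t)}-\theta^{(t-1)}\|_2^2\Bigr),
\]
i.e.\ the raw objective plus weighted squared \emph{successive iterate differences}. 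The per-step decrement (Lemma~\ref{lemma:decrement}) is obtained by a sequence of algebraic lemmas that directly bound $\EE F(\omega^{(t+1)},\theta^{(t+1)})-\EE F(\omega^{(t)},\theta^{(t)})$ and $\|\omega^{(t+1)}-\omega^{(t)}\|^2-\|\omega^{(t)}-\omega^{(t-1)}\|^2$ in terms of each other, exploiting the projection residuals $\epsilon^{(t)}$ from the $\theta$-update. What each buys: the paper's construction reads $J_N$ off directly from the iterate differences via the update rules (no conversion from $\|\nabla P\|$ or $\|\theta-\theta^*\|$ back to the stationarity criterion is needed), and it avoids any envelope-type argument; your construction is conceptually cleaner, makes the two-timescale condition $\eta_\omega/\eta_\theta\lesssim\mu/L_\omega$ transparent, and is easier to modularize, but requires the extra Danskin step and a final translation from $(\|\nabla P\|,\|\theta-\theta^*\|)$ to the specific $J_N$ used here. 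Both rely on exactly the same smoothness lemmas for $\nabla_\omega F$ and $\nabla_\theta F$ (your ``step one''), which the paper proves as Lemmas~\ref{smoothness-omega-theta} and~\ref{gradient-theta-lip-omega}; your identification of the $\chi/(1-\upsilon)$ geometric-mixing factor as the delicate point matches the paper's argument via Lemma~\ref{lemma: Q-bounded}.
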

Here $\tilde{O}$ hides linear or quardic dependence on some constants in Assumptions \ref{state-spacebounded}-\ref{regularity}. Theorem \ref{thm:sgdconverge} shows that though the minimax optimization problem in \eqref{GAIL-opt} does not have a convex-concave structure, the alternating mini-batch stochastic gradient algorithm still guarantees to converge to a stationary point. We are not aware of any similar results for GAIL in existing literature.

\vspace{-0.1in}

\begin{proof}[\bf Proof Sketch]
We prove the convergence by showing 
\begin{align}\label{eq:summable}
\textstyle\sum_{i=1}^N \EE\big\|\theta^{(t)} - \Pi_\kappa (\theta^{(t)} + \nabla_\theta F(\omega^{(t)}, \theta^{(t)}))\big\|_2^2 + \EE \big\lVert \nabla_\omega F (\omega^{(t)}, \theta^{(t+1)})\big\rVert_{2}^2 \leq C + N \epsilon / 2,
\end{align}
where $C$ is a constant and $N\epsilon / 2$ is the accumulation of noise in stochastic approximations of $\nabla F$. Then we straightforwardly have $N J_N \leq C + N \epsilon / 2.$ Dividing both sizes by $N$, we can derive the desired result. The main difficulty of showing \eqref{eq:summable} comes from the fact that the outer minimization problem is nonconvex and we cannot solve the inner maximization problem exactly. To overcome this difficulty, we construct a monotonically decreasing potential function:
\begin{align*}
\cE^{(t)}=~& \EE F(\omega^{(t)} , \theta^{(t)}) 
+ s\big((1+ 2\eta_{\omega} L_{\omega})/2 \cdot \EE\big\lVert \omega^{(t)} -  \omega^{(t-1)} \big\rVert_2^2 \nonumber\\
&+ (\eta_{\omega} /2\eta_{\theta} - \mu\eta_{\omega}/4  +3 \eta_{\omega}\eta_{\theta}\mu^2/2)\cdot \EE\big\lVert \theta^{(t+1)} - \theta^{(t)}\big\rVert_2^2 + \mu \eta_{\omega}/8 \cdot \EE\big\lVert\theta^{(t)} - \theta^{(t-1)}\big\rVert_2^2\big),
\end{align*}
for a constant $s$ to be chosen later. 
Denote $\xi_{\theta}^{(t)}$
and $\xi_{\omega}^{(t)}$
as the i.i.d. noise of the stochastic gradients. The following lemma characterizes the decrement of the potential function at each iteration.
\begin{lemma}\label{lemma:decrement}
With the step sizes $\eta_\theta$ and $\eta_\omega$ chosen as in Theorem \ref{thm:sgdconverge}, we have
\begin{align*}
\cE^{(t+1)} -\cE^{(t)} \le & -k_1 \EE  \lVert \omega^{(t+1)} -  \omega^{(t)} \rVert_2^2  -k_2  \EE\lVert \omega^{(t)} -  \omega^{(t-1)} \rVert_2^2  -k_3 \EE \lVert \theta^{(t+2)} -  \theta^{(t+1)} \rVert_2^2 \\
&-k_4 \EE \lVert \theta^{(t+1)} -  \theta^{(t)} \rVert_2^2 -k_5 \EE \lVert \theta^{(t)} -  \theta^{(t-1)} \rVert_2^2 + \nu (\EE \big\lVert \xi_\omega^{(t)} \rVert_{2}^2 + \EE \lVert \xi_{\theta}^{(t)} \rVert_2^2),
\end{align*}
where $\nu$ is a constant depending on $F$, $\eta_\theta$, and $\eta_\omega$. Moreover, we have constants $k_1, k_2, k_3, k_4, k_5 > 0$ for $s= 8 / (\eta_{\omega}^2(58L_{\omega}+9))$ .
\end{lemma}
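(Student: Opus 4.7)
The plan is to obtain Lemma \ref{lemma:decrement} by analyzing the two updates in sequence and then combining the resulting inequalities with the auxiliary quadratic terms in $\cE^{(t)}$. First I would establish two quantitative regularity properties of $F$: (i) the map $\omega \mapsto F(\omega, \theta)$ is $L_{\omega}$-smooth and (ii) the map $\theta \mapsto F(\omega, \theta)$ is $\mu$-strongly concave and $(2\mu + S_\omega)$-smooth, and moreover the cross Lipschitz bound $\|\nabla_\omega F(\omega, \theta) - \nabla_\omega F(\omega, \theta')\|_2 \le S_\omega \|\theta - \theta'\|_2$ holds. These bounds follow from Assumption \ref{regularity}, the definition of $\tilde r_\theta = \theta^\top g$, and the policy-gradient expressions for $\nabla_\omega \EE_{\tilde\pi_\omega}[\tilde r_\theta]$ involving $Q^{\tilde\pi_\omega}$; the geometric mixing in part (i) of Assumption \ref{regularity} is what produces the $\chi/(1-\upsilon)$ factors hidden inside $L_\omega$ and $S_\omega$.

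Next, for the outer step I would apply the standard descent lemma to $\omega \mapsto F(\omega, \theta^{(t+1)})$ to get
\begin{align*}
\EE F(\omega^{(t+1)},\theta^{(t+1)}) \le \EE F(\omega^{(t)},\theta^{(t+1)}) - \tfrac{1}{2\eta_\omega}\EE\|\omega^{(t+1)}-\omega^{(t)}\|_2^2 + \tfrac{\eta_\omega}{2}L_\omega\, \EE\|\omega^{(t+1)}-\omega^{(t)}\|_2^2 + \tfrac{\eta_\omega}{q_\omega}\EE\|\xi_\omega^{(t)}\|_2^2,
\end{align*}
using Assumption \ref{gradient-noise} to split the stochastic gradient into the true gradient plus zero-mean noise $\xi_\omega^{(t)}$. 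For the inner step I would analyze the projected stochastic ascent on the strongly concave function $\theta \mapsto F(\omega^{(t)},\theta)$. The standard three-point inequality for projection gives a contraction of the form
\begin{align*}
\EE\|\theta^{(t+1)}-\theta^{(t)}\|_2^2 \ge (1-c\eta_\theta\mu)\,\EE\|\theta^\star(\omega^{(t)})-\theta^{(t)}\|_2^2 - \text{(noise and bias)},
\end{align*}
where $\theta^\star(\omega) = \arg\max_{\|\theta\|\le\kappa} F(\omega,\theta)$; combining this with $\mu$-strong concavity yields an upper bound on $\EE F(\omega^{(t)},\theta^{(t+1)}) - \EE F(\omega^{(t)},\theta^{(t)})$ in terms of $\EE\|\theta^{(t+1)}-\theta^{(t)}\|_2^2$, $\EE\|\theta^{(t)}-\theta^{(t-1)}\|_2^2$ (which appears when one controls the drift of $\theta^\star(\omega^{(t)})$ relative to $\theta^{(t)}$ through the previous iterate), and the noise $\|\xi_\theta^{(t)}\|_2^2$.

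With these two inequalities in hand I would add them together and then add the differences of the three quadratic penalties that make up $\cE^{(t)}$, namely
\begin{align*}
s\cdot\tfrac{1+2\eta_\omega L_\omega}{2}\bigl(\EE\|\omega^{(t+1)}-\omega^{(t)}\|_2^2 - \EE\|\omega^{(t)}-\omega^{(t-1)}\|_2^2\bigr),
\end{align*}
and analogous telescoping terms for the two $\theta$-differences. The positive $\EE\|\omega^{(t+1)}-\omega^{(t)}\|_2^2$ coefficient from the descent lemma is absorbed by choosing $s$ large enough that the new penalty dominates; the condition $\eta_\omega/\eta_\theta \le \mu/(30L_\omega+5)$ in Theorem \ref{thm:sgdconverge} is what makes the coefficient of $\EE\|\theta^{(t+1)}-\theta^{(t)}\|_2^2$ negative after cancellation with $\eta_\omega/(2\eta_\theta)$. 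The stated choice $s = 8/(\eta_\omega^2(58L_\omega+9))$ is then verified by explicitly expanding each $k_i$ and checking positivity using the stepsize bounds in Theorem \ref{thm:sgdconverge}, at which point $\nu$ is read off as $\eta_\omega/q_\omega + O(\eta_\theta/q_\theta)$ times a constant depending on $L_\omega, \mu, S_\omega$.

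\textbf{Main obstacle.} The hard part will be the cross term $\EE F(\omega^{(t)},\theta^{(t+1)}) - \EE F(\omega^{(t-1)},\theta^{(t)})$ (and similar index-shifted expressions) that arise because the $\omega$-update uses the freshly updated $\theta^{(t+1)}$ while the previous iteration's inner step used $\omega^{(t-1)}$. Controlling these requires the cross-Lipschitz bound $\|\nabla_\omega F(\omega,\theta) - \nabla_\omega F(\omega,\theta')\|_2 \le S_\omega\|\theta-\theta'\|_2$ and the drift of the strong-concavity maximizer $\theta^\star(\omega)$ in $\omega$, which introduces $\|\omega^{(t)}-\omega^{(t-1)}\|_2^2$ into the bound on the $\theta$-progress. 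This is precisely why the potential function carries the delayed penalty $\mu\eta_\omega/8 \cdot \EE\|\theta^{(t)}-\theta^{(t-1)}\|_2^2$ as well as the $\omega$-difference penalty: both are needed to close the telescope and yield strictly positive $k_1,\ldots,k_5$ under the stated stepsize restrictions.
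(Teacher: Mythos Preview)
Your outline diverges from the paper's argument in a way that leaves a real gap. The paper does \emph{not} analyze the ascent step through the strongly-concave maximizer $\theta^\star(\omega)$ and a contraction of the form you wrote (which, incidentally, is stated with the wrong inequality direction). Instead, the paper exploits the explicit bilinear structure $\nabla_\theta F(\omega,\theta)=G(\tilde\pi_\omega)-G(\pi^*)-\mu\theta$ and the projection residual $\epsilon^{(t+1)}=\theta^{(t+1)}-\bigl(\theta^{(t)}+\eta_\theta(\nabla_\theta F(\omega^{(t)},\theta^{(t)})+\xi_\theta^{(t)})\bigr)$. In Lemma \ref{first-main-lemma} the ascent increment $F(\omega^{(t)},\theta^{(t+1)})-F(\omega^{(t)},\theta^{(t)})$ is rewritten as $\eta_\theta^{-1}\langle\theta^{(t+1)}-\theta^{(t)}-\epsilon^{(t+1)},\theta^{(t+1)}-\theta^{(t)}\rangle-\langle\xi_\theta^{(t)},\theta^{(t+1)}-\theta^{(t)}\rangle$, and then telescoped against $\epsilon^{(t)}$ using the variational inequality for the projection; no distance to $\theta^\star(\omega)$ ever appears.

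The more serious issue is that your sketch gives no mechanism for producing the $-k_3\,\EE\|\theta^{(t+2)}-\theta^{(t+1)}\|_2^2$ term. In the paper this term arises from a specific coupling: when bounding $\tfrac12\|\omega^{(t+1)}-\omega^{(t)}\|^2-\tfrac12\|\omega^{(t)}-\omega^{(t-1)}\|^2$ (Lemma \ref{second-main-lemma}), the difference $\nabla_\omega F(\omega^{(t)},\theta^{(t+1)})-\nabla_\omega F(\omega^{(t)},\theta^{(t)})$ is, via the bilinearity in $(\theta,G(\tilde\pi_\omega))$, converted into the inner product $\langle G(\tilde\pi_{\omega^{(t+1)}})-G(\tilde\pi_{\omega^{(t)}}),\theta^{(t+1)}-\theta^{(t)}\rangle$, and this in turn is re-expressed through the $\theta$-update rule at step $t+1$, which is what introduces $\theta^{(t+2)}$. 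The resulting term $-\tfrac{\eta_\omega}{2\eta_\theta}\|\theta^{(t+2)}-\theta^{(t+1)}\|^2$ is exactly what the look-ahead penalty in $\cE^{(t)}$ is designed to absorb. Without this coupling, telescoping the $\theta$-penalties in $\cE$ cannot yield a strictly negative $k_3$, and the argument does not close. Your cross-Lipschitz/$\theta^\star$-drift idea is the right instinct for a generic nonconvex--strongly-concave analysis, but it targets a different potential; for the particular $\cE^{(t)}$ of this paper you need the $G$-based coupling (Lemmas \ref{lemma 5}--\ref{second-main-lemma}).
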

Let $k = 1 / \min\{k_1, k_4\}$ and $\phi = \max\{1, 1/\eta_\theta^2, 1/\eta_\omega^2\}$. We obtain
\begin{align*}
&\textstyle\sum_{t=1}^N \EE\big\|\theta^{(t)} - \Pi_\kappa (\theta^{(t)} + \nabla_\theta F(\omega^{(t)}, \theta^{(t)}))\big\|_2^2 + \EE \big\lVert \nabla_\omega F (\omega^{(t)}, \theta^{(t+1)})\big\rVert_{2}^2 \\ 
\overset{\textrm{(i)}}{\leq} &\textstyle \phi \sum_{t=1}^N \EE \big[\lVert \theta^{(t+1)} - \theta^{(t)} \rVert_2^2 + \lVert \omega^{(t+1)} - \omega^{(t)} \rVert_2^2\big] \overset{\textrm{(ii)}}{\leq}\textstyle k \phi (\cE^{(1)} - \cE^{(N)}) + k \phi N \nu \EE \big[\lVert \xi_\omega^{(t)} \rVert_{2}^2 + \lVert \xi_{\theta}^{(t)}\rVert_2^2\big] ,
\end{align*}
where (i) follows from plugging in the update \eqref{update-rule-theta} as well as the contraction property of projection, and (ii) follows from Lemma \ref{lemma:decrement}. 
Choosing $q_\theta = 4k\phi \nu M_\theta / \epsilon$ and $q_\omega = 4k\phi \nu M_\omega / \epsilon$, we obtain
\begin{align*}
\textstyle\sum_{t=1}^N \EE\big\|\theta^{(t)} - \Pi_\kappa (\theta^{(t)} + \nabla_\theta F(\omega^{(t)}, \theta^{(t)}))\big\|_2^2 + \EE \big\lVert \nabla_\omega F (\omega^{(t)}, \theta^{(t+1)})\big\rVert_{2}^2 \leq  k \phi (\cE^{(1)} - \cE^{(N)}) + \frac{N \epsilon}{2}.
\end{align*}
We have $\cE^{(N)} \geq \EE F(\omega^{(N)}, \theta^{(N)})$ by the construction of $\cE^{(N)}$. It is easy to verify that $F$ is lower bounded (Lemma \ref{F-bounded} in Appendix \ref{pf:sgdconvergence}). Eventually, we complete the proof by substituting the lower bound and choosing $N = k\phi(2\cE^{(1)}+ 4\sqrt{2}\rho_g \kappa + \mu \kappa^2 + 2\lambda B_H)\epsilon^{-1}.$
\end{proof}

\subsection{Greedy Stochastic Gradient Algorithm}
%
We further apply the greedy stochastic gradient algorithm to \eqref{GAIL-opt}. Specifically, at the $(t+1)$-th iteration, we compute
\begin{align*}
\omega^{(t+1)} = \omega^{(t)} - \eta_{\omega} \nabla_{\omega} \tilde{f}_t(\omega^{(t)}, \hat{\theta}(\omega^{(t)})),
\end{align*}
where $\nabla_\omega \tilde{f}$ is a stochastic approximation of $\nabla_\omega F$, and $\hat{\theta}(\omega^{(t)})$ is an unbiased estimator of the maximizer of the inner problem of \eqref{GAIL-opt}:
\begin{align*}
\EE \hat{\theta}(\omega^{(t)}) = \theta^*(\omega^{(t)}) = \argmax_\theta ~[\EE_{\tilde{\pi}_{\omega}}\tilde{r}_{\theta}(s,a)-\EE_{\pi^*}\tilde{r}_{\theta}(s,a)] - \frac{\mu}{2}\norm{\theta}_2^2.
\end{align*}

We then define the stationary point of this algorithm. Specifically, we call $\omega^{*}$ an stationary point if $\nabla_{\omega} F(\omega^{*},\theta^{*}(\omega^{*})) = 0.$ We measure the sub-stationarity of the algotithm at the iteration $N$ by
\begin{align*}
I_N = \min_{1\le t \le N} \EE \norm{\nabla_{\omega} F(\omega^{(t)},\theta^{*}(\omega^{(t)}))}_2^2.
\end{align*}
Before we proceed with the convergence analysis, we impose the following assumption on the problem.
\begin{assumption}\label{omega-gradient-bound}
There is some constant $M_G>0$ s.t. for any $\omega, \hat{\theta}(\omega)$ and $t\in \NN$, the following two conditions hold.
\begin{align*}
\textrm{Unbiased}:~&\EE\nabla_{\omega} \tilde{f}_t(\omega,\hat{\theta}(\omega))=\nabla_{\omega} F(\omega,\hat{\theta}(\omega)). \\
\textrm{Gradient bounded}:~&\EE\norm{\nabla_{\omega} \tilde{f}_t(\omega,\hat{\theta}(\omega)) }_2^2\le M_G .
\end{align*}
\end{assumption}
Assumption 6 requires the stochastic gradients to be unbiased with bounded second moment.
We then state the global convergence of the above mentioned optimization method.
\begin{theorem}\label{thm:main-empirical-maxi}
Suppose Assumptions  \ref{state-spacebounded}, \ref{g-Lipschitz}, \ref{regularity}, \ref{omega-gradient-bound}  hold.  Given any $\epsilon > 0$, we take $\eta_{\omega} =\frac{\epsilon}{(L_{\omega} + S_{\omega}^2/\mu)M_G}$, then we need at most
\begin{align*}
N=\tilde{O}\left(\frac{(\rho_g^2/\mu + \lambda B_H)(L_{\omega} + S_{\omega}^2/\mu)M_G}{\epsilon^2}\right)
\end{align*}
iterations to  have $I_N<\epsilon.$
\end{theorem}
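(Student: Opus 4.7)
The plan is to reduce the analysis to a standard nonconvex SGD-style descent argument on the envelope function $\Phi(\omega) := F(\omega, \theta^*(\omega))$, where $\theta^*(\omega) := \arg\max_\theta F(\omega, \theta)$ is well-defined and single-valued because $F$ is $\mu$-strongly concave in $\theta$ (the $-\mu/2\,\|\theta\|_2^2$ regularizer is the only $\theta^2$ term, while the remainder of $F$ is linear in $\theta$). In particular, $\nabla \Phi(\omega) = \nabla_\omega F(\omega, \theta^*(\omega))$ by Danskin's theorem, so the quantity that $I_N$ measures is exactly the gradient norm of this envelope.

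First, I would establish that $\Phi$ is $L_\Phi$-smooth with $L_\Phi = L_\omega + S_\omega^2/\mu$. Since $\nabla_\omega F$ is jointly Lipschitz in $(\omega,\theta)$, with constants $L_\omega$ in $\omega$ and $S_\omega$ in $\theta$ (these are the very constants identified in Theorem \ref{thm:sgdconverge}), and since $\theta^*(\omega)$ is $S_\omega/\mu$-Lipschitz by an implicit-function/strong-concavity argument (from the first-order condition, $\theta^*(\omega) = \mu^{-1}(\EE_{\tilde{\pi}_\omega}[g] - \EE_{\pi^*}[g])$, and the outer Lipschitzness of $\EE_{\tilde{\pi}_\omega}[g]$ in $\omega$ gives $S_\omega/\mu$), the triangle inequality yields
\[
\|\nabla \Phi(\omega) - \nabla \Phi(\omega')\|_2 \le L_\omega \|\omega-\omega'\|_2 + S_\omega\|\theta^*(\omega) - \theta^*(\omega')\|_2 \le (L_\omega + S_\omega^2/\mu)\|\omega-\omega'\|_2.
\]

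Second, I would verify that the stochastic update direction is unbiased for $\nabla\Phi$. Because $\nabla_\omega F(\omega,\theta)$ is affine in $\theta$ (the only $\theta$-dependent piece of $F$ is $\theta^\top g$, while $H(\tilde{\pi}_\omega)$ does not involve $\theta$), combining Assumption \ref{omega-gradient-bound} with $\EE\hat\theta(\omega^{(t)}) = \theta^*(\omega^{(t)})$ gives
\[
\EE\big[\nabla_\omega \tilde f_t(\omega^{(t)}, \hat\theta(\omega^{(t)})) \,\big|\, \omega^{(t)}\big] = \nabla_\omega F(\omega^{(t)}, \theta^*(\omega^{(t)})) = \nabla \Phi(\omega^{(t)}).
\]
Applying the standard descent lemma for an $L_\Phi$-smooth function, together with the bounded second moment $M_G$, yields
\[
\EE\Phi(\omega^{(t+1)}) \le \EE\Phi(\omega^{(t)}) - \eta_\omega\, \EE\|\nabla\Phi(\omega^{(t)})\|_2^2 + \tfrac12 L_\Phi \eta_\omega^2 M_G.
\]
Telescoping from $t=1$ to $N$, lower-bounding $\Phi$ by a constant of order $-\rho_g^2/\mu - \lambda B_H$ (which follows from $\|\theta^*(\omega)\|_2 \lesssim \rho_g/\mu$ via Assumption \ref{g-Lipschitz} and $H \le B_H$), dividing by $N$, and plugging in the prescribed $\eta_\omega = \epsilon/((L_\omega+S_\omega^2/\mu)M_G)$ gives $I_N \le \epsilon$ after $N = \tilde O\big((\rho_g^2/\mu + \lambda B_H)(L_\omega + S_\omega^2/\mu) M_G/\epsilon^2\big)$ iterations, matching the claim.

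The main obstacle is the first step: establishing Lipschitz smoothness of the envelope $\Phi$ with the stated constant. The Lipschitzness of $\theta^*(\omega)$ is clean once strong concavity of the inner problem is used, but verifying that $\nabla_\omega F$ is $L_\omega$-Lipschitz in $\omega$ and $S_\omega$-Lipschitz in $\theta$ requires the full policy-gradient machinery in Assumption \ref{regularity} --- geometric mixing to control the derivative of $\EE_{\tilde{\pi}_\omega}[\cdot]$ with respect to $\omega$, together with Lipschitzness of $\rho_{\tilde{\pi}_\omega}$ and $Q^{\tilde{\pi}_\omega}$ and boundedness of $\nabla_\omega \log \tilde{\pi}_\omega$ --- and is precisely what determines the expressions for $L_\omega$ and $S_\omega$ inherited from Theorem \ref{thm:sgdconverge}. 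Everything beyond that smoothness calculation is routine nonconvex SGD bookkeeping.
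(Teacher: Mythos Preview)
Your proposal is correct and follows essentially the same strategy as the paper; the only difference is in how the effective smoothness constant $L_\omega + S_\omega^2/\mu$ enters. You package it as Lipschitz smoothness of the envelope $\Phi(\omega)=F(\omega,\theta^*(\omega))$ via Danskin plus the chain rule, which requires you to check that $\nabla_\omega F$ is $S_\omega$-Lipschitz in $\theta$. The paper instead applies the $L_\omega$-descent lemma to $F(\cdot,\theta^*(\omega^{(t)}))$ at \emph{fixed} $\theta$, telescopes in $\Phi$, and then controls the leftover correction term exactly: since $F(\omega^{(t+1)},\cdot)$ is a concave quadratic with maximizer $\theta^*(\omega^{(t+1)})$,
\[
F(\omega^{(t+1)},\theta^*(\omega^{(t+1)}))-F(\omega^{(t+1)},\theta^*(\omega^{(t)}))=\tfrac{\mu}{2}\,\|\theta^*(\omega^{(t+1)})-\theta^*(\omega^{(t)})\|_2^2\le \tfrac{S_\omega^2}{2\mu}\,\|\omega^{(t+1)}-\omega^{(t)}\|_2^2,
\]
using only the closed form $\theta^*(\omega)=\mu^{-1}(G(\tilde\pi_\omega)-G(\pi^*))$ and the bound $\|G(\tilde\pi_\omega)-G(\tilde\pi_{\omega'})\|_2\le S_\omega\|\omega-\omega'\|_2$ from Lemma~\ref{gradient-theta-lip-omega}. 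This sidesteps your extra cross-Lipschitz verification, though your claim is in fact true here because the $\theta$-dependence of $\nabla_\omega F$ is linear through $\theta^\top G(\tilde\pi_\omega)$ and the same Jacobian bound applies in the transposed direction. Beyond this packaging difference, the two arguments are identical: same unbiasedness step exploiting linearity of $\nabla_\omega F$ in $\theta$, same boundedness lemma $|\Phi|\lesssim \rho_g^2/\mu+\lambda B_H$, same telescoping and choice of $\eta_\omega$.
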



\section{Experiment}\label{sec:ex}

To verify our theory in Section \ref{sec:computation}, we conduct experiments in three reinforcement learning tasks: Acrobot, MountainCar, and Hopper. For each task, we first train an expert policy using the proximal policy optimization (PPO) algorithm in \citep{schulman2017proximal} for $500$ iterations, and then use the expert policy to generate the demonstration data. The demonstration data for every task contains $500$ trajectories, each of which is a series of state action pairs throughout one episode in the environment. When training GAIL, we randomly select a mini-batch of trajectories, which contain at least $8192$ state action pairs. We use PPO to update the policy parameters. This avoids the instability of the policy gradient algorithm, and improves the reproducibility of our experiments.

\begin{figure}[htb!]
    \centering
    \includegraphics[width=0.95\textwidth]{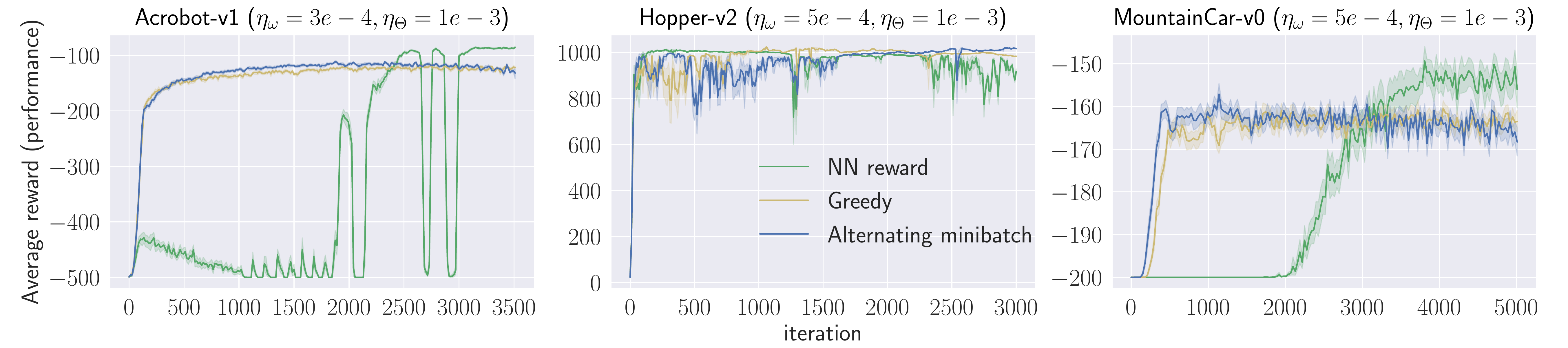}
    \caption{Performance of GAIL on three different tasks. The plotted curves are averaged over $5$ independent runs with the vertical axis being the average reward and horizontal axis being the number of iterations.}
    \label{fig:Training Graph}
\end{figure}

We use the same neural network architecture for all the environments. For policy, we use a fully connected neural network with two hidden layers of $128$ neurons in each layer and $\mathrm{tanh}$ activation. For reward, we use a fully connected ReLU neural network with two hidden layers of $1024$ and $512$ neurons, respectively. To implement the kernel reward, we fix the first two layers of the neural network after random initialization and only update the third layer, i.e., the first two layers mimic the random feature mapping. We choose $\kappa=1$ and $\mu=0.3$. When updating the neural network reward, we use weight normalization in each layer \citep{salimans2016weight}.

When updating the kernel reward at each iteration, we choose to take the stochastic gradient ascent step for either once (i.e., alternating update in Section \ref{sec:computation}) or $10$ times. When updating the neural network reward at each iteration, we choose to take the stochastic gradient ascent step for only once. We tune step size parameters for updating the policy and reward, and summarize the numerical results of the step sizes attaining the maximal average episode reward in Figure \ref{fig:Training Graph}.

As can be seen, using greedy stochastic gradient algorithm for updating the reward at each iteration yields similar performance as that of alternating mini-batch stochastic gradient algorithm. Moreover, we observe that parameterizing the reward by neural networks slightly outperform that of the kernel reward. However, its training process tends to be unstable and takes longer time to converge.



\section{Discussions}\label{sec:discuss}

Our proposed theories of GAIL are closely related to Generative Adversarial Networks \citep{goodfellow2014generative,arjovsky2017wasserstein}: (1) The generalization of GANs is defined based on the integral probabilistic metric (IPM) between the synthetic distribution obtained by the generator network and the distribution of the real data \citep{arora2017generalization}. As the real data in GANs are considered as independent realizations of the underlying distribution, the generalization of GANs can be analyzed using commonly used empirical process techniques for i.i.d. random variables. GAIL, however, involves dependent demonstration data from experts, and therefore the analysis is more involved. (2) Our computational theory of GAIL can be applied to MMD-GAN and its variants, where the IPM is induced by some reproducing kernel Hilbert space \citep{li2017mmd,binkowski2018demystifying,arbel2018gradient}. The alternating mini-batch stochastic gradient algorithm attains a similar sublinear rate of convergence to a stationary solution.

Moreover, our computational theory of GAIL only considers the policy gradient update when learning the policy \citep{sutton2000policy}. Extending to other types of updates such as natural policy gradient \citep{kakade2002natural}, proximal policy gradient \citep{schulman2017proximal} and trust region policy optimization \citep{schulman2015trust} is a challenging, but important future direction.


\bibliographystyle{ims}
\bibliography{capacity}

\newpage

\appendix


\section{Proofs in Section \ref{Generalization}}
\subsection{Proof of Theorem \ref{generalization}}\label{pf:thm1}
We first consider $n=1$. For notational simplicity, we denote $x_t = (s_t, a_t)$ and $\tau=\{x_t\}_{t=0}^{T-1}$. Then the generalization gap is bounded by
\begin{align*}
d_{\mathcal{R}}(\pi^*,\hat{\pi})-\inf\limits_{\pi}{d_{\mathcal{R}}(\pi^{*},\pi)}
= & ~d_{\mathcal{R}}(\pi^{*},\hat{\pi})-d_{\mathcal{R}}(\pi^*_n,\hat{\pi}) \nonumber \\
& + d_{\mathcal{R}}(\pi^*_n,\hat{\pi})-\inf\limits_{\pi}d_{\mathcal{R}}(\pi^*_n,\pi) \nonumber \\
& + \inf_{\pi}{d_{\mathcal{R}}(\pi^*_n,\pi)}-\inf\limits_{\pi}{d_{\mathcal{R}}(\pi^{*},\pi)} \nonumber \\
\le &~ 2\left( \sup_{r \in \cR}\mathbb{E}_{\pi^{*}}r(s,a) -\frac1T\sum\limits_{t=0}^{T-1} r(s_t,a_t)\right)+\epsilon.\label{begin-proof}
\end{align*}
Denote $\Phi (\tau)=\sup_{r \in \cR} \mathbb{E}_{\pi^{*}}r(s,a) - \frac1T\sum\limits_{t=0}^{T-1} r(s_t,a_t) = \sup_{r \in \cR} \mathbb{E}_{\pi^{*}}r(x) - \frac1T\sum\limits_{t=0}^{T-1} r(x_t)$.
We utilize the independent block technique first proposed in \citet{yu1994rates} to show the concentration of $\Phi(\tau)$. Specifically, we partition $\tau$ into $2m$ blocks of equal size $b$. We denote two alternating sequences as
\begin{align*}
\tau_0 &= (X_1, X_2, \cdots, X_{m}) \qquad    X_i= (x_{(2i-1)b + 1}, \cdots, x_{(2i-1)b + b})\\
\tau_1 &= (X_1^{(1)}, X_2^{(1)}, \cdots, X_{m}^{(1)}) \qquad   X_i= (x_{2ib + 1}, \cdots, x_{2ib + b}),
\end{align*}
We now define a new sequence 
\begin{align} 
\tilde{\tau}_0= (\tilde{X}_1, \tilde{X}_2, \cdots, \tilde{X}_{m}),
\end{align}
where $\tilde{X}$'s are i.i.d blocks of size $b$, and each block $\tilde{X}_i$ follows the same distribution as $X_i$.

We define $r_b: X \rightarrow \RR$ as $r_b(X) = \frac{1}{b}\sum_{k=1}^b r(x_k)$. Note that $r_b$ is essentially the average reward on a block. Accordingly, we denote $\cR_b$ as the set of all $r_b$'s induced by $r \in \cR$.

Before we proceed, we need to introduce a lemma which characterizes the relationship between the expectations of a bounded measurable function with respect to $\tau_0$ and $\tilde{\tau}_0$.
\begin{lemma}{\citep{yu1994rates}} \label{lemmayu}
Suppose $h$ is a measurable  function bounded by $M>0$ defined over the blocks $X_i$, then the following inequality holds:
\begin{align*}
|\EE_{\tau_0}(h) - \EE_{\tilde{\tau}_0}(h)| \le (m-1) M \beta(b),
\end{align*}
where $\EE_{\tau_0}$ denotes the expectation with repect to $\tau_0$, and $\EE_{\tilde{\tau}_0}$ denotes the expectation with respect to $\tilde{\tau}_0$.
\end{lemma}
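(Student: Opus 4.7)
The plan is to reduce the claim to a total variation bound between two joint distributions on the block sequence. Let $P$ denote the joint law of $\tau_0 = (X_1, \ldots, X_m)$ inherited from the original Markov chain, and let $\tilde{P}$ denote the joint law of $\tilde{\tau}_0 = (\tilde{X}_1, \ldots, \tilde{X}_m)$, which by construction is the product of the marginals of the $X_i$'s. Since $h$ is bounded by $M$, the dual representation of total variation gives
\begin{align*}
\bigl|\EE_{\tau_0}(h) - \EE_{\tilde{\tau}_0}(h)\bigr| \;\leq\; M \cdot \|P - \tilde{P}\|_{\textrm{TV}},
\end{align*}
so it suffices to show $\|P - \tilde{P}\|_{\textrm{TV}} \leq (m-1)\beta(b)$.

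The main step is a hybrid (telescoping) argument. For each $k = 0, 1, \ldots, m$, I would define an intermediate law $Q_k$ under which $(X_1, \ldots, X_k)$ is drawn from its true Markov joint distribution while $X_{k+1}, \ldots, X_m$ are drawn independently from their own marginals (and independent of the first $k$ blocks). Then $Q_0 = Q_1 = \tilde{P}$ (sampling zero or one block from the chain and the rest independently both reproduce the product of marginals) and $Q_m = P$. The triangle inequality gives
\begin{align*}
\|P - \tilde{P}\|_{\textrm{TV}} \;\leq\; \sum_{k=2}^{m} \|Q_k - Q_{k-1}\|_{\textrm{TV}},
\end{align*}
so it remains to bound each of the $m-1$ summands by $\beta(b)$.

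For the individual bound, I observe that $Q_{k-1}$ and $Q_k$ couple identically on $(X_1, \ldots, X_{k-1})$ and on $(X_{k+1}, \ldots, X_m)$ conditional on $X_k$; they differ only in how $X_k$ is generated given $(X_1, \ldots, X_{k-1})$: independently from its marginal under $Q_{k-1}$, versus from its true conditional distribution under $Q_k$. Since the block $X_{k-1}$ ends at some time $n$ in the original trajectory and $X_k$ begins at time $n+b+1$ (the omitted block of $\tau_1$ creates the gap of length $b$), the defining supremum in $\beta(b) = \sup_n \EE_{B \in \sigma_0^n} \sup_{A \in \sigma_{n+b}^{\infty}} |\PP(A|B) - \PP(A)|$ upper bounds the total variation between the conditional law of $X_k$ given the history and the marginal law of $X_k$. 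Integrating over the history yields $\|Q_k - Q_{k-1}\|_{\textrm{TV}} \leq \beta(b)$, and summing gives the stated bound.

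The main obstacle I anticipate is the bookkeeping that converts the event-wise definition of $\beta(b)$ into a total variation bound on the conditional distribution of $X_k$ given $(X_1, \ldots, X_{k-1})$ (as opposed to given a single event $B$). This requires invoking the supremum-over-sets characterization of total variation and then integrating out the past through Fubini; it is the step where one must be careful that the conditional distributions are regular and that the gap between the relevant $\sigma$-algebras $\sigma_0^n$ and $\sigma_{n+b}^{\infty}$ is indeed at least $b$ for every consecutive pair of blocks in $\tau_0$.
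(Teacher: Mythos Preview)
The paper does not supply its own proof of this lemma; it is quoted from \citet{yu1994rates} (her Corollary~2.7) and invoked as a black box in Corollary~\ref{apply-yu}. Your hybrid/telescoping argument is exactly the proof in Yu's original paper, so there is nothing substantive to compare.

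One small bookkeeping point worth flagging: with the paper's definition of $\beta(b)$ as the expected sup-over-events deviation between the conditional and marginal future, your bound $\|Q_k-Q_{k-1}\|_{\textrm{TV}}\le\beta(b)$ is exact (total variation in the sup-over-sets sense). The first step $|\EE_{\tau_0}(h)-\EE_{\tilde\tau_0}(h)|\le M\,\|P-\tilde P\|_{\textrm{TV}}$, however, holds as written only when $0\le h\le M$; for two-sided $|h|\le M$ one picks up a factor of $2$. Since the paper only ever applies the lemma with $h$ an indicator function (Corollary~\ref{apply-yu}), this is immaterial here, but you should state the convention explicitly when filling in the details.
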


\begin{corollary}\label{apply-yu}
Applying Lemma \ref{lemmayu}, we have
\begin{align}
&\mathbb{P}_{\tau}(\Phi(\tau) > \epsilon)
 \le   2  \mathbb{P}_{\tilde{\tau}_0} \left(\Phi(\tilde{\tau}_0) - \EE_{\tilde{\tau}_0}[\Phi(\tilde{\tau}_0)]>\epsilon - \EE_{\tilde{\tau}_0}[\Phi(\tilde{\tau}_0)] \right) + 2(m -1)\beta(b) . \label{gen-1}
\end{align}
\end{corollary}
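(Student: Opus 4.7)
The plan is to combine sub-additivity of the supremum with Yu's independent-block lemma (Lemma \ref{lemmayu}) so as to transfer a probability over the dependent trajectory to one over an independent-block counterpart, at the cost of $(m-1)\beta(b)$ per sub-sequence.

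First I would decompose the empirical average over the full trajectory $\tau$ as the equal-weight mean of the empirical averages over the two alternating sub-sequences $\tau_0$ and $\tau_1$. Each sub-sequence contains $mb = T/2$ points, so
\begin{align*}
\frac{1}{T}\sum_{t=0}^{T-1} r(x_t) \;=\; \frac12 \cdot \frac{2}{T}\sum_{x\in\tau_0} r(x) \;+\; \frac12 \cdot \frac{2}{T}\sum_{x\in\tau_1} r(x).
\end{align*}
Writing $\Phi_j(\tau_j) = \sup_{r\in\cR}\bigl[\EE_{\pi^*} r(x) - \tfrac{2}{T}\sum_{x\in\tau_j} r(x)\bigr]$ for $j\in\{0,1\}$ (this is what the statement means by $\Phi(\tilde{\tau}_0)$, with the factor of two absorbing the half-length normalization), subadditivity of the supremum yields $\Phi(\tau) \le \tfrac12\Phi_0(\tau_0) + \tfrac12\Phi_1(\tau_1)$. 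A union bound then gives
\begin{align*}
\mathbb{P}_\tau\bigl(\Phi(\tau) > \epsilon\bigr) \;\le\; \mathbb{P}_{\tau_0}\bigl(\Phi_0(\tau_0) > \epsilon\bigr) + \mathbb{P}_{\tau_1}\bigl(\Phi_1(\tau_1) > \epsilon\bigr).
\end{align*}

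Next I would apply Lemma \ref{lemmayu} separately to each sub-sequence, taking $h(\tau_j) = \mathds{1}\{\Phi_j(\tau_j) > \epsilon\}$, which is measurable (since $\cR$ has finite $\ell_\infty$ covering number by Assumption \ref{assump_rbound}) and bounded by $M=1$. This yields
\begin{align*}
\mathbb{P}(\Phi_0(\tau_0) > \epsilon) \le \mathbb{P}(\Phi_0(\tilde{\tau}_0) > \epsilon) + (m-1)\beta(b),
\end{align*}
and the analogous statement for $\tau_1$. By stationarity of the Markov chain under $\pi^*$ (Assumption \ref{state-spacebounded}), the blocks making up $\tilde{\tau}_0$ and $\tilde{\tau}_1$ have identical marginal distributions, so $\Phi_0(\tilde{\tau}_0) \stackrel{d}{=} \Phi_1(\tilde{\tau}_1)$, producing the factor of $2$ in front of the leading term.

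Finally I would rewrite the event $\{\Phi_0(\tilde{\tau}_0) > \epsilon\}$ in the centered form $\{\Phi_0(\tilde{\tau}_0) - \EE[\Phi_0(\tilde{\tau}_0)] > \epsilon - \EE[\Phi_0(\tilde{\tau}_0)]\}$, which is the form convenient for subsequent empirical-process bounds on i.i.d. blocks. The only real obstacle is bookkeeping: one must keep track of the factor of two from the half-length sub-sequences, verify that the supremum $\Phi_j$ is measurable so that Yu's lemma applies, and confirm $M=1$ is the correct bound on the indicator function that enters the lemma — none of which presents a genuine mathematical difficulty.
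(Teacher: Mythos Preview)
Your proposal is correct and follows essentially the same route as the paper: split the empirical average over $\tau$ into the two alternating sub-sequences $\tau_0,\tau_1$, use subadditivity of the supremum plus a union bound, invoke stationarity so that the two terms coincide in law, and apply Lemma~\ref{lemmayu} with $h=\mathds{1}\{\Phi>\epsilon\}$ and $M=1$ to pass to the independent-block sequence $\tilde{\tau}_0$. The only cosmetic difference is that the paper first collapses to $2\,\mathbb{P}_{\tau_0}(\Phi(\tau_0)>\epsilon)$ by stationarity and then applies Yu's lemma once, whereas you apply the lemma to each sub-sequence and then identify the two resulting probabilities; both orderings yield the same $2(m-1)\beta(b)$ overhead.
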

\begin{proof}
Consider $\mathbb{P}(\Phi(\tau) > \epsilon) $,we have
\begin{align}
\mathbb{P}_{\tau}(\Phi(\tau) > \epsilon)&=\mathbb{P}_{\tau}(\sup_{r \in \cR} \mathbb{E}_{\pi^{*}}r(x) - \frac1T\sum\limits_{t=0}^T r(x_t)>\epsilon) \nonumber\\
&\le\mathbb{P}_{\tau}\left(\frac{\sup_{r \in \cR} \mathbb{E}_{\pi^{*}}r(x) - \frac2T\sum\limits_{t\in \tau_0} r(x_t)}{2} + \frac{\sup_{r \in \cR} \mathbb{E}_{\pi^{*}}r(x) - \frac2T\sum\limits_{t\in \tau_1} r(x_t)}{2}>\epsilon\right)  \label{sup}\\
& = \mathbb{P}_{\tau}(\Phi(\tau_0)+\Phi(\tau_1)>2\epsilon) \nonumber\\
&\le  \mathbb{P}_{\tau_0} (\Phi(\tau_0)>\epsilon) + \mathbb{P}_{\tau_1} (\Phi(\tau_1)>\epsilon) \nonumber\\
&= 2  \mathbb{P}_{\tau_0} (\Phi(\tau_0)>\epsilon) \nonumber\\
& = 2  \mathbb{P}_{\tau_0} \left(\Phi(\tau_0) - \EE_{\tilde{\tau}_0}[\Phi(\tilde{\tau}_0)]>\epsilon - \EE_{\tilde{\tau}_0}[\Phi(\tilde{\tau}_0)] \right) , \nonumber
\end{align}
where the first inequality $\eqref{sup}$ follows from the convexity of supremum.

Applying Lemma \ref{lemmayu} and setting $h = \mathds{1}\left\{(\Phi(\tilde{\tau}_0) - \EE_{\tilde{\tau}_0}[\Phi(\tilde{\tau}_0)]>\epsilon - \EE_{\tilde{\tau}_0}[\Phi(\tilde{\tau}_0)]\right\} $, we obtain
\begin{align*}
& \mathbb{P}_{\tau_0} (\Phi(\tau_0) - \EE_{\tilde{\tau}_0}[\Phi(\tilde{\tau}_0)]>\epsilon - \EE_{\tilde{\tau}_0}[\Phi(\tilde{\tau}_0)] ) \\
\le ~& \mathbb{P}_{\tilde{\tau}_0}((\Phi(\tilde{\tau}_0) - \EE_{\tilde{\tau}_0}[\Phi(\tilde{\tau}_0)]>\epsilon - \EE_{\tilde{\tau}_0}[\Phi(\tilde{\tau}_0)] ) + 2(m -1)\beta(b) .
\end{align*}
\end{proof}
Now since $\tilde{\tau}_0$ consists of independent blocks, we can apply McDiarmid's inequality to $r_b$ by viewing $\tilde{X}_i$'s as i.i.d samples. We rewrite $\Phi(\tilde{\tau}_0)$ as 
\begin{align*}
\Phi(\tilde{\tau}_0) =  \sup\limits_{r_b \in \cR_b} \mathbb{E}_{\pi^{*}}r_b(\tilde{X}_i) - \frac1{m}\sum\limits_{i=1}^{m} r_b(\tilde{X}_i).
\end{align*}
Given samples $\tilde{X}_1,\cdots, \tilde{X}_i, \cdots,\tilde{X}_{m}$ and $\tilde{X}_1,\cdots, \tilde{X}'_i,\cdots,\tilde{X}_{m}$ , we have
\begin{align*}
\left|\Phi(\tilde{X}_1,\cdots, \tilde{X}_i, \cdots,\tilde{X}_{m}) - \Phi(\tilde{X}_1,\cdots, \tilde{X}'_i, \cdots,\tilde{X}_{m})\right| \le \frac{2}{m} \left|r_b(\tilde{X}_i)\right| \le \frac{2B_r}{m}.
\end{align*}
Then by McDiarmid's inequality, we have
\begin{align}
\mathbb{P}_{\tilde{\tau}_0}\left(\Phi(\tilde{\tau}_0) - \EE_{\tilde{\tau}_0}[\Phi(\tilde{\tau}_0)]  > \epsilon - \EE_{\tilde{\tau}_0}[\Phi(\tilde{\tau}_0)]  \right) \le \exp\left(\frac{-m(\epsilon -\EE_{\tilde{\tau}_0}[ \Phi(\tilde{\tau}_0)] )^2}{2B_r^2}\right)\label{gen-2}.
\end{align}
Now combining \eqref{gen-1} and \eqref{gen-2}, we obtain
\begin{align}
\mathbb{P}_{\tau}(\Phi(\tau) > \epsilon)\le 2 \exp\left(\frac{-m(\epsilon -\EE_{\tilde{\tau}_0}[ \Phi(\tilde{\tau}_0)] )^2}{2B_r^2}\right) + 2(m -1)\beta(b)\label{intermediate-bound1}.
\end{align}

By the argument of symmetrization, we have
\begin{align}\label{intermediate-bound2}
 \EE_{\tilde{\tau}_0}[\Phi(\tilde{\tau}_0)] \le 2\EE_{\tilde{\tau}_0, \sigma}\left[\frac{1}{m} \sup_{r_b \in \cR_b}{\sum\limits_{i=1}^{m} \sigma_{i} r_b(\tilde{X}_i)}\right],
\end{align}
where $\sigma_i$'s are i.i.d. Rademacher random variables. Now we relate the Rademacher complexity \eqref{intermediate-bound2} to its counterpart taking i.i.d samples. Specifically, we denote $x_j^{(t)}$ as the $j$-th point of the $t$-th block. Denote $\tilde{\tau}_0^j$ as the collection of the $j$-th sample from each independent block $\tilde{X}_i$ for $i = 1, \dots, m$. Plugging in the definition of $r_b$, we have
\begin{align}
2\EE_{\tilde{\tau}_0, \sigma}\left[\frac{1}{m} \sup_{r \in \cR}\sum\limits_{t=1}^{m} \sigma_{t} \frac1b\sum\limits_{j=1}^b r(x_j^{(t)})\right] & \le 2\EE_{\tilde{\tau}_0, \sigma}\left[\frac1b\sum\limits_{j=1}^b\frac{1}{m} \sup_{r \in R}\sum\limits_{t=1}^{m} \sigma_{t}  r(x_j^{(t)})\right] \nonumber\\
& \le \frac2b\sum\limits_{j=1}^b\EE_{\tilde{\tau}_0, \sigma}\left[\frac{1}{m} \sup_{r \in \cR}\sum\limits_{t=1}^{m} \sigma_{t}  r(x_j^{(t)})\right] \nonumber\\
& = \frac2b\sum\limits_{j=1}^b\EE_{\tilde{\tau}_0^j, \sigma}\left[\frac{1}{m} \sup_{r \in \cR}\sum\limits_{t=1}^{m} \sigma_{t}  r(x_j^{(t)})\right] \nonumber\\
&= 2 \EE_{\tilde{\tau}_{0}^1, \sigma}\left[\frac{1}{m} \sup_{r \in \cR}\sum_{t=1}^{m} \sigma_{t}  r(x^{(t)}_1)\right]\label{ERC}.
\end{align}
Setting the right-hand side of \eqref{intermediate-bound1} to be $\frac{\delta}2$ and substituting \eqref{ERC}, we obtain,
with probability at least $1-\frac{\delta}2$, for all $r\in \cR$,
\begin{align}
\Phi(\tau) \le 2  \EE_{\tilde{\tau}_{0}^1, \sigma}\left[\frac{1}{m} \sup_{r \in \cR}\sum_{t=1}^{m} \sigma_{t}  r(x^{(t)}_1)\right]+ 2B_r\sqrt{\frac{\log \frac4{\delta'}}{2m}}, \label{intermediate-1}
\end{align}
where $\delta' = \delta - 4(m -1)\beta(b)$.

Then we denote 
\begin{align*}
\textrm{Rademacher complexity for $\tilde{X}_i$'s: }~~& \mathfrak{R}_{m}^{\tilde{D}} = \EE_{\tilde{\tau}_{0}^1, \sigma}\left[\frac{1}{m} \sup_{r \in \cR}\sum_{t=1}^{m} \sigma_{i}  r(x^{(t)}_1)\right], \\
\textrm{Empirical Rademacher complexity for $\tilde{X}_i$'s: }~~& \hat{\mathfrak{R}}_{\tilde{\tau}_{0}^1} = \EE_{\sigma}\left[\frac{1}{m} \sup_{r \in \cR}\sum_{t=1}^{m} \sigma_{i}  r(\tilde{x}^{(t)}_1)\right], \\
\textrm{Empirical Rademacher complexity for $X_i$'s: }~~& \hat{\mathfrak{R}}_{m} = \EE_{ \sigma}\left[\frac{1}{m} \sup_{r \in \cR}\sum_{t=1}^m \sigma_{i}  r(x^{(t)}_1)\right].
\end{align*}
Applying Lemma \ref{lemmayu} to the indicator function $\mathds{1}\{\mathfrak{R}_{m}^{\tilde{D}} - \hat{\mathfrak{R}}_{m} > \epsilon\}$, we obtain
\begin{align*}
\mathbb{P}(\mathfrak{R}_{m}^{\tilde{D}} - \hat{\mathfrak{R}}_m>\epsilon) \le \mathbb{P}(\mathfrak{R}_{m}^{\tilde{D}} - \hat{\mathfrak{R}}_{\tilde{\tau}_0^1}>\epsilon) + (m-1)\beta(2b-1) \le \mathbb{P}(\mathfrak{R}_{m}^{\tilde{D}} - \hat{\mathfrak{R}}_{\tilde{\tau}_0^1}>\epsilon) + (m -1)\beta(b).
\end{align*}
It is straightforward to verify that given $x^{(1)}_1, \dots, x^{(i)}_1, \dots, x^{(m)}_1$ and $x^{(1)}_1, \dots, x'^{(i)}_1, \dots, x^{(m)}_1$, the Rademacher complexity satisfies $$\left|\hat{\mathfrak{R}}'_{\tilde{\tau}_{0}^1}-\hat{\mathfrak{R}}_{\tilde{\tau}_{0}^1}\right| \leq \frac{2B_r}{m}.$$ Then by applying McDiarmid's Inequality again, we obtain
\begin{align*}
\mathbb{P}\left(\mathfrak{R}_{m}^{\tilde{D}} - \hat{\mathfrak{R}}_{m}>\epsilon\right) \le \exp \left(\frac{-m\epsilon^2}{2B_r^2}\right) + (m-1) \beta(b).
\end{align*} 
Thus with probability at least $1-\frac{\delta}2$, we have
\begin{align}
\mathfrak{R}_{m}^{\tilde{D}} - \hat{\mathfrak{R}}_{m} \le 2B_r\sqrt{\frac{\log \frac1{\delta/2 - (m-1)\beta(b)}}{2m}} \le 2 B_r\sqrt{\frac{\log \frac4{\delta'}}{2m}} \label{intermediate-2}.
\end{align}
Combining \eqref{intermediate-1} and \eqref{intermediate-2}, we have with probability $1-\delta$,
\begin{align}
\Phi(\tau) \le 2\hat{\mathfrak{R}}_{m}+6 B_r\sqrt{\frac{\log \frac4{\delta'}}{2m}}\label{bound-in-ERC}.
\end{align}

We apply the Dudley's entropy integral to bound $\hat{\mathfrak{R}}_{m}$. Specifically, we have
\begin{align}
\hat{\mathfrak{R}}_{m} &\le  \frac{4\alpha}{\sqrt{m}} + \frac{12}{m} \int_{\alpha}^{\sqrt{m}B_r}\sqrt{\log \mathcal{N}(\mathcal{R},\epsilon,\|\cdot\|_{\infty})} d \epsilon \nonumber\\
&\le \frac{4\alpha}{\sqrt{m}} +\frac{12B_r}{\sqrt{m}} \sqrt{\log \mathcal{N}(\mathcal{R}, \alpha,\|\cdot\|_{\infty})}\label{ERC-covering} .
\end{align}
It suffices to pick $\alpha=\frac{1}{\sqrt{m}}$. By combining \eqref{begin-proof}, \eqref{bound-in-ERC}, and \eqref{ERC-covering}, we have with probability at least $1-\delta$,
\begin{align}
d_{\mathcal{R}}(\pi^*,\hat{\pi})-\inf\limits_{\pi}{d_{\mathcal{R}}(\pi^{*},\pi)} \le \frac{16}{m} + \frac{48B_r}{\sqrt{m}} \sqrt{\log \mathcal{N}(\mathcal{R}, 1/\sqrt{m},\|\cdot\|_{\infty})} + 12 B_r\sqrt{\frac{\log \frac4{\delta'}}{2m}} +\epsilon, \nonumber
\end{align}
where $\delta' = \delta - 4(m -1)\beta(b)$ and $2bm=T$.
Substituting $m = T / 2b$, we have
\begin{align}
d_{\mathcal{R}}(\pi^*,\hat{\pi})-\inf\limits_{\pi}{d_{\mathcal{R}}(\pi^{*},\pi)} \le \frac{32b}{T} + \frac{48B_r}{\sqrt{T/2b}} \sqrt{\log \mathcal{N}(\mathcal{R}, 1/\sqrt{T/2b},\|\cdot\|_{\infty})} + 12 B_r\sqrt{\frac{\log \frac4{\delta'}}{T/b}} +\epsilon. \label{bound-with-a}
\end{align}
Now we instantiate a choice of $b$ and $m$ for expotentially $\beta$-mixing sequences, where the mixing coefficient $\beta(b) \leq \beta_0 \exp(-\beta_1 b^\alpha)$ for constants $\beta_0 ,\beta_1, \alpha > 0$. We set $\delta' > \delta - 4(m -1)\beta(b) = \frac{\delta}{2}$. By a simple calculation, it is enough to choose $b = (\frac{\log (4\beta_0T/\delta)}{\beta_1})^{1/\alpha}$. Substituting such a $b$ into \eqref{bound-with-a}.
We have with probability at least $1-\delta$:
 \begin{align}\label{generalization-main}
d_{\mathcal{R}}(\pi^*,\hat{\pi})-\inf\limits_{\pi}{d_{\mathcal{R}}(\pi^{*},\pi)} \le O\left( \frac{B_r }{\sqrt{T/\zeta}}\sqrt{\log \cN(\cR,\sqrt{\frac{\zeta}{T}}, \|\cdot\|_{\infty})} + B_r\sqrt{\frac{\log(1/\delta)}{T/\zeta}}\right ) +\epsilon, 
\end{align}
where $\zeta=(\beta_1^{-1}\log\frac{\beta_0T}{\delta})^{\frac1\alpha}$.

When $n>1$, we concatenate $n$ trajectories to form a sequence of length $nT$, and such a sequence is still exponentially $\beta$-mixing. Applying the same technique, we partition the whole sequence into $2nm$ blocks of equal size $b$. Then with probability at least $1-\delta$, we have
\begin{align*}
d_{\mathcal{R}}(\pi^*,\hat{\pi}) - \inf\limits_{\pi}{d_{\mathcal{R}} (\pi^{*},\pi)} \le O\left( \frac{B_r }{\sqrt{nT/\zeta}}\sqrt{\log \cN(\cR,\sqrt{\frac{\zeta}{nT}}, \|\cdot\|_{\infty})} + B_r\sqrt{\frac{\log(1/\delta)}{nT / \zeta}}\right ) +\epsilon .
\end{align*}

\subsection{ Proof of Corollary \ref{RF-gen}}

The reward function can be bounded by
\begin{align*}
|r(s,a)| = | \theta^\top g(\psi_s,\psi_a)|  \le \| \theta \|_2 \| g(\psi_s, \psi_a) \|_2 \le \sqrt{2}B_{\theta} \rho_g,
\end{align*}
where the first inequality comes from Cauchy-Schwartz inequality.

To compute the covering number, we exploit the Lipschitz continuity of $r(s,a)$ with respect to parameter $\theta$. Specifically, for two different parameters $\theta$ and $\theta^{'}$, we have 
\begin{align*}
\norm{r(s,a) - r'(s,a)}_\infty &= \norm{(\theta - \theta')^\top g(\psi_s, \psi_a)}_\infty \\
&\mathop{\le}^{\rm (i)} \norm{\theta - \theta'}_2 \sup_{(s,a) \in \cS \times \cA}\norm{g(\psi_s, \psi_a)}_2\\
&\mathop{\le}^{\rm (ii)} \sqrt{2}\| \theta - \theta^{'}\|_2 \rho_g\sup_{(s,a) \in \cS \times \cA}\sqrt{\norm{\psi_s}_2^2+\norm{\psi_a}_2^2}\mathop{\leq}^{\rm (iii)}\sqrt{2}\rho_g \norm{\theta - \theta'}_2,
\end{align*}
where (i) comes from Cauchy-Schwartz inequality, (ii) comes from the Lipschitz continuity of $g$, and (iii) comes from the boundedness of $\psi_s$ and $\psi_a$.

Denote $\Theta = \{ \theta \in \mathbb{R}^q :  \| \theta \|_2 \le B_{\theta} \}$. By the standard argument of the volume ratio, we have
\begin{align*}
\cN(\Theta, \epsilon, \|\cdot\|_2) \le \left( 1+\frac{2B_{\theta}}{\epsilon} \right)^q.
\end{align*}
Accordingly, we have
\begin{align}
 \cN\Bigg(R,\sqrt{\frac{2(\beta_1^{-1}\log (4\beta_0 T/\delta))^{\frac1\chi}}{T}}, \|\cdot\|_{\infty}\Bigg) &\le \cN\Bigg(\Theta,\frac1{\sqrt{2}\rho_g}\sqrt{\frac{2(\beta_1^{-1}\log (4\beta_0 T/\delta))^{\frac1\chi}}{T}}, \|\cdot\|_2\Bigg) \nonumber \\
& \le \left( 1+ 2\sqrt{2}\rho_gB_{\theta}{\sqrt{\frac{T}{2(\beta_1^{-1}\log (4\beta_0 T/\delta))^{\frac1\chi}}}} \right)^q \label{covering-kernel}
.\end{align}
Plugging \eqref{covering-kernel} into \eqref{generalization-main}, we have
\begin{align*}
&d_{\mathcal{R}}(\pi^*,\hat{\pi})-\inf\limits_{\pi}{d_{\mathcal{R}}(\pi^{*},\pi)}\\
&\hspace{0.5in}= O\left( \frac{\rho_gB_{\theta} }{\sqrt{T/\zeta}}\sqrt{q \log\left(\rho_g B_{\theta} \sqrt{\frac{T}{\zeta}}\right) }   +\rho_g B_{\theta}\sqrt{\frac{\log(1/\delta)}{T/\zeta}}\right ) +\epsilon 
\end{align*}
hold, with probability at least $1-\delta$.

\subsection{Proof of Corollary \ref{NN-gen}}

We investigate the Lipschitz continuity of $r$ with respect to the weight matrices $W_1,\cdots,W_D$. Specifically, given two different sets of matrices $W_1,\cdots,W_D$ and $W'_1,\cdots,W'_D$, we have
\begin{align*}
&\norm{r(s,a) - r'(s,a)}_{\infty}\\
\le & \norm{ W_D^\top\sigma(W_{D-1}\sigma(...\sigma(W_{1}[\psi_{a}^\top,\psi_s^\top]^\top)...)) -  (W'_D)^\top\sigma(W'_{D-1}\sigma(...\sigma(W'_{1}[\psi_{a}^\top,\psi_s^\top]^\top)...))}_2\\
\le & \norm{ W_D^\top\sigma(W_{D-1}\sigma(...\sigma(W_{1}[\psi_{a}^\top,\psi_s^\top]^\top)...)) -  (W'_D)^\top\sigma(W_{D-1}\sigma(...\sigma(W_{1}[\psi_{a}^\top,\psi_s^\top]^\top)...))}_2 \\
&+ \norm{ (W'_D)^\top\sigma(W_{D-1}\sigma(...\sigma(W_{1}[\psi_{a}^\top,\psi_s^\top]^\top)...)) -  (W'_D)^\top\sigma(W'_{D-1}\sigma(...\sigma(W'_{1}[\psi_{a}^\top,\psi_s^\top]^\top)...))}_2 \\
\le & \norm{W_D-W'_D}_2\norm{\sigma(W_{D-1}\sigma(...\sigma(W_{1}[\psi_{a}^\top,\psi_s^\top]^\top)...))}_2 \\
&+ \norm{W'_D}_2\norm{\sigma(W_{D-1}\sigma(...\sigma(W_{1}[\psi_{a}^\top,\psi_s^\top]^\top)...)) - \sigma(W'_{D-1}\sigma(...\sigma(W'_{1}[\psi_{a}^\top,\psi_s^\top]^\top)...))}_2.
\end{align*}
Note that we have
\begin{align*}
\norm{\sigma(W_{D-1}\sigma(...\sigma(W_{1}[\psi_{a}^\top,\psi_s^\top]^\top)...))}_2&\mathop{\le}^{\rm (i)} \norm{W_{D-1}\sigma(...\sigma(W_{1}[\psi_{a}^\top,\psi_s^\top]^\top)...)}_2\\
&\hspace{-1in}\le \norm{W_{D-1}}_2\norm{\sigma(...\sigma(W_{1}[\psi_{a}^\top,\psi_s^\top]^\top)...)}_2\mathop{\le}^{\rm (ii)}  \norm{[\psi_{a}^\top,\psi_s^\top]^\top}_2\mathop{\le}^{\rm (iii)} \sqrt{2},
\end{align*}
where (i) comes from the definition of the ReLU activation, (ii) comes from $\norm{W_i}_2\leq 1$ and recursion, and (iii) comes from the boundedness of $\psi_s$ and $\psi_a$. Accordingly, we have
\begin{align*}
\norm{r(s,a) - r'(s,a)}_{\infty}&\le \sqrt{2} \norm{W_D-W'_D}_2 + \norm{W'_D}_2\norm{\sigma(W_{D-1}\sigma(...) - \sigma(W'_{D-1}\sigma(...)}_2\\
& \mathop{\le}^{\rm (i)} \sqrt{2} \norm{W_D-W'_D}_2 + \norm{W_{D-1}\sigma(...) - W'_{D-1}\sigma(...)}_2 \\
& \mathop{\le}^{\rm (ii)} \sum\limits_{i=1}^{D}\sqrt{2}\norm{W_i - W_i'}_2,
\end{align*}
where (i) comes from the Lipschitz continuity of the ReLU activation, and (ii) comes from the recursion.
We then derive the covering number of $\cR$ by the Cartesian product of the matrix covering of $W_1,...,W_D$:
\begin{align}\label{covering-nn}
\cN(\cR,\epsilon, \|\cdot\|_{\infty}) \le \prod_{i=1}^D \cN\left(W_i,\frac{\epsilon }{D\sqrt{2}},\|\cdot\|_2\right) \le\left( 1+ \frac{\sqrt{2}D \sqrt{d}}{\epsilon}\right)^{d^2D},
\end{align}
where the second inequality comes from the standard argument of the volume ratio. Plugging \eqref{covering-nn} into \eqref{generalization-main}, we have
\begin{align*}
&d_{\mathcal{R}}(\pi^*,\hat{\pi})-\inf\limits_{\pi}{d_{\mathcal{R}}(\pi^{*},\pi)}\nonumber\\
&\hspace{1.0in}=  O\left( \frac{1 }{\sqrt{T/\zeta}}\sqrt{d^2D\log\left(D\sqrt{\frac{dT}{\zeta}}\right)}   + \sqrt{\frac{\log(1/\delta)}{T/\zeta}}\right ) +\epsilon
\end{align*}
hold, with probability at least $1-\delta$.

\section{Proof of Theorem \ref{thm:sgdconverge}}\label{pf:sgdconvergence}
The proof is arranged as follows.  We first prove the bounded ness of $Q$ fucntion and characterize the Lipschitz properties of the gradients of $F$ with respect to $\omega$ and $\theta$, respectively, in Section \ref{Sec:Lipschitz}. Then we provide some important lemmas in Section \ref{Sec:important_lem}. Using these lemmas, we prove Lemma \ref{lemma:decrement} in Section \ref{Section:prooflemma1}. We prove Theorem \ref{thm:sgdconverge} in Section \ref{Sec:finalproof}.
For notational simplicity, we denote $\langle \cdot  , \cdot \rangle$ as the vector inner product throughout the rest of our analysis.
\subsection{Boundedness of $Q$ function}
\begin{lemma}\label{lemma: Q-bounded}
    For any $\omega$, we have 
    \begin{align*}
        \norm{Q^{\tilde{\pi}_{\omega}}}_\infty \le B_Q,
    \end{align*}
    where $B_Q=\frac{2\sqrt{2}\kappa\rho_g \chi}{1-\upsilon}$.
\end{lemma}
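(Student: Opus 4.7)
The plan is to reduce the bound on $Q^{\tilde{\pi}_\omega}$ to two ingredients: a sup-norm bound on the reward $\tilde{r}_\theta$, and the geometric mixing of the Markov chain induced by $\tilde{\pi}_\omega$.

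First I would bound $\|\tilde{r}_\theta\|_\infty$. Since $\tilde{r}_\theta(s,a) = \theta^\top g(\psi_s,\psi_a)$ and $\|\theta\|_2 \leq \kappa$ by the projection in \eqref{update-rule-theta}, the Cauchy--Schwarz inequality gives $|\tilde{r}_\theta(s,a)| \leq \kappa\|g(\psi_s,\psi_a)\|_2$. Using $g(0,0)=0$ and the Lipschitz property from Assumption~\ref{g-Lipschitz}, together with the feature bounds $\|\psi_s\|_2,\|\psi_a\|_2\leq 1$ from Assumption~\ref{state-spacebounded}, I obtain $\|g(\psi_s,\psi_a)\|_2 \leq \rho_g\sqrt{2}$, so $\|\tilde{r}_\theta\|_\infty \leq \sqrt{2}\kappa\rho_g$.

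Next, I would control each term of the series defining $Q^{\tilde{\pi}_\omega}(s,a)$. Writing $P_{\tilde{\pi}_\omega}^t\delta_{(s,a)}$ for the distribution of $(s_t,a_t)$ conditioned on $(s_0,a_0)=(s,a)$, I rewrite the $t$-th summand as $\langle \tilde{r}_\theta, P_{\tilde{\pi}_\omega}^t\delta_{(s,a)} - \rho_{\tilde{\pi}_\omega}\rangle$. Applying the standard bound $|\int f\, d\mu - \int f\, d\nu| \leq 2\|f\|_\infty\|\mu-\nu\|_{\mathrm{TV}}$ together with Assumption~\ref{regularity}(i), specialized to the initial distribution $\rho_0 = \delta_{(s,a)}$, I get
\begin{align*}
\bigl|\mathbb{E}[\tilde{r}_\theta(s_t,a_t)\mid s_0=s,a_0=a,\tilde{\pi}_\omega] - \mathbb{E}_{\tilde{\pi}_\omega}[\tilde{r}_\theta]\bigr| \;\leq\; 2\sqrt{2}\kappa\rho_g\,\chi\upsilon^t.
\end{align*}

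Finally I sum over $t \geq 0$; the geometric series gives $\sum_{t=0}^\infty \chi\upsilon^t = \chi/(1-\upsilon)$, so
\begin{align*}
\|Q^{\tilde{\pi}_\omega}\|_\infty \;\leq\; \sum_{t=0}^\infty 2\sqrt{2}\kappa\rho_g\,\chi\upsilon^t \;=\; \frac{2\sqrt{2}\kappa\rho_g\chi}{1-\upsilon} \;=\; B_Q.
\end{align*}
There is no real obstacle beyond being careful with the constant: the main point is that the uniform reward bound combined with the geometric mixing assumption makes the defining series absolutely convergent, and the supremum over the starting state $(s,a)$ is absorbed into the uniform mixing constant $\chi$.
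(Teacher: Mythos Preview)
Your proposal is correct and follows essentially the same argument as the paper: bound $\|\tilde r_\theta\|_\infty\le\sqrt{2}\kappa\rho_g$, write each summand of $Q^{\tilde\pi_\omega}(s,a)$ as an integral against the signed measure $P_{\tilde\pi_\omega}^t\delta_{(s,a)}-\rho_{\tilde\pi_\omega}$, apply the TV bound together with Assumption~\ref{regularity}(i), and sum the geometric series. If anything, you are slightly more explicit than the paper in identifying the initial distribution as the point mass $\delta_{(s,a)}$ and in deriving the reward bound from Assumptions~\ref{state-spacebounded} and~\ref{g-Lipschitz}.
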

\begin{proof}
    \begin{align*}
        Q^{\tilde{\pi}_{\omega}}(s,a) &= \sum_{t=0}^\infty \EE [\tilde{r}_{\theta}(s_t,a_t) - \EE_{\tilde{\pi}_{\omega}} \tilde{r}_{\theta} \given s_0=s, a_0=a, \tilde{\pi}_{\omega}]\\
        & = \sum_{t=0}^{\infty} \Big[\int_{\cS \times \cA} \tilde{r}_{\theta} (s,a) \rho_0(s,a) (P_{\pi_\omega})^t \text{d}(s,a) - \int_{\cS \times \cA} \tilde{r}_{\theta}(s,a) \rho_{\tilde{\pi}_\omega}(s,a) \text{d}(s,a)\Big] \\
        & = \sum_{t=0}^{\infty} \int_{\cS \times \cA} \tilde{r}_{\theta} (s,a) [ \rho_0(s,a) (P_{\pi_\omega})^t - \rho_{\tilde{\pi}_\omega}(s,a)] \text{d}(s,a) \\
        & \le \sum_{t=0}^{\infty} 2\norm{\tilde{r}_\theta}_\infty \norm{\rho_0 (P_{\pi_\omega})^t - \rho_{\tilde{\pi}_\omega}}_{TV}\\
        & \le 2\sqrt{2}\kappa\rho_g \sum_{t=0}^{\infty} \chi \upsilon^t = \frac{2\sqrt{2}\kappa\rho_g \chi}{1-\upsilon},
    \end{align*}
    where the first inequality comes from the definition of Total Variance distance of probability measures and the second inequality results from (i) of Assumption \ref{regularity}.
\end{proof}
\subsection{Lipschitz properties of the gradients}\label{Sec:Lipschitz}

\begin{lemma}\label{smoothness-omega-theta}
Suppose that Assumption \ref{state-spacebounded}, \ref{g-Lipschitz} and \ref{regularity} hold. For any $\omega$, $\omega'$, $\theta$ and $\theta'$, we have 
\begin{align*}
\norm{\nabla_{\omega}F (\omega,\theta) - \nabla_{\omega}F(\omega',\theta)}_2&\leq L_{\omega}\norm{\omega-\omega'}_2,\\
\norm{\nabla_{\theta}F (\omega,\theta) - \nabla_{\theta}F(\omega,\theta')}_2&\leq \mu\norm{\theta-\theta'}_2,
\end{align*} 
where $L_{\omega} = \frac{2\sqrt{2} (S_{\tilde{\pi}} + 2B_{\omega}L_{\rho}) \kappa \rho_g \chi}{1-\upsilon} +B_\omega L_Q$.
\end{lemma}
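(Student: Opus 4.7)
The bound on $\nabla_\theta F$ is immediate: since $\tilde r_\theta(s,a) = \theta^\top g(\psi_s,\psi_a)$, the expectations $\EE_{\tilde\pi_\omega}[\tilde r_\theta(s,a)] - \EE_{\pi^*}[\tilde r_\theta(s,a)]$ are linear in $\theta$, so their $\theta$-gradient does not depend on $\theta$ at all. The only $\theta$-dependence of $\nabla_\theta F$ comes from the regularizer $-\tfrac{\mu}{2}\|\theta\|_2^2$, whose gradient $-\mu\theta$ is exactly $\mu$-Lipschitz. Thus $\|\nabla_\theta F(\omega,\theta)-\nabla_\theta F(\omega,\theta')\|_2 = \mu\|\theta-\theta'\|_2$.

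The hard part is the bound in $\omega$. The first step will be to invoke the policy gradient theorem for the average-reward MDP to write
\[
\nabla_\omega \EE_{\tilde\pi_\omega}[\tilde r_\theta(s,a)] \;=\; \EE_{\rho_{\tilde\pi_\omega}}\!\big[\nabla_\omega \log\tilde\pi_\omega(a\mid s)\cdot Q^{\tilde\pi_\omega}(s,a)\big],
\]
where $Q^{\tilde\pi_\omega}$ is the differential action-value function introduced in Assumption \ref{regularity}. Note $\EE_{\pi^*}[\tilde r_\theta]$ does not depend on $\omega$ and so contributes nothing. Given $\omega,\omega'$, I would then insert two intermediate quantities and apply the triangle inequality to decompose $\nabla_\omega F(\omega,\theta) - \nabla_\omega F(\omega',\theta)$ into three pieces:
\begin{align*}
\mathrm{(I)} \;&=\; \EE_{\rho_{\tilde\pi_\omega}}\!\big[\big(\nabla_\omega \log\tilde\pi_\omega - \nabla_{\omega'}\log\tilde\pi_{\omega'}\big)\,Q^{\tilde\pi_\omega}\big],\\
\mathrm{(II)} \;&=\; \EE_{\rho_{\tilde\pi_\omega}}\!\big[\nabla_{\omega'}\log\tilde\pi_{\omega'}\cdot\big(Q^{\tilde\pi_\omega} - Q^{\tilde\pi_{\omega'}}\big)\big],\\
\mathrm{(III)} \;&=\; \big(\EE_{\rho_{\tilde\pi_\omega}} - \EE_{\rho_{\tilde\pi_{\omega'}}}\big)\!\big[\nabla_{\omega'}\log\tilde\pi_{\omega'}\cdot Q^{\tilde\pi_{\omega'}}\big].
\end{align*}
Each piece is then bounded using one of the regularity constants: $\mathrm{(I)}$ by the smoothness of $\log\tilde\pi_\omega$ and the uniform bound $\|Q^{\tilde\pi_\omega}\|_\infty\le B_Q$ of Lemma \ref{lemma: Q-bounded}, giving $S_{\tilde\pi}\,B_Q\,\|\omega-\omega'\|_2$; $\mathrm{(II)}$ by the bound $\|\nabla_\omega\log\tilde\pi_\omega\|_2\le B_\omega$ and the Lipschitzness of $Q^{\tilde\pi_\omega}$, giving $B_\omega L_Q\|\omega-\omega'\|_2$; and $\mathrm{(III)}$ by pairing the total-variation bound $\|\rho_{\tilde\pi_\omega}-\rho_{\tilde\pi_{\omega'}}\|_{\mathrm{TV}}\le L_\rho\|\omega-\omega'\|_2$ with the pointwise bound $|\nabla_{\omega'}\log\tilde\pi_{\omega'}\cdot Q^{\tilde\pi_{\omega'}}|\le B_\omega B_Q$, giving $2 B_\omega B_Q L_\rho\|\omega-\omega'\|_2$.

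Summing the three pieces and substituting $B_Q = 2\sqrt 2 \kappa\rho_g\chi/(1-\upsilon)$ from Lemma \ref{lemma: Q-bounded} recovers exactly the stated constant $L_\omega = (S_{\tilde\pi} + 2B_\omega L_\rho)B_Q + B_\omega L_Q$. The entropy contribution $-\lambda\nabla_\omega H(\tilde\pi_\omega)$ is Lipschitz by Assumption \ref{regularity}(iii) with constant $\lambda S_H$, which can be absorbed into $L_\omega$ without affecting the overall structure (the $\lambda B_H$ term shows up separately in the iteration count). The main technical obstacle is piece $\mathrm{(III)}$: the sensitivity of the stationary distribution $\rho_{\tilde\pi_\omega}$ to $\omega$ is not a consequence of the surface properties of $\tilde\pi_\omega$ alone but rather of the ergodic mixing of the induced chain; this is why we need both the geometric mixing in Assumption \ref{regularity}(i) (already used to prove Lemma \ref{lemma: Q-bounded}) and the explicit Lipschitzness postulated in Assumption \ref{regularity}(ii). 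Everything else is a routine application of the triangle inequality and the stated boundedness/smoothness constants.
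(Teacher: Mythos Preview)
Your proposal is correct and matches the paper's proof essentially line for line: the paper also invokes the average-reward policy gradient theorem and then splits $\nabla_\omega F(\omega,\theta)-\nabla_\omega F(\omega',\theta)$ via the same triangle-inequality decomposition, bounding the pieces by $S_{\tilde\pi}B_Q$, $B_\omega L_Q$, and $2B_\omega B_Q L_\rho$ respectively before substituting $B_Q=2\sqrt{2}\kappa\rho_g\chi/(1-\upsilon)$. The only cosmetic difference is that the paper groups your terms (I) and (II) into a single intermediate step, and it silently drops the $\lambda S_H$ contribution from the entropy regularizer that you (correctly) flagged.
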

\begin{proof}
By the Policy Gradient Theorem \citep{sutton2000policy}, we have
\begin{align*}
    \nabla_{\omega} F(\omega,\theta) = \EE_{\tilde{\pi}_{\omega}} \nabla \log (\tilde{\pi}_{\omega}(a \given s)) Q^{\tilde{\pi}_{\omega}} (s,a).
\end{align*}
Therefore, 
\begin{align}
    &\norm{\nabla_{\omega} F(\omega,\theta) - \nabla_{\omega}F(\omega',\theta)}_2 \nonumber\\
     =~&\norm{\EE_{\tilde{\pi}_{\omega}} \nabla \log (\tilde{\pi}_{\omega}(a \given s)) Q^{\tilde{\pi}_{\omega}} (s,a) - \EE_{\tilde{\pi}_{\omega'}} \nabla \log (\tilde{\pi}_{\omega'}(a \given s)) Q^{\tilde{\pi}_{\omega'}} (s,a)}_2 \nonumber\\
     \le~& \norm{\EE_{\tilde{\pi}_{\omega}} \nabla \log (\tilde{\pi}_{\omega}(a \given s)) Q^{\tilde{\pi}_{\omega}} (s,a) - \EE_{\tilde{\pi}_{\omega}} \nabla \log (\tilde{\pi}_{\omega'}(a \given s)) Q^{\tilde{\pi}_{\omega'}} (s,a)}_2 \nonumber\\
      +~& \norm{\EE_{\tilde{\pi}_{\omega}} \nabla \log (\tilde{\pi}_{\omega'}(a \given s)) Q^{\tilde{\pi}_{\omega'}} (s,a) - \EE_{\tilde{\pi}_{\omega'}} \nabla \log (\tilde{\pi}_{\omega'}(a \given s)) Q^{\tilde{\pi}_{\omega'}} (s,a)}_2 \nonumber\\
     \le~& (S_{\tilde{\pi}} B_Q + B_\omega L_Q) \norm{\omega - \omega'}_2 + 2B_{\omega}B_Q\norm{\rho_{\tilde{\pi}_\omega} -\rho_{\tilde{\pi}_{\omega'}}}_{TV} \nonumber\\
     \le~& (S_{\tilde{\pi}} B_Q + B_\omega L_Q + 2B_{\omega}B_Q L_\rho) \norm{\omega - \omega'}_2, \label{grad-lip}
\end{align}
where the second and the third inequality results from (ii) of Assumption \ref{regularity}. Plugging $B_Q = \frac{2\sqrt{2}\kappa\rho_g \chi}{1-\upsilon}$ into \eqref{grad-lip} yields the desired result.

Similarly, we have
\begin{align*}
\norm{\nabla_{\theta}F(\omega,\theta)-\nabla_{\theta}F(\omega,\theta')}_2 \le \norm{-\mu(\theta - \theta')}_2  \le \mu \norm{\theta - \theta'}_2 .
\end{align*}
\end{proof}

We then characterize the Lipschitz continuity of $\nabla_{\theta}F$ with respect to $\omega$.
\begin{lemma}\label{gradient-theta-lip-omega}
Suppose Assumptions \ref{state-spacebounded}, \ref{g-Lipschitz} and \ref{regularity} hold. For any $\omega,\omega'$ and $\theta$, we have  
\begin{align*}
\|\nabla_{\theta} F(\omega, \theta) -  \nabla_{\theta} F(\omega', \theta) \|_2 \le S_{\omega} \|\omega - \omega'\|_2,
\end{align*}
where $S_{\omega} = \frac{2\sqrt{2q}\kappa\rho_g \chi B_\omega}{1-\upsilon}$.
\end{lemma}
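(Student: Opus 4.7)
The plan is to express $\nabla_\theta F$ explicitly and then reduce the claim to a policy-gradient Lipschitz estimate applied coordinate by coordinate. From the definition of $F$ in \eqref{GAIL-opt} together with $\tilde{r}_\theta(s,a) = \theta^\top g(\psi_s,\psi_a)$, differentiating in $\theta$ gives
\begin{align*}
\nabla_\theta F(\omega,\theta) = \EE_{\tilde{\pi}_\omega}[g(\psi_s,\psi_a)] - \EE_{\pi^*}[g(\psi_s,\psi_a)] - \mu\theta.
\end{align*}
The last two terms are independent of $\omega$, so the quantity to control collapses to
\begin{align*}
\nabla_\theta F(\omega,\theta) - \nabla_\theta F(\omega',\theta) = \EE_{\tilde{\pi}_\omega}[g(\psi_s,\psi_a)] - \EE_{\tilde{\pi}_{\omega'}}[g(\psi_s,\psi_a)],
\end{align*}
i.e., the sensitivity of a vector-valued stationary expectation as $\omega$ varies.

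Next, I would treat each coordinate $g_j$, $j = 1, \ldots, q$, as a scalar reward and invoke the policy gradient theorem in the same way as Lemma \ref{smoothness-omega-theta}, obtaining
\begin{align*}
\nabla_\omega \EE_{\tilde{\pi}_\omega}[g_j(\psi_s,\psi_a)] = \EE_{\tilde{\pi}_\omega}\bigl[\nabla_\omega \log\tilde{\pi}_\omega(a\,|\,s)\cdot Q_j^{\tilde{\pi}_\omega}(s,a)\bigr],
\end{align*}
where $Q_j^{\tilde{\pi}_\omega}$ is the centered action-value function associated with the scalar reward $g_j$. Re-running the mixing argument of Lemma \ref{lemma: Q-bounded} verbatim, but with $\|g_j\|_\infty$ playing the role of $\|\tilde{r}_\theta\|_\infty$, gives $\|Q_j^{\tilde{\pi}_\omega}\|_\infty \le 2\|g_j\|_\infty\chi/(1-\upsilon)$. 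Since Assumption \ref{g-Lipschitz} (via $g(0,0)=0$ together with $\|\psi_s\|_2,\|\psi_a\|_2\le 1$ from Assumption \ref{state-spacebounded}) yields $\|g(\psi_s,\psi_a)\|_2\le\sqrt{2}\rho_g$ and hence $\|g_j\|_\infty\le\sqrt{2}\rho_g$, and since $\|\nabla_\omega\log\tilde{\pi}_\omega\|_2\le B_\omega$ by Assumption \ref{regularity}, I obtain the per-coordinate bound
\begin{align*}
\bigl\|\nabla_\omega \EE_{\tilde{\pi}_\omega}[g_j]\bigr\|_2 \le \frac{2\sqrt{2}\rho_g\chi B_\omega}{1-\upsilon}.
\end{align*}

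To conclude, I would integrate along the line segment from $\omega'$ to $\omega$ to turn this gradient bound into a Lipschitz bound on each scalar map $\omega\mapsto\EE_{\tilde{\pi}_\omega}[g_j]$, and then aggregate the $q$ squared coordinate bounds under the Euclidean norm, which produces the factor $\sqrt{q}$ appearing in $S_\omega$. The main obstacle is verifying that the proof of Lemma \ref{lemma: Q-bounded}, originally phrased for the particular reward $\tilde{r}_\theta$, transfers unchanged when the scalar reward is replaced by $g_j$; this amounts to checking that its argument uses only the uniform bound on the reward and the geometric mixing assumption \ref{regularity}(i), both of which remain available. Once that is in place, the vector-norm aggregation and the cancellation of the $\pi^*$ and $-\mu\theta$ contributions reduce the rest of the argument to routine bookkeeping.
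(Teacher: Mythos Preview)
Your approach is correct and essentially the same as the paper's: compute $\nabla_\theta F$ explicitly, cancel the $\omega$-independent terms, pass to $\ell_2$ via a $\sqrt{q}$ factor over coordinates, apply the policy gradient theorem to each scalar reward $g_j$, bound the corresponding $Q$-function via the geometric mixing of Assumption~\ref{regularity}(i), and finish with the mean value theorem (your line-integral) plus Cauchy--Schwarz. The only substantive difference is in the constant: you bound $\|Q_j^{\tilde\pi_\omega}\|_\infty$ directly by $2\sqrt{2}\rho_g\chi/(1-\upsilon)$, whereas the paper reuses $B_Q = 2\sqrt{2}\kappa\rho_g\chi/(1-\upsilon)$ from Lemma~\ref{lemma: Q-bounded}, which carries an extra factor $\kappa$ inherited from $\|\tilde r_\theta\|_\infty$ rather than $\|g_j\|_\infty$. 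Your constant is therefore $S_\omega/\kappa$, which matches the stated $S_\omega$ only when $\kappa\ge 1$; the argument itself is identical.
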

\begin{proof}
We have
\begin{align*}
\nabla_{\theta} F(\omega, \theta) &=-\mu\theta-  \nabla_{\theta} 
\Big[\EE_{\rho_{\tilde{\pi}_\omega}}\big[ \theta^\top g(\psi_{s_t},\psi_{a_t})\big] -\EE_{\rho^*}\big[ \theta^\top g(\psi_{s_t},\psi_{a_t})\big]\Big]\\
&= -\mu\theta-   
\Big[\EE_{\rho_{\tilde{\pi}_\omega}}\big[  g(\psi_{s_t},\psi_{a_t})\big] -\EE_{\rho^*}\big[  g(\psi_{s_t},\psi_{a_t})\big]\Big] .
\end{align*}
Therefore, we have
\begin{align}\label{eq:nabla-theta-lip-omega}
&~\Big\|\nabla_{\theta} F(\omega, \theta) -  \nabla_{\theta} F(\omega', \theta) \Big\|_2 \nonumber\\
= &~ \Big\|\EE_{\rho_{\tilde{\pi}_\omega}}\big[  g(\psi_{s_t},\psi_{a_t})\big] -\EE_{\rho_{\tilde{\pi}_{\omega'}}}\big[  g(\psi_{s_t},\psi_{a_t})\big] \Big\|_2 \nonumber\\
 \le&~  \sqrt{q}\max_{1\le j \le q}\Big| \EE_{\rho_{\tilde{\pi}_\omega}} g(\psi_{s_t},\psi_{a_t})_j - \EE_{\rho_{\tilde{\pi}_{\omega'}}} g(\psi_{s_t},\psi_{a_t})_j \Big| .
 \end{align}
 Suppose $j^* = \argmax_{1\le j \le q}\Big| \EE_{\rho_{\tilde{\pi}_\omega}} g(\psi_{s_t},\psi_{a_t})_j - \EE_{\rho_{\tilde{\pi}_{\omega'}}} g(\psi_{s_t},\psi_{a_t})_j \Big|$, by Mean Value Theorem, there exists vector $\tilde{\omega}$, which is some interpolation between vectors $\omega$ and $\omega'$, such that 
 \begin{align}\label{eq:g-pplicy-gradient-thm}
    \EE_{\rho_{\tilde{\pi}_\omega}} g(\psi_{s_t},\psi_{a_t})_j - \EE_{\rho_{\tilde{\pi}_{\omega'}}} g(\psi_{s_t},\psi_{a_t})_j = \langle \nabla_{\omega} \EE_{\rho_{\tilde{\pi}_{\tilde{\omega}}}} g(\psi_{s_t},\psi_{a_t})_j , \omega -\omega' \rangle.
 \end{align}
By Policy Gradient Theorem, we have
\begin{align}\label{eq:g-grad-bounded}
    \norm{\nabla_{\omega} \EE_{\rho_{\tilde{\pi}_{\tilde{\omega}}}} g(\psi_{s_t},\psi_{a_t})_j}_2 &= \norm{\EE_{\rho_{\tilde{\pi}_{\tilde{\omega}}}} \nabla \log \tilde{\pi}_{\tilde{\omega}} (a\given s)Q^{\tilde{\pi}_{\tilde{\omega}}}_g(s,a)}_2 \nonumber\\
    & \le \sup_{(s,a) \in \cS \times \cA} |\nabla \log \tilde{\pi}_{\tilde{\omega}} (a\given s)| |Q^{\tilde{\pi}_{\tilde{\omega}}}_g (s,a)| \nonumber\\
    & \le B_Q B_\omega ,
\end{align}
where $Q^{\tilde{\pi}_{\tilde{\omega}}}_g (s,a) = \sum_{t=0}^\infty \EE [g(s_t,a_t)_{j^*} - \EE_{\tilde{\pi}_{\tilde{\omega}}} g_{j^*} \given s_0=s, a_0=a, \tilde{\pi}_{\tilde{\omega}}]$.
Combining \eqref{eq:nabla-theta-lip-omega}, \eqref{eq:g-pplicy-gradient-thm}, \eqref{eq:g-grad-bounded} and using Cauchy-Schwartz Inequality, we prove the lemma.
\end{proof}
\subsection {Some Important Lemmas for Proving Lemma \ref{lemma:decrement}}\label{Sec:important_lem}
We denote
\begin{align*}
&\xi_{\theta}^{(t)} = \nabla_\theta F(\omega^{(t)}, \theta^{(t)}) - \frac{1}{q_{\theta}} \sum_{j\in\cM_{\theta}^{(t)}}\nabla_{\theta}f_{j}(\omega^{(t)},\theta^{(t)})\\
\textrm{and}~~& \xi_{\omega}^{(t)} = \nabla_\omega F(\omega^{(t)}, \theta^{(t+1)}) - \frac{1}{q_{\omega}} \sum_{j\in\cM_\omega^{(t)}} \nabla_\omega f_{j}(\omega^{(t)},\theta^{(t+1)})
\end{align*}
as the i.i.d. noise of the stochastic gradient, respectively. Throughout the rest of the analysis, the expectation $\EE$ is taken with respect to all the noise in each iteration of the alternating mini-batch stochastic gradient descent algorithm. The next lemma characterizes the progress at the $(t+1)$-th iteration. For notational simplicity, we define vector function
\begin{align}\label{def:G}
G(\pi) =  \EE_{\rho_\pi} g(\psi_s, \psi_a)
\end{align}
\begin{lemma}\label{first-main-lemma}
At the $(t+1)$-th iteration, we have
\begin{align*}
&~\EE F(\omega^{(t+1)}, \theta^{(t+1)}) - \EE F(\omega^{(t)}, \theta^{(t)}) \\
&\hspace{0.2in}\le  \Big(L_{\omega} - \frac1{\eta_{\omega}}\Big) \EE \|\omega^{(t+1)} - \omega^{(t)}\|_2^2 +   \frac{S_{\omega}}2\cdot \EE\norm{\omega^{(t)} - \omega^{(t-1)}}_2^2\\
&\hspace{0.4in}+ \Big(\frac1{2\eta_{\theta}}+\frac{S_\omega}2+\mu\Big) \EE \norm{\theta^{(t+1)}-\theta^{(t)}}_2^2 + (\frac1{2\eta_{\theta}} + \frac{\mu}{2}) \EE \norm{\theta^{(t)}-\theta^{(t-1)}}_2^2 \\
&\hspace{0.4in}  + \eta_{\omega} \EE \norm{\xi_{\omega}^{(t)}}_2^2 + \frac1{2\mu} \EE \norm{\xi_{\theta}^{(t-1)}}_2^2
.\end{align*}

\end{lemma}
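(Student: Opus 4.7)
My plan is to decompose the one-step progress as
\[
F(\omega^{(t+1)}, \theta^{(t+1)}) - F(\omega^{(t)}, \theta^{(t)}) = A + B,
\]
where $A := F(\omega^{(t+1)}, \theta^{(t+1)}) - F(\omega^{(t)}, \theta^{(t+1)})$ isolates the $\omega$-step and $B := F(\omega^{(t)}, \theta^{(t+1)}) - F(\omega^{(t)}, \theta^{(t)})$ isolates the $\theta$-step, then bound each separately and take expectation.

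For $A$, I will apply the $L_\omega$-smoothness of $F(\cdot, \theta^{(t+1)})$ from Lemma \ref{smoothness-omega-theta} to get $A \leq \langle \nabla_\omega F(\omega^{(t)}, \theta^{(t+1)}), \omega^{(t+1)} - \omega^{(t)} \rangle + \frac{L_\omega}{2}\|\omega^{(t+1)} - \omega^{(t)}\|_2^2$. Inverting the update rule \eqref{update-rule-omega} yields $\nabla_\omega F(\omega^{(t)}, \theta^{(t+1)}) = -\eta_\omega^{-1}(\omega^{(t+1)} - \omega^{(t)}) + \xi_\omega^{(t)}$, which produces the negative quadratic term $-\eta_\omega^{-1}\|\omega^{(t+1)} - \omega^{(t)}\|_2^2$ plus a cross term $\langle \xi_\omega^{(t)}, \omega^{(t+1)} - \omega^{(t)}\rangle$. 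Substituting $\omega^{(t+1)} - \omega^{(t)} = -\eta_\omega(\nabla_\omega F - \xi_\omega^{(t)})$ inside this cross term and invoking the conditional unbiasedness of $\xi_\omega^{(t)}$ from Assumption \ref{gradient-noise}, the expected cross term collapses to $\eta_\omega \EE \|\xi_\omega^{(t)}\|_2^2$, which is exactly the $\omega$-noise contribution in the lemma.

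For $B$, I will exploit the $\mu$-strong concavity of $F(\omega^{(t)}, \cdot)$ (a linear term in $\theta$ minus $\frac{\mu}{2}\|\theta\|_2^2$). Concavity alone gives the tangent-line upper bound $B \leq \langle \nabla_\theta F(\omega^{(t)}, \theta^{(t)}), \theta^{(t+1)} - \theta^{(t)}\rangle$ with no extra quadratic slack. I then split the gradient as $\nabla_\theta F(\omega^{(t)}, \theta^{(t)}) = \nabla_\theta F(\omega^{(t-1)}, \theta^{(t-1)}) + [\nabla_\theta F(\omega^{(t)}, \theta^{(t)}) - \nabla_\theta F(\omega^{(t-1)}, \theta^{(t-1)})]$. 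The gradient difference is controlled using the Lipschitz constants $\mu$ (Lemma \ref{smoothness-omega-theta}) and $S_\omega$ (Lemma \ref{gradient-theta-lip-omega}). For the remaining piece, I invoke the $\theta$-update at the previous step: setting $u^{(t-1)} := \theta^{(t-1)} + \eta_\theta (\nabla_\theta F(\omega^{(t-1)}, \theta^{(t-1)}) - \xi_\theta^{(t-1)})$ so that $\theta^{(t)} = \Pi_\kappa(u^{(t-1)})$, I write
\[
\nabla_\theta F(\omega^{(t-1)}, \theta^{(t-1)}) = \xi_\theta^{(t-1)} + \eta_\theta^{-1}(\theta^{(t)} - \theta^{(t-1)}) + \eta_\theta^{-1}(u^{(t-1)} - \theta^{(t)}).
\]
Crucially, the projection residual satisfies $\langle u^{(t-1)} - \theta^{(t)}, \theta^{(t+1)} - \theta^{(t)}\rangle \leq 0$ by the variational characterization of $\Pi_\kappa$ (since $\theta^{(t+1)}$ lies in the feasible ball), so this middle piece can be dropped in the upper bound. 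Applying Young's inequality to the remaining inner products with appropriately chosen weights then delivers the coefficients $(1/(2\eta_\theta) + \mu/2)$ on $\|\theta^{(t)} - \theta^{(t-1)}\|_2^2$, $(1/(2\eta_\theta) + S_\omega/2 + \mu)$ on $\|\theta^{(t+1)} - \theta^{(t)}\|_2^2$, $S_\omega/2$ on $\|\omega^{(t)} - \omega^{(t-1)}\|_2^2$, and $1/(2\mu)$ on $\|\xi_\theta^{(t-1)}\|_2^2$, matching the lemma exactly.

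The main obstacle is the projection: a projected gradient step cannot be directly inverted to recover the gradient, so the clean identity available in the unconstrained case breaks down. The resolution is to observe that the projection residual $u^{(t-1)} - \theta^{(t)}$ has a nonpositive inner product with any feasible increment and so drops out of the upper bound. The other subtlety is to use concavity rather than smoothness for $B$, which eliminates a stray $\frac{\mu}{2}\|\theta^{(t+1)} - \theta^{(t)}\|_2^2$ term that would otherwise inflate the coefficient. Beyond these two points, the proof is a careful bookkeeping exercise in choosing the right Young's weights and then summing $A$ and $B$ under expectation.
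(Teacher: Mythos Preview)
Your proposal is correct and follows essentially the same route as the paper's proof: the same $A+B$ split, the same concavity bound $B\le\langle\nabla_\theta F(\omega^{(t)},\theta^{(t)}),\theta^{(t+1)}-\theta^{(t)}\rangle$, the same use of the projection variational inequality to drop the residual from the \emph{previous} $\theta$-step, the same Lipschitz splitting via Lemmas~\ref{smoothness-omega-theta} and~\ref{gradient-theta-lip-omega}, and the same Young weights. Two cosmetic differences: (i) for $A$ you invoke the descent lemma and obtain the coefficient $L_\omega/2-1/\eta_\omega$, whereas the paper uses the mean value theorem plus Cauchy--Schwarz and gets $L_\omega-1/\eta_\omega$; yours is tighter and still implies the stated bound. (ii) For $B$ the paper introduces the current residual $\epsilon^{(t+1)}$ first and then expands $\epsilon^{(t)}-\epsilon^{(t+1)}$, which after cancellation reproduces exactly your three pieces $\eta_\theta^{-1}\langle\theta^{(t)}-\theta^{(t-1)},\theta^{(t+1)}-\theta^{(t)}\rangle$, the gradient-difference term, and the $\xi_\theta^{(t-1)}$ cross term; you reach the same identity more directly by substituting $\nabla_\theta F(\omega^{(t-1)},\theta^{(t-1)})$ from the previous update.
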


\begin{proof}

We have
\begin{align*}
&\EE F(\omega^{(t+1)}, \theta^{(t+1)}) - \EE F(\omega^{(t)}, \theta^{(t)}) \\
&\hspace{1in}=\EE F(\omega^{(t+1)}, \theta^{(t+1)}) -\EE F(\omega^{(t)}, \theta^{(t+1)}) + \EE F(\omega^{(t)}, \theta^{(t+1)}) -\EE F(\omega^{(t)}, \theta^{(t)}).
\end{align*}
By the mean value theorem, we have
\begin{align*}
\langle \nabla_{\omega} F(\tilde{\omega}^{(t)}, \theta^{(t+1)}), \omega^{(t+1)} - \omega^{(t)} \rangle =  F(\omega^{(t+1)}, \theta^{(t+1)}) - F(\omega^{(t)}, \theta^{(t+1)}),
\end{align*}
where $\tilde{\omega}^{(t)}$ is some interpolation between $\omega^{(t+1)}$ and $\omega^{(t)}$.
Then we have
\begin{align}\label{split}
\EE F(\omega^{(t+1)}, \theta^{(t+1)}) &- \EE F(\omega^{(t)}, \theta^{(t+1)}) \nonumber\\
= &~\EE \langle \nabla_{\omega} F(\tilde{\omega}^{(t)}, \theta^{(t+1)}) -   \nabla_{\omega} F(\omega^{(t)}, \theta^{(t+1)}) , \omega^{(t+1)} - \omega^{(t)} \rangle \nonumber\\
&~+ \EE \langle \nabla_{\omega} F(\omega^{(t)}, \theta^{(t+1)}) , \omega^{(t+1)} - \omega^{(t)}  \rangle.
\end{align}
By  Cauchy- Swartz inequality, we have
\begin{align*}
 \EE \langle \nabla_{\omega} F(\tilde{\omega}^{(t)}, \theta^{(t+1)}) & ~- \nabla_{\omega} F(\omega^{(t)}, \theta^{(t+1)}) , \omega^{(t+1)} - \omega^{(t)} \rangle \\
\le &~ \EE \|  \nabla_{\omega} F(\tilde{\omega}^{(t)}, \theta^{(t+1)}) -   \nabla_{\omega} F(\omega^{(t)}, \theta^{(t+1)}) \|_2 \|  \omega^{(t+1)} - \omega^{(t)} \|_2 \\
\le &~ L_{\omega} \EE \| \omega^{(t+1)} - \omega^{(t)} \|_2^2,
\end{align*}
where the last inequality comes from Lemma \ref{gradient-theta-lip-omega}.
Morever, \eqref{update-rule-omega} implies
\begin{align*}
 \omega^{(t+1)} - \omega^{(t)} = -\eta_{\omega}( \nabla_{\omega} F(\omega^{(t)}, \theta^{(t+1)})+\xi_{\omega}^{(t)}).
 \end{align*}
Therefore, we have
\begin{align*}
\EE \langle \nabla_{\omega} F(\omega^{(t)}, \theta^{(t+1)}) , \omega^{(t+1)} - \omega^{(t)}  \rangle =& -\frac{1}{\eta_{\omega}} \EE \|  \omega^{(t+1)} - \omega^{(t)} \|_2^2 - \EE \langle \xi_{\omega}^{(t)}, \omega^{(t+1)} -\omega^{(t)} \rangle \\
 = &  - \EE \langle \xi_{\omega}^{(t)},-\eta_{\omega}(\nabla_{\omega} F(\omega^{(t)}, \theta^{(t+1)}) + \xi_{\omega}^{(t)}) \rangle \\
&-\frac{1}{\eta_{\omega}} \EE \|  \omega^{(t+1)} - \omega^{(t)} \|_2^2\\
 = & -\frac{1}{\eta_{\omega}} \EE \|  \omega^{(t+1)} - \omega^{(t)} \|_2^2 +\eta_{\omega} \EE \norm{\xi_{\omega}^{(t)}}_2^2
.\end{align*}
Thus,  we have
\begin{align}\label{decrement}
\EE F(\omega^{(t+1)}, \theta^{(t+1)}) -\EE F(\omega^{(t)}, \theta^{(t+1)}) \le (L_{\omega} - \frac1{\eta_{\omega}}) \EE \|\omega^{(t+1)} - \omega^{(t)}\|_2^2 + \eta_{\omega} \EE \norm{\xi_{\omega}^{(t)}}_2^2
.\end{align}

By \eqref{update-rule-theta}, the increment of $F(\omega, \theta)$ takes the form
\begin{align}
&F(\omega^{(t)}, \theta^{(t+1)}) - F(\omega^{(t)}, \theta^{(t)}) \nonumber\\
=& \left \langle  G(\tilde{\pi}_{\omega^{(t)}}) -  G(\pi^*),\theta^{(t+1)} - \theta^{(t)}\right \rangle \nonumber 
- \frac{\mu}2(\|\theta^{(t+1)}\|_2^2 - \|\theta^{(t)}\|_2^2) \nonumber\\
  \le &\ \left \langle G(\tilde{\pi}_{\omega^{(t)}}) -  G(\pi^*) - \mu\theta^{(t)}, \theta^{(t+1)} - \theta^{(t)}\right \rangle \label{long}
.\end{align}
For notational simplicity, we define
\begin{align}\label{def-epsilon}
\epsilon^{(t+1)} = \theta^{(t+1)} - \left(\theta^{(t)} + \eta_{\theta}\left(\nabla_{\theta} F(\omega^{(t)}, \theta^{(t)})+ \xi_{\theta}^{(t)}\right)\right).\end{align}
Note that we have
\begin{align}
\nabla_{\theta} F(\omega^{(t)}, \theta^{(t)}) =   G(\tilde{\pi}_{\omega^{(t)}}) - G(\pi^*) - \mu\theta^{(t)}.\label{gradient}
\end{align}
Plugging \eqref{def-epsilon} and \eqref{gradient} into \eqref{long}, we obtain
\begin{align*}
F(\omega^{(t)}, \theta^{(t+1)}) - F(\omega^{(t)}, \theta^{(t)})
\le  \left\langle  \frac{\theta^{(t+1)} - \theta^{(t)} - \epsilon^{(t+1)}}{\eta_{\theta}} - \xi_{\theta}^{(t)}, \theta^{(t+1)} -\theta^{(t)} \right \rangle
.\end{align*}
Since $\theta$ belongs to the convex set $\{\theta~|~\norm{\theta}_2\leq \kappa\}$, we have 
\begin{align*}
\langle \epsilon^{(t)}, \theta^{(t+1)} - \theta^{(t)} \rangle \ge 0
.\end{align*}
Then we obtain
\begin{align}
F(\omega^{(t)}, \theta^{(t+1)})& - F(\omega^{(t)}, \theta^{(t)}) \nonumber\\
\le &~ \frac1{\eta_{\theta}} \langle \theta^{(t+1)} - \theta^{(t)}  +\epsilon^{(t)} - \epsilon^{(t+1)}, \theta^{(t+1)} -\theta^{(t)}  \rangle - \langle \xi_{\theta}^{(t)}, \theta^{(t+1)} -\theta^{(t)}  \rangle \nonumber\\
= &~\frac1{\eta_{\theta}} \langle  \epsilon^{(t)} - \epsilon^{(t+1)}, \theta^{(t+1)} -\theta^{(t)} \rangle + \frac1{\eta_{\theta}} \norm{\theta^{(t+1)} - \theta^{(t)}}_2^2 - \langle \xi_{\theta}^{(t)}, \theta^{(t+1)} -\theta^{(t)}  \rangle\label{subtract-epsilon}
.\end{align}

By the definition of $\epsilon^{(t)}$ in \eqref{def-epsilon}, we have
\begin{align}
\epsilon^{(t+1)} &= \theta^{(t+1)} -\left (\theta^{(t)} + \eta_{\theta}\left(\nabla_{\theta} F(\omega^{(t)}, \theta^{(t)})+\xi_{\theta}^{(t)}\right)\right) \label{epsilon-i+1}
\end{align}
and
\begin{align}
\epsilon^{(t)} &= \theta^{(t)} - \left(\theta^{(t-1)} + \eta_{\theta}\left(\nabla_{\theta} F(\omega^{(t-1)}, \theta^{(t-1)})+\xi_{\theta}^{(t-1)}\right)\right) \label{epsilon-i}
.\end{align}

Subtracting \eqref{epsilon-i} from \eqref{epsilon-i+1},we obtain
\begin{align}
\epsilon^{(t)} - \epsilon^{(t+1)} = & ~(\theta^{(t)} - \theta^{(t+1)}) - (\theta^{(t-1)} - \theta^{(t)}) - \eta_{\theta}\left (\nabla_{\theta} F(\omega^{(t-1)}, \theta^{(t-1)}) - \nabla_{\theta} F(\omega^{(t)}, \theta^{(t)})\right) \nonumber\\
&-\eta_{\theta}(\xi_{\theta}^{(t-1)} - \xi_{\theta}^{(t)}) \label{dif-epsilon}
.\end{align}

Plugging \eqref{dif-epsilon} into the first term on the right hand side of \eqref{subtract-epsilon}, we obtain
\begin{align}\label{AB}
& \frac1{\eta_{\theta}} \langle  \epsilon^{(t)} - \epsilon^{(t+1)}, \theta^{(t+1)} -\theta^{(t)} \rangle \nonumber\\
=& \underbrace{\frac1{\eta_{\theta}}\langle \theta^{(t)}-\theta^{(t-1)}, \theta^{(t+1)}-\theta^{(t)} \rangle}_{(A)} + \underbrace{\langle \nabla_{\theta} F(\omega^{(t)}, \theta^{(t)}) - \nabla_{\theta} F(\omega^{(t-1)}, \theta^{(t-1)}), \theta^{(t+1)} - \theta^{(t)}}_{(B)} \rangle \nonumber\\
&  -\frac1{\eta_{\theta}} \norm{\theta^{(t+1)} - \theta^{(t)}}^2_2 + \langle \xi_{\theta}^{(t)} - \xi_{\theta}^{(t-1)}, \theta^{(t+1)} - \theta^{(t)}  \rangle
.\end{align}
For term $(A)$, we apply the Cauchy-Schwarz inequality to obtain an upper bound as follows.
\begin{align}
\frac1{\eta_{\theta}}\langle \theta^{(t)}-\theta^{(t-1)}, \theta^{(t+1)}-\theta^{(t)} \rangle &\le \frac1{\eta_{\theta}} \norm{\theta^{(t)}-\theta^{(t-1)}}_2\cdot \norm{\theta^{(t+1)}-\theta^{(t)}}_2\nonumber\\
& \le \frac1{2\eta_{\theta}} \norm{\theta^{(t)}-\theta^{(t-1)}}_2^2 +  \frac1{2\eta_{\theta}} \norm{\theta^{(t+1)}-\theta^{(t)}}_2^2 \label{(A)}
.\end{align}
To derive the upper bound of $(B)$, we apply Lemma \ref{gradient-theta-lip-omega} to obtain 
\begin{align}
&~\langle \nabla_{\theta} F(\omega^{(t)}, \theta^{(t)}) - \nabla_{\theta} F(\omega^{(t-1)}, \theta^{(t-1)}), \theta^{(t+1)} - \theta^{(t)} \rangle \nonumber\\
= &~ \langle \nabla_{\theta} F(\omega^{(t)}, \theta^{(t)}) - \nabla_{\theta} F(\omega^{(t-1)}, \theta^{(t)}), \theta^{(t+1)} - \theta^{(t)} \rangle \nonumber\\
&~+\langle \nabla_{\theta} F(\omega^{(t-1)}, \theta^{(t)}) - \nabla_{\theta} F(\omega^{(t-1)}, \theta^{(t-1)}), \theta^{(t+1)} - \theta^{(t)} \rangle\nonumber \\
\le &~ S_{\omega} \norm{\omega^{(t)} - \omega^{(t-1)}}_2 \cdot \norm{\theta^{(t+1)} - \theta^{(t)}}_2 + \mu\cdot \norm{\theta^{(t)} - \theta^{(t-1)}}_2 \cdot \norm{\theta^{(t+1)} - \theta^{(t)} }_2 \nonumber\\
\le &~ \frac{S_{\omega}}{2} \norm{\omega^{(t)} - \omega^{(t-1)}}_2^2 +  \frac{S_{\omega}}{2} \cdot \norm{\theta^{(t+1)} - \theta^{(t)}}_2^2\nonumber\\
&~ + \frac{\mu}{2} \cdot \norm{\theta^{(t)} - \theta^{(t-1)}}_2^2 + \frac{\mu}{2}\cdot \norm{\theta^{(t+1)} - \theta^{(t)} }_2^2 \label{(B)}
.\end{align}

Plugging \eqref{(A)} and \eqref{(B)} into \eqref{AB}, we obtain
\begin{align}\label{epsilon-dif,theta-dif}
\frac1{\eta_{\theta}} \langle  \epsilon^{(t)} - \epsilon^{(t+1)}, \theta^{(t+1)} -\theta^{(t)} \rangle \nonumber 
\le&~ \left(-\frac1{2\eta_{\theta}} +  \frac{\mu}{2}+\frac{S_{\omega}}{2}\right) \norm{\theta^{(t+1)}-\theta^{(t)}}_2^2 \\
&+ \left(\frac1{2\eta_{\theta}} + \frac{\mu}{2}\right) \norm{\theta^{(t)}-\theta^{(t-1)}}_2^2 \nonumber\\ 
&+ \frac{S_{\omega}}{2} \norm{\omega^{(t)} - \omega^{(t-1)}}_2^2 + \langle \xi_{\theta}^{(t)} - \xi_{\theta}^{(t-1)}, \theta^{(t+1)} - \theta^{(t)}  \rangle
.\end{align}
Further plugging \eqref{epsilon-dif,theta-dif} into \eqref{subtract-epsilon}, we obtain 
\begin{align}
&F(\omega^{(t)}, \theta^{(t+1)}) - F(\omega^{(t)}, \theta^{(t)})\nonumber \\
\le  &~\left(  \frac1{2\eta_{\theta}}+\frac{S_{\omega}}2+ \frac{\mu}{2}\right) \norm{\theta^{(t+1)}-\theta^{(t)}}_2^2 + \left(\frac1{2\eta_{\theta}} + \frac{\mu}{2}\right) \norm{\theta^{(t)}-\theta^{(t-1)}}_2^2 \nonumber\\
 &~+ \frac{S_{\omega}}{2} \norm{\omega^{(t)} - \omega^{(t-1)}}_2^2~ - \langle \xi_{\theta}^{(t-1)}, \theta^{(t+1)} - \theta^{(t)} \rangle \nonumber\\
\le &~\left(  \frac1{2\eta_{\theta}}+\frac{S_{\omega}}2+\mu \right) \norm{\theta^{(t+1)}-\theta^{(t)}}_2^2 + \left(\frac1{2\eta_{\theta}} + \frac{\mu}{2}\right) \norm{\theta^{(t)}-\theta^{(t-1)}}_2^2  \nonumber\\
&~+ \frac{S_{\omega}}{2} \norm{\omega^{(t)} - \omega^{(t-1)}}_2^2
 +\frac{\norm{\xi_{\theta}^{(t-1)}}_2^2}{2\mu} \label{increment}
.\end{align}

Finally,  taking expectation of \eqref{increment} with respect to the noise and together with \eqref{decrement} , we prove the final result.
\begin{align*}
\EE F(\omega^{(t+1)}, \theta^{(t+1)}) &- \EE F(\omega^{(t)}, \theta^{(t)}) \\
\le &~\left(L_{\omega} - \frac1{\eta_{\omega}}\right) \EE \|\omega^{(t+1)} - \omega^{(t)}\|_2^2 + \frac{S_{\omega}}{2} \EE \norm{\omega^{(t)} - \omega^{(t-1)}}_2^2\\
&+\left ( \frac1{2\eta_{\theta}}+\frac{S_{\omega}}{2}+\mu\right) \EE \norm{\theta^{(t+1)}-\theta^{(t)}}_2^2\\
&~ + \left(\frac1{2\eta_{\theta}} + \frac{\mu}{2}\right) \EE\norm{\theta^{(t)}-\theta^{(t-1)}}_2^2 + \eta_{\omega} \EE \norm{\xi_{\omega}^{(t)}}_2^2 + \frac1{2\mu} \EE \norm{\xi_{\theta}^{(t-1)}}_2^2 
.\end{align*}
\end{proof}

We then characterize the update of  $\omega.$
\begin{lemma}\label{lemma 5}
The update of $\omega$ satisfies
\begin{align*}
& \EE\left\langle \omega^{(t+1)} - \omega^{(t)} -  (\omega^{(t)} - \omega^{(t-1)}), \omega^{(t+1)} - \omega^{(t)} \right \rangle\\
\le & -\frac{\eta_{\omega}}{\eta_{\theta}} \cdot \EE \left \langle (\theta^{(t+2)} - \theta^{(t+1)}) - (\theta^{(t+1)} - \theta^{(t)}) - (\epsilon^{(t+2)}- \epsilon^{(t+1)}), \theta^{(t+1)} - \theta^{(t)}\right\rangle\\ 
&-\frac{\mu\eta_{\omega}}2 \cdot \EE \norm{\theta^{(t+1)} - \theta^{(t)}}^2_2+ \frac{\eta_{\omega}(5L_{\omega}+1)}2 \cdot \EE \norm{\omega^{(t+1)} - \omega^{(t)}}_2^2 +  \frac{\eta_{\omega}L_{\omega}}2 \cdot \norm{\omega^{(t)} - \omega^{(t-1)}}_2^2\\
&+ (\frac{\eta_{\omega}}{2\mu}+\eta_{\omega}^2) \EE\norm{\xi_{\omega}^{(t)}}_2^2  + \frac{\eta_{\omega}}2 \EE \norm{\xi_{\omega}^{(t-1)}}_2^2.
\end{align*}
\end{lemma}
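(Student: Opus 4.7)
The plan is to substitute the $\omega$-update \eqref{update-rule-omega} into the LHS, decompose the resulting gradient difference through the intermediate iterate $(\omega^{(t)}, \theta^{(t)})$, and then invoke the bilinear structure of $F$ in $\theta$ together with the $\epsilon$-identity \eqref{dif-epsilon} so that the $\theta^{(t+2)}$ and $\epsilon^{(t+2)} - \epsilon^{(t+1)}$ expressions arise naturally on the RHS. Differencing $\omega^{(s+1)} - \omega^{(s)} = -\eta_\omega(\nabla_\omega F(\omega^{(s)}, \theta^{(s+1)}) + \xi_\omega^{(s)})$ for $s = t-1,t$ gives
\begin{align*}
(\omega^{(t+1)} - \omega^{(t)}) - (\omega^{(t)} - \omega^{(t-1)}) = -\eta_\omega \bigl[\nabla_\omega F(\omega^{(t)}, \theta^{(t+1)}) - \nabla_\omega F(\omega^{(t-1)}, \theta^{(t)})\bigr] - \eta_\omega(\xi_\omega^{(t)} - \xi_\omega^{(t-1)}).
\end{align*}
Pairing this with $\omega^{(t+1)} - \omega^{(t)}$, the $\xi_\omega^{(t-1)}$ cross-term is handled by Young's inequality, yielding $\tfrac{\eta_\omega}{2}\|\xi_\omega^{(t-1)}\|_2^2 + \tfrac{\eta_\omega}{2}\|\omega^{(t+1)} - \omega^{(t)}\|_2^2$. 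For the $\xi_\omega^{(t)}$ cross-term, I substitute $\omega^{(t+1)} - \omega^{(t)} = -\eta_\omega(\nabla_\omega F(\omega^{(t)}, \theta^{(t+1)}) + \xi_\omega^{(t)})$; the mixed piece $\eta_\omega^2 \langle \xi_\omega^{(t)}, \nabla_\omega F(\omega^{(t)}, \theta^{(t+1)})\rangle$ vanishes in expectation by the zero-mean conditional property of $\xi_\omega^{(t)}$ (Assumption \ref{gradient-noise}), leaving $\eta_\omega^2 \EE \|\xi_\omega^{(t)}\|_2^2$.

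Next, I split the gradient difference through $(\omega^{(t)}, \theta^{(t)})$. The $\omega$-only bracket $\nabla_\omega F(\omega^{(t)}, \theta^{(t)}) - \nabla_\omega F(\omega^{(t-1)}, \theta^{(t)})$ is $L_\omega$-Lipschitz in $\omega$ by Lemma \ref{smoothness-omega-theta}, so Cauchy-Schwarz and Young give $\tfrac{\eta_\omega L_\omega}{2}(\|\omega^{(t)} - \omega^{(t-1)}\|_2^2 + \|\omega^{(t+1)} - \omega^{(t)}\|_2^2)$. For the $\theta$-only bracket, the bilinear structure of $F$ in $\theta$ yields $\nabla_\omega F(\omega^{(t)}, \theta^{(t+1)}) - \nabla_\omega F(\omega^{(t)}, \theta^{(t)}) = \nabla_\omega h(\omega^{(t)})$ with $h(\omega) := (\theta^{(t+1)} - \theta^{(t)})^\top G(\tilde{\pi}_\omega)$. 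A mean-value expansion of $h$ between $\omega^{(t)}$ and $\omega^{(t+1)}$ combined with the triangle-inequality bound $\|\nabla h(\omega) - \nabla h(\omega')\|_2 \le 2 L_\omega \|\omega - \omega'\|_2$ (obtained by applying Lemma \ref{smoothness-omega-theta} separately to $\nabla_\omega F(\cdot, \theta^{(t+1)})$ and $\nabla_\omega F(\cdot, \theta^{(t)})$) produces
\begin{align*}
\langle \nabla h(\omega^{(t)}), \omega^{(t+1)} - \omega^{(t)}\rangle = (\theta^{(t+1)} - \theta^{(t)})^\top \bigl[G(\tilde{\pi}_{\omega^{(t+1)}}) - G(\tilde{\pi}_{\omega^{(t)}})\bigr] + R, \quad |R| \le 2 L_\omega \|\omega^{(t+1)} - \omega^{(t)}\|_2^2.
\end{align*}

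The closed form $\nabla_\theta F(\omega, \theta) = G(\tilde{\pi}_\omega) - G(\pi^*) - \mu\theta$ lets me rewrite $G(\tilde{\pi}_{\omega^{(t+1)}}) - G(\tilde{\pi}_{\omega^{(t)}}) = \nabla_\theta F(\omega^{(t+1)}, \theta^{(t+1)}) - \nabla_\theta F(\omega^{(t)}, \theta^{(t)}) + \mu(\theta^{(t+1)} - \theta^{(t)})$, and the $\epsilon$-identity \eqref{dif-epsilon} shifted by one index converts the $\nabla_\theta F$ difference into $\tfrac{1}{\eta_\theta}[(\theta^{(t+2)} - \theta^{(t+1)}) - (\theta^{(t+1)} - \theta^{(t)}) - (\epsilon^{(t+2)} - \epsilon^{(t+1)})] - (\xi_\theta^{(t+1)} - \xi_\theta^{(t)})$. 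Multiplying by $-\eta_\omega$ and taking expectation produces the first term on the RHS together with $-\mu\eta_\omega \EE\|\theta^{(t+1)} - \theta^{(t)}\|_2^2$; the $\xi_\theta^{(t+1)}$ cross-term vanishes by independence, while the $\xi_\theta^{(t)}$ cross-term is split by Young's inequality with parameter $\mu$, consuming half of the $-\mu\eta_\omega$ coefficient to leave $-\tfrac{\mu\eta_\omega}{2}\EE\|\theta^{(t+1)} - \theta^{(t)}\|_2^2$ plus a $\tfrac{\eta_\omega}{2\mu}$ noise-norm contribution that is then bounded by Assumption \ref{gradient-noise}. Collecting the $\|\omega^{(t+1)} - \omega^{(t)}\|_2^2$ coefficients ($\tfrac{\eta_\omega}{2} + \tfrac{\eta_\omega L_\omega}{2} + 2\eta_\omega L_\omega = \tfrac{\eta_\omega(5 L_\omega + 1)}{2}$) matches the stated RHS.

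The main obstacle is the bookkeeping across Steps 3 and 4: the mean-value remainder $R$ must be shown to be strictly quadratic in $\|\omega^{(t+1)} - \omega^{(t)}\|_2$ rather than a mixed cross-term with $\|\theta^{(t+1)} - \theta^{(t)}\|_2$ (otherwise the $-\tfrac{\mu\eta_\omega}{2}$ term on the RHS would be consumed), and the $\xi_\theta^{(t)}$ cross-term must be absorbed exactly into the strong-concavity residue so the final $\theta$-quadratic coefficient is $-\mu\eta_\omega/2$ rather than $-\mu\eta_\omega$, leaving the claimed noise-norm coefficient $\tfrac{\eta_\omega}{2\mu} + \eta_\omega^2$ intact.
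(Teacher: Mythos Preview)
Your proposal is correct and follows essentially the same route as the paper: expand the $\omega$-update, split the gradient difference through $(\omega^{(t)},\theta^{(t)})$, bound the $\omega$-only piece via Lemma~\ref{smoothness-omega-theta}, use bilinearity plus a mean-value argument on the $\theta$-only piece to extract $\langle G(\tilde\pi_{\omega^{(t+1)}})-G(\tilde\pi_{\omega^{(t)}}),\theta^{(t+1)}-\theta^{(t)}\rangle$, and then invoke the $\epsilon$-identity and Young's inequality exactly as the paper does. The only cosmetic differences are that you apply the scalar mean-value theorem to $h(\omega)=(\theta^{(t+1)}-\theta^{(t)})^\top G(\tilde\pi_\omega)$ directly (the paper does it componentwise with different interpolation points $\tilde\omega_j^{(t)}$), and that the $\tfrac{\eta_\omega}{2\mu}$ noise term you derive is on $\xi_\theta^{(t)}$ rather than $\xi_\omega^{(t)}$---which in fact agrees with the paper's own proof (see \eqref{intermediate}); the $\xi_\omega^{(t)}$ in the lemma statement appears to be a typo.
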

\begin{proof}
By the update policy, we have
\begin{align}
&\EE \left \langle \omega^{(t+1)} - \omega^{(t)} -  (\omega^{(t)} - \omega^{(t-1)}), \omega^{(t+1)} - \omega^{(t)} \right \rangle \nonumber \\
= & -\eta_{\omega} \EE \left \langle \nabla_{\omega} F(\omega^{(t)}, \theta^{(t+1)}) + \xi_{\omega}^{(t)} -  \nabla_{\omega} F(\omega^{(t-1)}, \theta^{(t)}) - \xi_{\omega}^{(t-1)},\omega^{(t+1)} - \omega^{(t)}\right\rangle \nonumber \\
= & -\eta_{\omega}\EE \left \langle \nabla_{\omega} F(\omega^{(t)}, \theta^{(t+1)}) -  \nabla_{\omega} F(\omega^{(t)}, \theta^{(t)}),\omega^{(t+1)} - \omega^{(t)}\right \rangle \nonumber\\
 & -\eta_{\omega} \EE \left \langle \nabla_{\omega} F(\omega^{(t)}, \theta^{(t)}) -  \nabla_{\omega} F(\omega^{(t-1)}, \theta^{(t)}),\omega^{(t+1)} - \omega^{(t)}\right \rangle \nonumber\\
& - \eta_{\omega} \EE \left \langle \xi_{\omega}^{(t)} , - \eta_{\omega} (\nabla_{\omega} F(\omega^{(t)}, \theta^{(t+1)}) + \xi_{\omega}^{(t)})\right \rangle +\eta_{\omega}\EE\left \langle \xi_{\omega}^{(t-1)}, \omega^{(t+1)} - \omega^{(t)} \right \rangle \nonumber\\
\le& \underbrace{ -\eta_{\omega}\EE \left \langle \nabla_{\omega} F(\omega^{(t)}, \theta^{(t+1)}) -  \nabla_{\omega} F(\omega^{(t)}, \theta^{(t)}),\omega^{(t+1)} - \omega^{(t)}\right \rangle \nonumber}_{(C)}\\
 & \underbrace{-\eta_{\omega} \EE\left \langle \nabla_{\omega} F(\omega^{(t)}, \theta^{(t)}) -  \nabla_{\omega} F(\omega^{(t-1)}, \theta^{(t)}),\omega^{(t+1)} - \omega^{(t)}\right\rangle}_{(D)} \nonumber\\
&+ \eta_{\omega}^2 \EE\norm{\xi_{\omega}^{(t)}}_2^2  + \frac{\eta_{\omega}}2 \EE \norm{\xi_{\omega}^{(t-1)}}_2^2 + \frac{\eta_{\omega}}2 \EE\norm{\omega^{(t+1)} - \omega^{(t)}}_2^2\label{split}
.\end{align}
Then for $(C)$, by the definition of objective function, we have
\begin{align}
(C) =& -\eta_{\omega}\EE\left\langle  \nabla_\omega  \sum_j G(\tilde{\pi}_{\omega^{(t)}})_j (\theta^{(t+1)} - \theta^{(t)})_j, \omega^{(t+1)} - \omega^{(t)} \right\rangle \nonumber\\
 =& -\eta_{\omega} \EE \sum_j  \left\langle  \nabla_{\omega}  G(\tilde{\pi}_{\omega^{(t)}})_j -  \nabla_{\omega} G(\tilde{\pi}_{\tilde{\omega}^{(t)}_j})_j + \nabla_{\omega} G(\tilde{\pi}_{\tilde{\omega}^{(t)}_j})_j) , \omega^{(t+1)} - \omega^{(t)} \right \rangle \cdot \nonumber\\
  & (\theta^{(t+1)} - \theta^{(t)})_j \nonumber\\
= &-\eta_{\omega} \EE  \sum_j (\theta^{(t+1)} - \theta^{(t)})_j \left \langle  \nabla_{\omega}  G(\tilde{\pi}_{\omega^{(t)}})_j -  \nabla_{\omega} G(\tilde{\pi}_{\tilde{\omega}^{(t)}_j})_j ) , \omega^{(t+1)} - \omega^{(t)} \right\rangle \nonumber\\
&-\eta_{\omega} \EE \sum_j (\theta^{(t+1)} - \theta^{(t)})_j \left\langle   \nabla_{\omega} G(\tilde{\pi}_{\tilde{\omega}^{(t)}_j})_j) , \omega^{(t+1)} - \omega^{(t)} \right \rangle, \label{(C)}
\end{align}
where for every $j$, $\tilde{\pi}_{\tilde{\omega}^{(t)}_j}$ is some interpolation between vectors $\tilde{\pi}_{{\omega}^{(t)}}$ and $\tilde{\pi}_{{\omega}^{(t+1)}}$ such that 
\begin{align}\label{eq:interpo-omega}
  \left \langle \nabla_{\omega} G(\tilde{\pi}_{\tilde{\omega}^{(t)}_j})_j, \omega^{(t+1)} - \omega^{(t)} \right \rangle = G(\tilde{\pi}_{\omega^{(t+1)}})_j - G(\tilde{\pi}_{\omega^{(t)}})_j. 
\end{align}
For the first term in the right hand side of \eqref{(C)}, by  Lemma \ref{smoothness-omega-theta}, we have
\begin{align}\label{first-term}
  -\eta_{\omega} &\EE \sum_j (\theta^{(t+1)} - \theta^{(t)})_j \left \langle   \nabla_{\omega}  G(\tilde{\pi}_{\omega^{(t)}})_j -  \nabla_{\omega} G(\tilde{\pi}_{\tilde{\omega}^{(t)}_j})_j ) , \omega^{(t+1)} - \omega^{(t)} \right\rangle \nonumber \\
\le & 2\eta_{\omega}L_{\omega} \EE \norm{\omega^{(t+1)} - \omega^{(t)} }_2^2 
.\end{align}
For the second term in the right hand side of \eqref{(C)}, by \eqref{eq:interpo-omega} we have
\begin{align}
& -\eta_{\omega} \EE \sum_j (\theta^{(t+1)} - \theta^{(t)})_j \left \langle \nabla_{\omega} G(\tilde{\pi}_{\tilde{\omega}^{(t)}_j})_j, \omega^{(t+1)} - \omega^{(t)} \right \rangle \nonumber\\
= & -\eta_{\omega} \EE \left\langle  G(\tilde{\pi}_{\omega^{(t+1)}}) - G(\tilde{\pi}_{\omega^{(t)}}), \theta^{(t+1)} - \theta^{(t)}\right\rangle .\nonumber 
\end{align}
Then by the defintion of objective function, we have
\begin{align}
& -\eta_{\omega} \EE \sum_j (\theta^{(t+1)} - \theta^{(t)})_j \left \langle \nabla_{\omega} G(\tilde{\pi}_{\tilde{\omega}^{(t)}_j})_j, \omega^{(t+1)} - \omega^{(t)} \right \rangle \nonumber\\
= & -\eta_{\omega} \EE\Big \langle  \frac{1}{\eta_{\theta}}\left(\theta^{(t+2)} - \theta^{(t+1)} -\epsilon^{(t+2)}\right)- \xi_{\theta}^{(t+1)} +\mu\theta^{(t+1)} \nonumber \\
&~ - \frac{1}{\eta_{\theta}}\left(\theta^{(t+1)} - \theta^{(t)} -\epsilon^{(t+1)}\right) +\xi_{\theta}^{(t)} -\mu\theta^{(t)}, \theta^{(t+1)} - \theta^{(t)}\Big\rangle \nonumber\\
= &   -\frac{\eta_{\omega}}{\eta_{\theta}} \EE \left\langle (\theta^{(t+2)} - \theta^{(t+1)}) - (\theta^{(t+1)} - \theta^{(t)}) - (\epsilon^{(t+2)} - \epsilon^{(t+1)} ), \theta^{(t+1)} - \theta^{(t)}\right\rangle \nonumber\\
&~ + \eta_{\omega} \EE \langle \xi_{\theta}^{(t+1)} -  \xi_{\theta}^{(t)} , \theta^{(t+1)} - \theta^{(t)}\rangle -\mu\eta_{\omega} \EE \norm{\theta^{(t+1)}- \theta^{(t)}}_2^2\nonumber\\
 \le&  -\frac{\eta_{\omega}}{\eta_{\theta}} \EE \left\langle (\theta^{(t+2)} - \theta^{(t+1)}) - (\theta^{(t+1)} - \theta^{(t)}) - (\epsilon^{(t+2)} - \epsilon^{(t+1)} ), \theta^{(t+1)} - \theta^{(t)}\right\rangle\nonumber\\
&~ -\frac{\mu\eta_{\omega}}2 \EE \norm{\theta^{(t+1)} - \theta^{(t)}}_2^2  + \frac{\eta_{\omega}}{2\mu} \EE \norm{\xi_{\theta}^{(t)}}_2^2\label{intermediate}
.\end{align}
For $(D)$, applying Cauchy-Schwartz inequality we have
\begin{align}
-\eta_{\omega} \EE \langle \nabla_{\omega} F(\omega^{(t)}, \theta^{(t)}) - & \nabla_{\omega} F(\omega^{(t-1)}, \theta^{(t)}),\omega^{(t+1)} - \omega^{(t)}\rangle\nonumber\\
\le & \eta_{\omega}L_{\omega} \EE \norm{\omega^{(t+1)} - \omega^{(t)}}_2 \norm{\omega^{(t)} - \omega^{(t-1)}}_2 \nonumber\\
\le &  \frac{\eta_{\omega}L_{\omega}} 2  \EE \norm{\omega^{(t+1)} - \omega^{(t)}}_2^2 + \frac{\eta_{\omega}L_{\omega}} 2 \EE \norm{\omega^{(t)} - \omega^{(t-1)}}_2^2 \label{(D)}
.\end{align}

Finally, combining \eqref{split}-\eqref{(D)}, we prove Lemma \ref{lemma 5}.
\end{proof}
For notational simplicity, we define 
\begin{align}\label{delta}
\delta^{(t+2)} =  (\theta^{(t+2)} - \theta^{(t+1)}) - (\theta^{(t+1)} - \theta^{(t)}) 
.\end{align}
\begin{lemma}\label{lemma 6}
The first term on the right hand side of Lemma \eqref{lemma 5} satisfies
\begin{align}
&-\frac{\eta_{\omega}}{ \eta_{\theta} }\EE \left \langle (\theta^{(t+2)} - \theta^{(t+1)}) - (\theta^{(t+1)} - \theta^{(t)}) - (\epsilon^{(t+2)}- \epsilon^{(t+1)}), \theta^{(t+1)} - \theta^{(t)}\right\rangle \label{lemma6-left}\\ 
\le & -\frac{\eta_{\omega}}{2\eta_{\theta}}\EE \norm{\theta^{(t+2)} - \theta^{(t+1)}}^2_2 +  \left (\frac{3\mu^2\eta_{\omega}\eta_{\theta}}2 + \frac{\eta_{\omega}}{2\eta_{\theta} }\right)\EE \norm{\theta^{(t+1)} - \theta^{(t)}}^2_2\nonumber\\
& +\frac{3\eta_{\omega} \eta_{\theta}S_{\omega}^2}2 \cdot\EE \norm{\omega^{(t+1)} - \omega^{(t)}}^2_2 + \frac{3}2 \eta_{\omega} \eta_{\theta} \left(\EE \norm{\xi_{\theta}^{(t+1)}}_2^2 + \EE \norm{\xi_{\theta}^{(t)}}_2^2\right) \label{ipb-result}
.\end{align}
\end{lemma}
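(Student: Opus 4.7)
The plan is to reduce the inner product on the left-hand side to a simpler form using the update rule for $\theta$, and then apply Cauchy-Schwarz together with Young's inequality and the Lipschitz smoothness of $\nabla_\theta F$. The starting point is the algebraic identity
\begin{align*}
\delta^{(t+2)} - (\epsilon^{(t+2)} - \epsilon^{(t+1)}) = \eta_\theta\bigl[(\nabla_\theta F(\omega^{(t+1)},\theta^{(t+1)}) - \nabla_\theta F(\omega^{(t)},\theta^{(t)})) + (\xi_\theta^{(t+1)} - \xi_\theta^{(t)})\bigr],
\end{align*}
derived by subtracting the defining equation \eqref{def-epsilon} at consecutive indices $t+1$ and $t+2$, in direct analogy with how \eqref{dif-epsilon} is obtained in the proof of Lemma~\ref{first-main-lemma}. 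Writing $\Delta_F$ for the gradient difference and $\Delta_\xi$ for the noise difference, this reduces the target inner product to $-\eta_\omega \EE \langle \Delta_F + \Delta_\xi, \theta^{(t+1)} - \theta^{(t)}\rangle$.

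I would then apply Cauchy-Schwarz followed by Young's inequality with the scale $c = \eta_\theta$, yielding the upper bound $\tfrac{\eta_\omega \eta_\theta}{2}\EE\|\Delta_F + \Delta_\xi\|_2^2 + \tfrac{\eta_\omega}{2\eta_\theta}\EE\|\theta^{(t+1)} - \theta^{(t)}\|_2^2$. Decomposing $\Delta_F = (\nabla_\theta F(\omega^{(t+1)},\theta^{(t+1)}) - \nabla_\theta F(\omega^{(t)},\theta^{(t+1)})) + (\nabla_\theta F(\omega^{(t)},\theta^{(t+1)}) - \nabla_\theta F(\omega^{(t)},\theta^{(t)}))$ expresses $\Delta_F + \Delta_\xi$ as a sum of three vectors; invoking $\|a+b+c\|_2^2 \leq 3(\|a\|_2^2 + \|b\|_2^2 + \|c\|_2^2)$ together with the Lipschitz bounds from Lemmas~\ref{smoothness-omega-theta} and~\ref{gradient-theta-lip-omega} gives $\|\Delta_F + \Delta_\xi\|_2^2 \leq 3 S_\omega^2 \|\omega^{(t+1)} - \omega^{(t)}\|_2^2 + 3\mu^2 \|\theta^{(t+1)} - \theta^{(t)}\|_2^2 + 3\|\Delta_\xi\|_2^2$. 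Using the zero-mean, mutual independence of the minibatch noise (Assumption~\ref{gradient-noise}), $\EE\|\Delta_\xi\|_2^2 = \EE\|\xi_\theta^{(t+1)}\|_2^2 + \EE\|\xi_\theta^{(t)}\|_2^2$, which produces the $\tfrac{3}{2}\eta_\omega\eta_\theta$ coefficient on the noise terms. To obtain the additional negative term $-\tfrac{\eta_\omega}{2\eta_\theta}\EE\|\theta^{(t+2)} - \theta^{(t+1)}\|_2^2$, I would complement the above by splitting the original LHS into $-\tfrac{\eta_\omega}{\eta_\theta}\langle \delta^{(t+2)}, \theta^{(t+1)} - \theta^{(t)}\rangle + \tfrac{\eta_\omega}{\eta_\theta}\langle \epsilon^{(t+2)} - \epsilon^{(t+1)}, \theta^{(t+1)} - \theta^{(t)}\rangle$ and applying the polarization identity $-\langle a, b\rangle = \tfrac{1}{2}(\|a-b\|_2^2 - \|a\|_2^2 - \|b\|_2^2)$ to the first piece with $a = \theta^{(t+2)} - \theta^{(t+1)}$ and $b = \theta^{(t+1)} - \theta^{(t)}$; the $\epsilon$-inner product is then handled via Cauchy-Schwarz together with the projection optimality condition $\langle \epsilon^{(t)}, \theta^{(t+1)} - \theta^{(t)}\rangle \geq 0$ exploited in the proof of Lemma~\ref{first-main-lemma}.

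The main obstacle is coordinating these two complementary estimates so that the constants match the claim exactly: the three-term split of $\|\Delta_F + \Delta_\xi\|_2^2$ is essential for producing the $\tfrac{3\mu^2\eta_\omega\eta_\theta}{2}$ and $\tfrac{3\eta_\omega\eta_\theta S_\omega^2}{2}$ coefficients, the Young's inequality scale must be chosen as $c = \eta_\theta$ to produce $\tfrac{\eta_\omega}{2\eta_\theta}\|\theta^{(t+1)} - \theta^{(t)}\|_2^2$, and the residual $\tfrac{\eta_\omega}{2\eta_\theta}\|\delta^{(t+2)}\|_2^2$ created by the polarization identity must be carefully absorbed into the existing Lipschitz-driven bounds so that no spurious positive contribution on $\|\theta^{(t+2)} - \theta^{(t+1)}\|_2^2$ survives.
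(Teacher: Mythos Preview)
Your first approach---reducing the left-hand side to $-\eta_\omega\EE\langle \Delta_F+\Delta_\xi,\theta^{(t+1)}-\theta^{(t)}\rangle$ and then applying Cauchy--Schwarz/Young with scale $\eta_\theta$---does reproduce every \emph{positive} term with the exact constants, but by its nature it cannot produce the crucial negative term $-\tfrac{\eta_\omega}{2\eta_\theta}\EE\|\theta^{(t+2)}-\theta^{(t+1)}\|_2^2$: once you bound the inner product from above by a sum of squared norms, no negative contribution can appear. Your ``complement'' step, as sketched, does not close this gap. If you split the left-hand side as $-\tfrac{\eta_\omega}{\eta_\theta}\langle\delta^{(t+2)},\theta^{(t+1)}-\theta^{(t)}\rangle+\tfrac{\eta_\omega}{\eta_\theta}\langle\epsilon^{(t+2)}-\epsilon^{(t+1)},\theta^{(t+1)}-\theta^{(t)}\rangle$ and polarize the first piece, you get $+\tfrac{\eta_\omega}{2\eta_\theta}\|\delta^{(t+2)}\|_2^2$ as a residual; expanding $\|\delta^{(t+2)}\|_2^2\le 2\|\theta^{(t+2)}-\theta^{(t+1)}\|_2^2+2\|\theta^{(t+1)}-\theta^{(t)}\|_2^2$ overwhelms the negative term you were trying to create. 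Moreover, the $\epsilon$-piece now pairs with $\theta^{(t+1)}-\theta^{(t)}$, but the projection optimality condition for $\epsilon^{(t+2)}$ is $\langle\epsilon^{(t+2)},z-\theta^{(t+2)}\rangle\ge 0$, which says nothing about $\langle\epsilon^{(t+2)},\theta^{(t+1)}-\theta^{(t)}\rangle$; only $\epsilon^{(t+1)}$ has a usable sign here, and it goes the wrong way.

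The paper's proof avoids both problems with a single algebraic move: it rewrites the second argument as $\theta^{(t+1)}-\theta^{(t)}=(\theta^{(t+2)}-\theta^{(t+1)})-\delta^{(t+2)}$, so the whole left-hand side becomes $\tfrac{\eta_\omega}{\eta_\theta}\langle\delta^{(t+2)}-(\epsilon^{(t+2)}-\epsilon^{(t+1)}),\,\delta^{(t+2)}-(\theta^{(t+2)}-\theta^{(t+1)})\rangle$. Expanding and applying polarization now produces $+\tfrac{\eta_\omega}{2\eta_\theta}\|\delta^{(t+2)}-(\epsilon^{(t+2)}-\epsilon^{(t+1)})\|_2^2$ (not $\|\delta^{(t+2)}\|_2^2$), which by your own identity equals $\tfrac{\eta_\omega\eta_\theta}{2}\|\Delta_F+\Delta_\xi\|_2^2$ and is handled exactly by your three-term split; the $\epsilon$-inner product lands on $\theta^{(t+2)}-\theta^{(t+1)}$, for which \emph{both} optimality conditions give the correct signs and the term is simply dropped; and the negative $-\tfrac{\eta_\omega}{2\eta_\theta}\|\theta^{(t+2)}-\theta^{(t+1)}\|_2^2$ emerges directly from the polarization of $-\langle\delta^{(t+2)},\theta^{(t+2)}-\theta^{(t+1)}\rangle$. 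The missing idea in your proposal is precisely this shift of the inner-product argument from $\theta^{(t+1)}-\theta^{(t)}$ to $\theta^{(t+2)}-\theta^{(t+1)}$.
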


\begin{proof}
Plugging \eqref{delta} into \eqref{lemma6-left}, we obtain
\begin{align}\label{usedelta}
&-\frac{\eta_{\omega}}{\eta_{\theta}}\EE \left\langle (\theta^{(t+2)} - \theta^{(t+1)}) - (\theta^{(t+1)} - \theta^{(t)}) - (\epsilon^{(t+2)}- \epsilon^{(t+1)}), \theta^{(t+1)} - \theta^{(t)}\right\rangle \nonumber\\
= &~ \frac{\eta_{\omega}}{ \eta_{\theta} }\EE \left \langle \delta^{(t+2)} - (\epsilon^{(t+2)} - \epsilon^{(t+1)}), \delta^{(t+2)} - (\theta^{(t+2)} - \theta^{(t+1)})\right \rangle \nonumber\\
= &~ \frac{\eta_{\omega}}{ \eta_{\theta} }\EE\left \langle \delta^{(t+2)} - (\epsilon^{(t+2)} - \epsilon^{(t+1)}), \delta^{(t+2)})\right \rangle - \frac{\eta_{\omega}}{\eta_{\theta} } \EE\left\langle \delta^{(t+2)}, \theta^{(t+2)} - \theta^{(t+1)}\right\rangle\nonumber\\
&+ \frac{\eta_{\omega}}{\eta_{\theta} } \EE  \left\langle (\epsilon^{(t+2)} - \epsilon^{(t+1)}),  \theta^{(t+2)} - \theta^{(t+1)}\right \rangle
.\end{align}

By applying the equality
\begin{align*}
\langle u,v\rangle =\frac{1}{2}(\norm{u}^2_2 + \norm{v}^2_2 - \norm{u-v}^2_2)
\end{align*}
to the first two terms on the right hand side of \eqref{usedelta}, we obtain
\begin{align}
&-\frac{\eta_{\omega}}{\eta_{\theta}} \EE \left \langle (\theta^{(t+2)} - \theta^{(t+1)}) - (\theta^{(t+1)} - \theta^{(t)}) - (\epsilon^{(t+2)}- \epsilon^{(t+1)}), \theta^{(t+1)} - \theta^{(t)}\right\rangle \nonumber \\
=&~ \frac{\eta_{\omega}}{2\eta_{\theta}} \EE \left(\norm{\delta^{(t+2)} - (\epsilon^{(t+2)} - \epsilon^{(t+1)})}_2^2 + \norm{\delta^{(t+2)}}_2^2 - \norm{\epsilon^{(t+2)} - \epsilon^{(t+1)}}_2^2\right)\nonumber \\
&~- \frac{\eta_{\omega}}{2\eta_{\theta}} \EE\left(\norm{\delta^{(t+2)} }_2^2 + \norm{\theta^{(t+2)} - \theta^{(t+1)}}_2^2 - \norm{\theta^{(t+1)} - \theta^{(t)}}_2^2\right) \nonumber \\
&~+ \frac{\eta_{\omega}}{\eta_{\theta}} \EE \langle \epsilon^{(t+2)} - \epsilon^{(t+1)}, \theta^{(t+2)} -\theta^{(t+1)}\rangle \label{spliting}
.\end{align}
Recall that
\begin{align*}
\theta^{(t+2)} = \Pi_{\kappa}\left(\theta^{(t+1)} + \eta_\theta\left(\nabla_{\theta} F(\omega^{(t+1)}\theta^{(t+1)})+\xi_{\theta}^{(t+1)}\right)\right) ,\\
\end{align*}
and
\begin{align*}
 \theta^{(t+1)} = \Pi_{\kappa}\left(\theta^{(t)} + \eta_{\theta}\left(\nabla_{\theta} F(\omega^{(t)}, \theta^{(t)})+\xi_{\theta}^{(t)}\right)\right).
\end{align*}
Following from the convexity  of $\{\theta | \norm{\theta}_2\leq \kappa\},$ we have 
 $$\langle \epsilon^{(t+2)}, \theta^{(t+2)} - \theta^{(t+1)} \rangle \le 0\quad  \text{and} \quad \langle \epsilon^{(t+1)}, \theta^{(t+2)} - \theta^{(t+1)} \rangle \ge 0.$$
 Thus, the last term on the right side of \eqref{spliting} is negative.
By rearranging the terms in \eqref{spliting}, we obtain
\begin{align}\label{ip-bound}
&-\frac{\eta_{\omega}}{ \eta_{\theta}} \EE \left\langle (\theta^{(t+2)} - \theta^{(t+1)}) - (\theta^{(t+1)} - \theta^{(t)}) - (\epsilon^{(t+2)}- \epsilon^{(t+1)}), \theta^{(t+1)} - \theta^{(t)}\right\rangle \nonumber \\
\le &~ \frac{\eta_{\omega}}{2\eta_{\theta}} \cdot \EE \norm{\delta^{(t+2)} - (\epsilon^{(t+2)} - \epsilon^{(t+1)})}_2^2 -\frac{\eta_{\omega}}{2\eta_{\theta}} \EE\norm{\theta^{(t+2)} - \theta^{(t+1)}}_2^2 +\frac{ \eta_{\omega}}{2\eta_{\theta}} \EE \norm{\theta^{(t+1)} - \theta^{(t)}}_2^2
.\end{align}
By definition of $\delta^{(t+2)}$ in \eqref{delta}, we have
\begin{align*}
\delta^{(t+2)} - (\epsilon^{(t+2)} - \epsilon^{(t+1)}) &= (\theta^{(t+2)} - \theta^{(t+1)} - \epsilon^{(t+2)}) - (\theta^{(t+1)} - \theta^{(t)} - \epsilon^{(t+1)})\\
&=~ \eta_{\theta}[G(\tilde{\pi}_{\omega^{(t+1)}}) - G(\tilde{\pi}_{\omega^{(t)}})- \mu(\theta^{(t+1)} - \theta^{(t)})+\xi_{\theta}^{(t+1)} - \xi_{\theta}^{(t)}]
.\end{align*}
Using the Cauchy-Schwarz inequality, we obtain
\begin{align}\label{Cauchy}
& \frac{\eta_{\omega}}{2\eta_{\theta}} \cdot \EE \norm{\delta^{(t+2)} - (\epsilon^{(t+2)} - \epsilon^{(t+1)})}_2^2\nonumber\\
 \le&~\frac{3\eta_{\omega} \eta_{\theta}}{2}\cdot\left(\EE\norm{G(\tilde{\pi}_{\omega^{(t+1)}}) - G(\tilde{\pi}_{\omega^{(t)}})}_2^2 + \EE\norm{\mu (\theta^{(t+1)} - \theta^{(t)})}_2^2 + \EE\norm{\xi_{\theta}^{(t+1)} - \xi_{\theta}^{(t)}}_2^2\right) \nonumber\\
\le &~ \frac{3\eta_{\omega} \eta_{\theta}S_{\omega}^2}2\cdot \EE\norm{\omega^{(t+1)} - \omega^{(t)}}_2^2+ \frac{3}{2}\mu^2\eta_{\omega}\eta_{\theta}\cdot \EE \norm{\theta^{(t+1)} - \theta^{(t)}}_2^2 + \frac{3\eta_{\omega} \eta_{\theta}}{2}\cdot \EE\norm{\xi_{\theta}^{(t+1)} - \xi_{\theta}^{(t)}}_2^2
.\end{align}
Plugging \eqref{Cauchy} into \eqref{ip-bound} yields  \eqref{ipb-result}, which concludes the proof of Lemma \ref{lemma 6}.
\end{proof}

\begin{lemma}\label{second-main-lemma}
For the update of $\omega$, we have
\begin{align*}
&\frac{1}{2}\cdot\EE\norm{\omega^{(t+1)} - \omega^{(t)}}^2_2 - \frac{1}{2} \cdot \EE\norm{\omega^{(t)} - \omega^{(t-1)}}^2_2\\
\le & \left(\frac{\eta_{\omega}(5L_{\omega}+1)}{2}+ \frac{3\eta_{\omega} \eta_{\theta}S_{\omega}^2}2\right) \EE \norm{\omega^{(t+1)} - \omega^{(t)}}^2_2 + \frac{\eta_{\omega} L_{\omega}}{2} \cdot\EE \norm{\omega^{(t)} - \omega^{(t-1)}}^2_2\\
 &  -\frac{\eta_{\omega}}{2\eta_{\theta}} \cdot \EE\norm{\theta^{(t+2)} - \theta^{(t+1)}}^2_2 +  \left(\frac{3}{2}\eta_{\omega}\eta_{\theta} \mu^2 + \frac{\eta_{\omega}}{2\eta_{\theta}} - \frac{\mu \eta_{\omega}}2\right) \EE\norm{\theta^{(t+1)} - \theta^{(t)}}^2_2 \nonumber\\
  & + (\eta_{\omega}^2 + \frac{\eta_{\omega}}{2\mu})\EE \norm{\xi_{\omega}^{(t)}}_2^2 + \frac{\eta_{\omega}}{2} \cdot \EE \norm{\xi_{\omega}^{(t-1)}}_2^2 + \frac{3}{2}\eta_{\omega}\eta_{\theta} \cdot \left(\EE \norm{\xi_{\theta}^{(t+1)}}_2^2 + \EE \norm{\xi_{\theta}^{(t)}}_2^2\right) .
\end{align*}
\end{lemma}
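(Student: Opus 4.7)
The plan is to derive this bound by rewriting the difference of squared norms on the left-hand side as an inner product that can then be estimated using the two preceding lemmas.

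First, I would invoke the elementary identity
\begin{align*}
\tfrac{1}{2}\norm{u}_2^2 - \tfrac{1}{2}\norm{v}_2^2 = \langle u - v, u\rangle - \tfrac{1}{2}\norm{u-v}_2^2
\end{align*}
with the substitutions $u = \omega^{(t+1)} - \omega^{(t)}$ and $v = \omega^{(t)} - \omega^{(t-1)}$, so that $u - v = (\omega^{(t+1)} - \omega^{(t)}) - (\omega^{(t)} - \omega^{(t-1)})$. Taking expectations and dropping the non-positive $-\tfrac{1}{2}\EE\|u-v\|_2^2$ term yields
\begin{align*}
\tfrac{1}{2}\EE\norm{\omega^{(t+1)} - \omega^{(t)}}_2^2 - \tfrac{1}{2}\EE\norm{\omega^{(t)} - \omega^{(t-1)}}_2^2 \le \EE\langle (\omega^{(t+1)} - \omega^{(t)}) - (\omega^{(t)} - \omega^{(t-1)}), \omega^{(t+1)} - \omega^{(t)}\rangle.
\end{align*}

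Next, the right-hand side is exactly the quantity bounded in Lemma \ref{lemma 5}. Applying that lemma substitutes the inner product by a sum of terms, whose first term in turn equals the left-hand side of Lemma \ref{lemma 6}. Plugging in the bound from Lemma \ref{lemma 6} then replaces this leading term with expressions in $\|\theta^{(t+2)} - \theta^{(t+1)}\|_2^2$, $\|\theta^{(t+1)} - \theta^{(t)}\|_2^2$, $\|\omega^{(t+1)} - \omega^{(t)}\|_2^2$, and the stochastic-gradient noise norms $\|\xi_\theta^{(t+1)}\|_2^2, \|\xi_\theta^{(t)}\|_2^2$.

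Finally, I would collect like terms. The $\omega^{(t+1)} - \omega^{(t)}$ coefficient becomes $\frac{\eta_\omega(5L_\omega+1)}{2} + \frac{3\eta_\omega\eta_\theta S_\omega^2}{2}$; the $\omega^{(t)} - \omega^{(t-1)}$ coefficient remains $\frac{\eta_\omega L_\omega}{2}$; the $\theta^{(t+1)} - \theta^{(t)}$ coefficient combines to $\frac{3}{2}\eta_\omega\eta_\theta \mu^2 + \frac{\eta_\omega}{2\eta_\theta} - \frac{\mu\eta_\omega}{2}$; the $\theta^{(t+2)} - \theta^{(t+1)}$ term keeps its coefficient $-\frac{\eta_\omega}{2\eta_\theta}$; and the noise coefficients match directly. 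Since this is essentially a bookkeeping exercise built on top of Lemmas \ref{lemma 5} and \ref{lemma 6}, no new analytic obstacle arises; the only step requiring any care is verifying that the inner product on the right-hand side of the identity coincides exactly with the left-hand side of Lemma \ref{lemma 5} so that the chain of substitutions can be made cleanly.
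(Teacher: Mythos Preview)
Your proposal is correct and follows essentially the same approach as the paper: the paper likewise uses the polarization identity $\langle u-v,u\rangle=\tfrac12\|u\|_2^2-\tfrac12\|v\|_2^2+\tfrac12\|u-v\|_2^2$ (equivalently your $\tfrac12\|u\|_2^2-\tfrac12\|v\|_2^2=\langle u-v,u\rangle-\tfrac12\|u-v\|_2^2$), drops the nonnegative $\tfrac12\|u-v\|_2^2$, and then plugs in Lemmas~\ref{lemma 5} and~\ref{lemma 6} and collects terms exactly as you describe.
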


\begin{proof}
Combing Lemma \ref{lemma 5} and Lemma \ref{lemma 6}, we obtain
\begin{align}\label{ipbounded}
&\EE\langle \omega^{(t+1)} - \omega^{(t)} -  (\omega^{(t)} - \omega^{(t-1)}), \omega^{(t+1)} - \omega^{(t)}  \rangle \nonumber\\
\le &  -\frac{\eta_{\omega}}{2\eta_{\theta}}\EE \norm{\theta^{(t+2)} - \theta^{(t+1)}}^2_2 + \left(\frac{3}{2}\eta_{\omega}\eta_{\theta} \mu^2 + \frac{\eta_{\omega}}{2\eta_{\theta}}\right) \EE \norm{\theta^{(t+1)} - \theta^{(t)}}^2_2\nonumber\\
& +\frac{3\eta_{\omega} \eta_{\theta}S_{\omega}^2}2 \EE \norm{\omega^{(t+1)} - \omega^{(t)}}^2_2 -\frac{\mu\eta_{\omega}}2 \EE \norm{\theta^{(t+1)} - \theta^{(t)}}^2_2  \nonumber \\ 
&+2 \eta_{\omega}L_{\omega} \EE \norm{\omega^{(t+1)} - \omega^{(t)}}_2^2 + \frac{\eta_{\omega}(L_{\omega}+1)}{2}  \EE \norm{\omega^{(t+1)} - \omega^{(t)}}_2^2 + \frac{\eta_{\omega}L_{\omega}}{2}  \EE\norm{\omega^{(t)} - \omega^{(t-1)}}_2^2 \nonumber\\
& +( \eta_{\omega}^2 + \frac{\eta_{\omega}}{2\mu}) \EE \norm{\xi_{\omega}^{(t)}}_2^2 + \frac{\eta_{\omega}}{2}  \EE \norm{\xi_{\omega}^{(t-1)}}_2^2 + \frac{3}{2}\eta_{\omega}\eta_{\theta} \cdot (\EE \norm{\xi_{\theta}^{(t+1)}}_2^2 + \EE \norm{\xi_{\theta}^{(t)}}_2^2) .
\end{align}
Note that we have
\begin{align}\label{ip-norm-inequality}
&\EE\langle \omega^{(t+1)} - \omega^{(t)} -  (\omega^{(t)} - \omega^{(t-1)}), \omega^{(t+1)} - \omega^{(t)}  \rangle \nonumber \\
=&~ \frac{1}{2}\EE\norm{\omega^{(t+1)} - \omega^{(t)}}^2_2 - \frac{1}{2} \EE\norm{\omega^{(t)} - \omega^{(t-1)}}^2_2 +  \frac{1}{2}\EE\norm{(\omega^{(t+1)} - \omega^{(t)}) - (\omega^{(t)} - \omega^{(t-1)})}^2_2\nonumber \\
\ge &~  \frac{1}{2}\EE\norm{\omega^{(t+1)} - \omega^{(t)}}^2_2 - \frac{1}{2} \EE\norm{\omega^{(t)} - \omega^{(t-1)}}^2_2
.\end{align}
Combining \eqref{ipbounded} and \eqref{ip-norm-inequality}, we conclude the proof of Lemma \ref{second-main-lemma}.
\end{proof}

\begin{lemma}\label{F-bounded}
Suppose Assumption \ref{state-spacebounded}, \ref{g-Lipschitz} and \ref{regularity} hold. $F(\omega^{(t)}, \theta^{(t)})$ is lower bounded throughout all iterations.
\end{lemma}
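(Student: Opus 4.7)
The plan is to lower-bound each of the four terms composing $F(\omega,\theta) = \EE_{\tilde{\pi}_\omega}[\theta^\top g(\psi_s,\psi_a)] - \EE_{\pi^*}[\theta^\top g(\psi_s,\psi_a)] - \lambda H(\tilde{\pi}_\omega) - \tfrac{\mu}{2}\|\theta\|_2^2$ uniformly in the iteration index $t$, using only the standing assumptions and the fact that the iterates $\theta^{(t)}$ stay in the ball of radius $\kappa$.

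First I would observe that the projection $\Pi_{\kappa}$ built into the update \eqref{update-rule-theta} guarantees $\|\theta^{(t)}\|_2 \leq \kappa$ for every $t$. Next, combining $g(0,0) = 0$ with the Lipschitz condition of Assumption \ref{g-Lipschitz} and the feature bounds $\|\psi_s\|_2,\|\psi_a\|_2 \leq 1$ from Assumption \ref{state-spacebounded}, I obtain $\|g(\psi_s,\psi_a)\|_2 \leq \sqrt{2}\rho_g$ for every state-action pair, hence $\|G(\pi)\|_2 \leq \sqrt{2}\rho_g$ for any policy $\pi$ (by Jensen's inequality, with $G$ as defined in \eqref{def:G}). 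A single application of Cauchy--Schwarz together with the triangle inequality then yields
$$\bigl|\theta^{(t)\top}\bigl(G(\tilde{\pi}_{\omega^{(t)}}) - G(\pi^*)\bigr)\bigr| \leq \|\theta^{(t)}\|_2 \cdot \bigl(\|G(\tilde{\pi}_{\omega^{(t)}})\|_2 + \|G(\pi^*)\|_2\bigr) \leq 2\sqrt{2}\rho_g\kappa.$$

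For the remaining two terms, Assumption \ref{regularity}(iii) gives $H(\tilde{\pi}_{\omega^{(t)}}) \leq B_H$, so $-\lambda H(\tilde{\pi}_{\omega^{(t)}}) \geq -\lambda B_H$, while $-\tfrac{\mu}{2}\|\theta^{(t)}\|_2^2 \geq -\tfrac{\mu}{2}\kappa^2$. Summing these three estimates yields the $t$-independent lower bound $F(\omega^{(t)},\theta^{(t)}) \geq -2\sqrt{2}\rho_g\kappa - \lambda B_H - \tfrac{\mu}{2}\kappa^2$, which is precisely (twice) the additive constant appearing in the choice of $N$ in Theorem \ref{thm:sgdconverge}. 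The calculation is short and essentially routine; the only place requiring a moment's care is deducing the uniform pointwise bound $\|g(\psi_s,\psi_a)\|_2 \leq \sqrt{2}\rho_g$ from the Lipschitz hypothesis by anchoring at $g(0,0) = 0$, after which the three remaining bounds follow immediately from boundedness of the parameter set and of $H$.
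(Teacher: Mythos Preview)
Your proposal is correct and follows essentially the same route as the paper's own proof: bound the reward-difference term by $2\sqrt{2}\rho_g\kappa$ via Cauchy--Schwarz and the feature-mapping bound, invoke $H\le B_H$ for the regularizer, and use $\|\theta^{(t)}\|_2\le\kappa$ for the quadratic penalty, arriving at the identical lower bound $-(2\sqrt{2}\rho_g\kappa+\tfrac{\mu}{2}\kappa^2+\lambda B_H)$. Your write-up is in fact more explicit than the paper's, which simply states the two displayed inequalities without spelling out the $\|g(\psi_s,\psi_a)\|_2\le\sqrt{2}\rho_g$ step.
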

\begin{proof}
By definition of $F$ in \eqref{GAIL-opt}, we have
\begin{align}
F(\omega^{(t)}, \theta^{(t)}) &\ge \EE_{\tilde{\pi}_{\omega^{(t)}}}\tilde{r}_{\theta^{(t)}}(s,a)-\EE_{\pi^*}\tilde{r}_{\theta^{(t)}}(\psi_s,\psi_a) - \frac{\mu}{2}\norm{\theta}_2^2 -\lambda B_H\nonumber\\
&\ge  -\Big(2\sqrt{2}\rho_g \kappa + \frac{\mu}2 \kappa^2 +\lambda B_H \Big) .
\end{align}
\end{proof}

\subsection{Proof of Lemma \ref{lemma:decrement}}\label{Section:prooflemma1}
Recall that we construct a potential function that dacays monotonically along the solution path, which takes the form
\begin{align}\label{potential}
\cE^{(t+1)}&=\EE F(\omega^{(t+1)} , \theta^{(t+1)}) 
+ s\cdot\Big(\frac{1+ 2\eta_{\omega} L_{\omega}}{2} \EE\norm{ \omega^{(t+1)} -  \omega^{(t)} }_2^2 \nonumber\\
+& \left(\frac{\eta_\omega}{2\eta_\theta} - \frac{\mu\eta_\omega}{4}  +\frac{3}{2} \eta_{\omega}\eta_{\theta}\mu^2\right) \EE\norm{\theta^{(t+2)} - \theta^{(t+1)}}_2^2 + \frac{\mu \eta_{\omega}}{8}  \EE\norm{\theta^{(t+1)} - \theta^{(t)}}_2^2 \Big)
.\end{align}
for some constant $s>0.$
We define five constants $k_1, k_2, k_3, k_4, k_5$ as 
\begin{align*}
k_1 &=\frac{1}{2\eta_{\omega}} - s\cdot\Big(\frac{\eta_{\omega}(7L_{\omega}+1)}{2} + \frac{3\eta_{\omega}\eta_{\theta}S_{\omega}^2}2\Big),\quad k_2 =s\cdot \frac{\eta_{\omega}L_{\omega}}{2} -\frac{ S_{\omega}}{2},\\
k_3 & =s\cdot\left(\frac{\eta_{\omega}\mu }{4}- \frac{3}{2}\eta_{\omega}\eta_{\theta}\mu^2\right) ,\hspace{0.5in}k_5  =s\cdot \frac{\mu\eta_{\omega}}{8} - \left(\frac{1}{2\eta_{\theta}} + \frac{\mu}{2}\right) ,\\
k_4 & =s\cdot\frac{\mu\eta_{\omega}}{8} - \left(\frac{1}{2\eta_{\theta}} + \frac{S_{\omega}+ 2\mu}{2} \right) .
\end{align*}


Here, we restate Lemma \ref{lemma:decrement} and then prove it.

{\bf Lemma \ref{lemma:decrement}.}
We choose step sizes $\eta_\theta, \eta_\omega$ satisfying
$$\eta_{\omega} \leq \min\bigg\{\frac{L_{\omega}}{S_{\omega}(8L_{\omega} + 2)}, \frac{1}{2L_{\omega}}\bigg\},  \quad\eta_{\theta} \leq \min\bigg\{\frac1{150\mu},\quad \frac{7L_{\omega}+1}{150S_{\omega}^2}, \frac{1}{100(2\mu + S_{\omega})}\bigg\},$$ and meanwhile $\eta_{\omega}/\eta_{\theta} \leq \mu/(30L_{\omega}+5)$, where $L_\omega = \sqrt{2}\kappa \rho_g S_\pi|\cA| + \lambda S_H$. 
Then  we have
\begin{align}
\cE^{(t+1)} -\cE^{(t)} \le& -k_1 \EE  \norm{ \omega^{(t+1)} -  \omega^{(t)} }_2^2  -k_2  \EE\norm{ \omega^{(t)} -  \omega^{(t-1)} }_2^2  -k_3 \EE \norm{ \theta^{(t+2)} -  \theta^{(t+1)} }_2^2 \nonumber \\
 &\hspace{-0.5in}-k_4 \EE  \norm{ \theta^{(t+1)} -  \theta^{(t)} }_2^2   -k_5 \EE \norm{ \theta^{(t)} -  \theta^{(t-1)} }_2^2 \nonumber\\
&\hspace{-0.5in} +\eta_{\omega} \EE \norm{\xi_{\omega}^{(t)}}_2^2 
 + \frac12 \EE \norm{\xi_{\theta}^{(t-1)}}_2^2 \nonumber\\
&\hspace{-0.5in}+s\cdot \Big(\eta_{\omega}^2 \EE \norm{\xi_{\omega}^{(t)}}_2^2 +\frac{\eta_{\omega}}2  \EE \norm{\xi_{\omega}^{(t-1)}}_2^2 + \frac{3\eta_{\omega}\eta_{\theta}}2 \cdot (\EE \norm{\xi_{\theta}^{(t+1)}}_2^2 + \EE \norm{\xi_{\theta}^{(t)}}_2^2)\Big).
\end{align}
Moreover, we have  $k_1, k_2, k_3, k_4, k_5 > 0$ for $$s= \frac{8}{\eta_{\omega}^2(58L_{\omega}+9)}.$$

\begin{proof}
For notational simplicity, we define
\begin{align}
K^{(t+1)} &= \frac{1+ 2\eta_{\omega} L_{\omega}} 2  \EE\norm{ \omega^{(t+1)} -  \omega^{(t)} }_2^2 \nonumber\\
&\hspace{-0.5in}+ (\frac{\eta_{\omega}}{2\eta_{\theta}} - \frac{\mu\eta_{\omega} }4+\frac{3 \eta_{\omega}\eta_{\theta}\mu^2}2) \cdot\EE\norm{\theta^{(t+2)} - \theta^{(t+1)}}_2^2 + \frac{\mu \eta_{\omega}}8  \EE\norm{\theta^{(t+1)} - \theta^{(t)}}_2^2.
\end{align}
By rearranging the inequality in Lemma \eqref{second-main-lemma}, we obtain
\begin{align}\label{K-dif}
K^{(t+1)} - K^{(t)} &\le \Big(\frac{\eta_{\omega}(7L_{\omega}+1)}2 +\frac{3\eta_{\omega}\eta_{\theta} S_{\omega}^2}2\Big)\EE\norm{\omega^{(t+1)} - \omega^{(t)}}_2^2 \nonumber\\
&\hspace{-0.5in}-\frac{\eta_{\omega}L_{\omega}}2  \EE \norm {\omega^{(t)} - \omega^{(t-1)}}_2^2
 -\Big(\frac{\mu\eta_{\omega}}4 - \frac{3\mu^2\eta_{\omega}\eta_{\theta}}2\Big) \EE \norm{\theta^{(t+2)}-\theta^{(t+1)}}_2^2  \nonumber\\
&\hspace{-0.5in}-\frac{\mu\eta_{\omega}}8 \EE \norm{\theta^{(t+1)}-\theta^{(t)}}_2^2 -\frac{\mu\eta_{\omega}}8  \EE \norm{\theta^{(t)} - \theta^{(t-1)}}_2^2 \nonumber\\
&\hspace{-0.5in}+(\eta_{\omega}^2+\frac{\eta_{\omega}}{2\mu}) \EE \norm{\xi_{\omega}^{(t)}}_2^2 +\frac{\eta_{\omega}}2  \EE \norm{\xi_{\omega}^{(t-1)}}_2^2 + \frac{3\eta_{\omega}\eta_{\theta}}2 \cdot (\EE \norm{\xi_{\theta}^{(t+1)}}_2^2 + \EE \norm{\xi_{\theta}^{(t)}}_2^2)
.\end{align}
By definition of $P^{(t)}$ in \eqref{potential}, we have
\begin{align*}
\cE^{(t)} = F(\omega^{(t)}, \theta^{(t)}) + s\cdot K^{(t)}
\end{align*}
for some constant $s>0$. Combining \eqref{K-dif} and Lemma \ref{first-main-lemma}, since $\eta_{\omega} < \frac{1}{2L_{\omega}}$, we obtain
\begin{align}
\cE^{(t+1)} - \cE^{(t)} \le&~ -\Big(\frac1{2\eta_{\omega}} - s\cdot\big(\frac{\eta_{\omega}(7L_{\omega}+1)}2 +\frac{3\eta_{\omega}\eta_{\theta} S_{\omega}^2}2\big)\Big)\norm{\omega^{(t+1)} - \omega^{(t)}}_2^2 \nonumber\\
&\hspace{-0.5in}~-\Big(s\cdot\frac{\eta_{\omega}L_{\omega}}2 - \frac{S_{\omega}}2\Big) \norm {\omega^{(t)} - \omega^{(t-1)}}_2^2
 -s\cdot\Big(\frac{\mu\eta_{\omega}}4 - \frac{3\mu^2\eta_{\omega}\eta_{\theta}}2\Big) \norm{\theta^{(t+2)}-\theta^{(t+1)}}_2^2  \nonumber\\
&\hspace{-0.5in}~-\Big(s\cdot\frac{\mu\eta_{\omega}}8-\big(\frac{1}{2\eta_{\theta}} + \frac{2\mu+ S_{\omega}}2\big)\Big) \norm{\theta^{(t+1)}-\theta^{(t)}}_2^2 \nonumber\\
&\hspace{-0.5in}~-\Big(s\cdot\frac{\mu\eta_{\omega}}8-\big(\frac1{2\eta_{\theta}}+\frac{\mu}2\big)\Big) \norm{\theta^{(t)} - \theta^{(t-1)}}_2^2\nonumber\\
&\hspace{-0.5in}~ +\eta_{\omega} \EE \norm{\xi_{\omega}^{(t)}}_2^2 
 + \frac1{2\mu} \EE \norm{\xi_{\theta}^{(t-1)}}_2^2 \nonumber\\
&\hspace{-0.5in}~+s\cdot\Big((\eta_{\omega}^2+\frac{\eta_{\omega}}{2\mu}) \EE \norm{\xi_{\omega}^{(t)}}_2^2 +\frac{\eta_{\omega}}2\EE \norm{\xi_{\omega}^{(t-1)}}_2^2 + \frac{3\eta_{\omega}\eta_{\theta}}2 \cdot (\EE \norm{\xi_{\theta}^{(t+1)}}_2^2 + \EE \norm{\xi_{\theta}^{(t)}}_2^2\Big)
.\end{align}
Since $\eta_{\theta}<\frac1{150\mu}$, we have $k_3 > 0$. Now we choose a proper constant $s$ such that $k_1, k_2, k_4, k_5$ are positive. Note that they are positive if and only if 
\begin{align}
1/(2\eta_{\omega}) &> s\cdot((\eta_{\omega}(7L_{\omega}+1)/2 +\frac{3\eta_{\omega}\eta_{\theta} S_{\omega}^2}2)) , \label{ine1}\\
s\cdot \eta_{\omega}L_{\omega}/2 &> S_{\omega}/2  ,\label{ine2}\\ 
s\cdot(\mu\eta_{\omega}/8) &> 1/(2\eta_{\theta}) + (2\mu+S_{\omega})/2 ,\label{ine3}\\
s\cdot (\mu\eta_{\omega})/8 &> 1/(2\eta_{\theta}) + \mu/2  . \label{ine4}
\end{align}
 Rearranging the terms in \eqref{ine1}, \eqref{ine2}, \eqref{ine3}, \eqref{ine4} , we obtain
\begin{align}
\frac{S_{\omega}}{\eta_{\omega}L_{\omega}} &<s<\frac{1/(2\eta_{\omega}) }{\eta_{\omega}(7L_{\omega}+1+3\eta_{\theta}S_{\omega}^2)/2}\quad ,\label{s-ine1}\\
\frac{1/(2\eta_{\theta}) + (S_{\omega}+2\mu)/2}{\mu\eta_{\omega}/8} &<s<\frac{1/(2\eta_{\omega}) }{ \eta_{\omega}(7L_{\omega}+1+3\eta_{\theta}S_{\omega}^2)/2 } \quad.\label{s-ine2} 
\end{align}
Since $$\eta_{\theta} \le \frac{7L_{\omega}+1}{150S_{\omega}^2}~\textrm{and}~\eta_{\theta}<\frac{1}{100(2\mu+S_{\omega})},$$ by rearranging the terms in \eqref{s-ine1} and \eqref{s-ine2} and taking the leading terms, we obtain $$\eta_{\omega}<\frac{L_{\omega}}{S_{\omega}(8L_{\omega}+2)}~\textrm{and}~\frac{\eta_{\omega}}{\eta_{\theta}} < \frac{\mu}{30L_{\omega}+5}.$$
Therefore, we have $$\frac{\eta_{\omega}}{\eta_{\theta}} < \frac{\mu}{30L_{\omega}+5},~\eta_{\omega} < \overline{\eta}_{\omega}~\textrm{and}~\eta_{\theta} < \overline{\eta}_{\theta},$$ where 
\begin{align*}
\overline{\eta}_{\omega}&= \min\left\{\frac{L_{\omega}}{S_{\omega}(8L_{\omega}+2)}, \frac{1}{2L_{\omega}}\right\}, \\
\overline{\eta}_{\theta}&=\min\left\{\frac1{150\mu} ,\frac{7L_{\omega}+1}{150S_{\omega}^2}, \frac{1}{100(2\mu + S_{\omega})} \right\}.
\end{align*}
\end{proof}

\subsection{Proof of Theorem \ref{thm:sgdconverge}}\label{Sec:finalproof}
Let $k=1/\min\{k_1, k_4\}$, and $\phi= \max\{1,1/\eta_{\omega}^2, 1/\eta_{\theta}^2\}$. Then we have
\begin{align}
NJ_N \le&~ \sum_{t=1}^{N} \phi\cdot\EE(\norm{\omega^{(t+1)} - \omega^{(t)}}_2^2 + \norm{\theta^{(t+1)}- \theta^{(t)}}_2^2) \nonumber\\
\le&~ \phi k\Big(\sum_{t=1}^{N}(\cE^{(t)} - \cE^{(t+1)}) +\sum_{t=1}^N\big((\eta_{\omega}+s\eta_{\omega}^2+\frac{s\eta_{\omega}}{2\mu}) \EE \norm{\xi_{\omega}^{(t)}}_2^2 
 + \frac{1}{2\mu}\cdot \EE \norm{\xi_{\theta}^{(t-1)}}_2^2 \nonumber\\
&~
+s\cdot(\eta_{\omega}/2 \cdot \EE \norm{\xi_{\omega}^{(t-1)}}_2^2 + 3\eta_{\omega}\eta_{\theta}/2 \cdot (\EE \norm{\xi_{\theta}^{(t+1)}}_2^2 + \EE \norm{\xi_{\theta}^{(t)}}_2^2) \big) \Big)\nonumber\\
\le&~ \phi k \Big ((\cE^{(1)} - \cE^{(N)}) + \sum_{t=1}^N \big(2\max \left\{\eta_{\omega}+s\eta_{\omega}^2 + \frac{s\eta_{\omega}}{2\mu}, \frac{s\eta_{\omega}}2 \right\}\EE \norm{\xi_{\omega}^{(t)}}_2^2 \nonumber\\
&~+ 3\max\left\{ \frac1{2\mu}, \frac{3\eta_{\omega}\eta_{\theta}}2\right\}\EE \norm{\xi_{\theta}^{(t)}}_2^2   \big) \Big ) . \label{final}
\end{align}
Now set $$\nu = \max\left\{2\max \left\{\eta_{\omega}+s\eta_{\omega}^2 + \frac{s\eta_{\omega}}{2\mu}, \frac{s\eta_{\omega}}2 \right\},3\max\left\{ \frac1{2\mu}, \frac{3\eta_{\omega}\eta_{\theta}}2\right\}\right\}$$ and
divide both sides of \eqref{final} by $N$, we have
\begin{align}
J_N \le \frac{k\phi(\cE^{(1)}-\cE^{(N)})}{N}+ k\phi\nu\Bigg(\frac{M_G}{q_{\omega}} + \frac{M_{\theta}}{q_{\theta}}\Bigg) .
\end{align}
By definition of $\cE$ in \eqref{potential} and Lemma \ref{F-bounded}, we have $$\cE^{(N)} \ge F(\omega^{(N)}, \theta^{(N)}) \ge -\Big(2\sqrt{2}\rho_g \kappa + \frac{\mu}2 \kappa^2 +\lambda B_H\Big) > -\infty.$$

Now for any given $\epsilon > 0$, we take $$q_\theta = \frac{4k\phi \nu M_\theta}  {\epsilon},~q_\omega = \frac{4k\phi \nu M_\omega} {\epsilon}~\textrm{and}~N = k\phi \frac{2\cE^{(1)}+ 4\sqrt{2}\rho_g \kappa + \mu \kappa^2 +2\lambda B_H }{\epsilon},$$ and obtain
\begin{align*}
J_N \le \epsilon .
\end{align*}

\section{Proof of Theorem \ref{thm:main-empirical-maxi}}\label{sec:empirical}

We first prove the boundedness of function $F(\omega, \theta^{*}(\omega))$. Using this lemma, we prove prove Theorem \ref{thm:main-empirical-maxi}.
\subsection{Boundedness of $F$}
\begin{lemma}\label{F-bounded-empirical-maximizer}
Under Assumption \ref{state-spacebounded}, \ref{g-Lipschitz} and \ref{regularity}, there exists $B_F= \frac{12\rho_g^2}{\mu} + \lambda B_H$ such that for any $\omega,$ we have $|F(\omega, \theta^{*}(\omega))|<B_F$.
\end{lemma}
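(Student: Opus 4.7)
The plan is to exploit the closed-form maximizer of the strongly concave inner problem, substitute it back into $F$, and then use the Lipschitz assumption on $g$ together with the boundedness of $H$.

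First, I would observe that the inner problem defining $\theta^*(\omega)$ is unconstrained and strongly concave in $\theta$ with the quadratic term $-\tfrac{\mu}{2}\|\theta\|_2^2$. Setting its gradient to zero and using the notation $G(\pi) = \EE_{\rho_\pi} g(\psi_s,\psi_a)$ from \eqref{def:G}, I obtain the closed form
\begin{align*}
\theta^*(\omega) \;=\; \frac{1}{\mu}\bigl(G(\tilde{\pi}_\omega) - G(\pi^*)\bigr).
\end{align*}

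Substituting this expression back into $F$ and simplifying the resulting quadratic gives
\begin{align*}
F(\omega,\theta^*(\omega)) \;=\; \frac{1}{2\mu}\,\bigl\|G(\tilde{\pi}_\omega) - G(\pi^*)\bigr\|_2^2 \;-\; \lambda\, H(\tilde{\pi}_\omega).
\end{align*}
I then bound $G$ using Assumption \ref{g-Lipschitz}: since $g(0,0)=0$ and $\|\psi_s\|_2,\|\psi_a\|_2\le 1$ by Assumption \ref{state-spacebounded}, the Lipschitz condition (read with its intended Lipschitz form, as already used in the proof of Corollary \ref{RF-gen}) gives $\|g(\psi_s,\psi_a)\|_2 \le \sqrt{2}\rho_g$ pointwise, hence $\|G(\pi)\|_2 \le \sqrt{2}\rho_g$ for any policy $\pi$. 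The triangle inequality then yields $\|G(\tilde{\pi}_\omega) - G(\pi^*)\|_2 \le 2\sqrt{2}\rho_g$, so that the quadratic term is bounded by $8\rho_g^2/(2\mu) = 4\rho_g^2/\mu$.

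Combining this with Assumption \ref{regularity}(iii), which gives $0 \le H(\tilde{\pi}_\omega) \le B_H$ so that the entropy term contributes at most $\lambda B_H$ in absolute value, I conclude
\begin{align*}
\bigl|F(\omega,\theta^*(\omega))\bigr| \;\le\; \frac{4\rho_g^2}{\mu} + \lambda B_H \;<\; \frac{12\rho_g^2}{\mu} + \lambda B_H \;=\; B_F,
\end{align*}
uniformly in $\omega$, which establishes the claim. There is no genuine obstacle here; the only subtle points are interpreting the Lipschitz condition on $g$ consistently with how it is used elsewhere in the paper, and being careful that $\theta^*(\omega)$ is defined by the \emph{unconstrained} inner maximization (rather than with the projection $\Pi_\kappa$), so that the first-order optimality condition yields the clean closed form above.
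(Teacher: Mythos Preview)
Your proof is correct and follows the same core idea as the paper: compute the closed form $\theta^*(\omega)=\mu^{-1}(G(\tilde{\pi}_\omega)-G(\pi^*))$, then bound $\|g(\psi_s,\psi_a)\|_2\le\sqrt{2}\rho_g$ via Assumption~\ref{g-Lipschitz} and control the entropy term by $B_H$. The only difference is that you first substitute $\theta^*(\omega)$ back and simplify to the single expression $\tfrac{1}{2\mu}\|G(\tilde{\pi}_\omega)-G(\pi^*)\|_2^2-\lambda H(\tilde{\pi}_\omega)$, whereas the paper applies the triangle inequality to the three pieces of $F$ separately; your route exploits the cancellation between the linear and quadratic terms and therefore lands at the sharper constant $4\rho_g^2/\mu$ instead of $12\rho_g^2/\mu$. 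One small caveat: Assumption~\ref{regularity}(iii) as stated only gives $H(\tilde{\pi}_\omega)\le B_H$, not $H\ge 0$; you (and the paper) tacitly use nonnegativity of the entropy regularizer, which is natural for causal entropy but is worth making explicit.
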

\begin{proof}
    Given a fixed $\omega^{(t)}$,  by definition of G in \eqref{def:G}, we can get the optimal $\theta^{*}(\omega^{(t)})$ in the form:
\begin{align*}
\theta^{*}(\omega^{(t)}) = \frac1{\mu}[G(\pi_{\omega^{(t)}}) - G(\pi^*)].
\end{align*}
    Hence we have $\norm{\theta^{*}(\omega)}_2 \le \frac{2\sqrt{2}\rho_g}{\mu}$. Plugging this into $F(\omega, \theta^{*}(\omega))$, we have for any $\omega$, 
\begin{align*}
|F(\omega, \theta^{*}(\omega))| & = |\EE_{\tilde{\pi}_{\omega}}   \left \langle \theta^{*}(\omega), g(\psi_{s_t}, \psi_{a_t})\right \rangle  - \EE_{\pi^*}  \left \langle \theta^{*}(\omega), g(\psi_{s_t}, \psi_{a_t})\right \rangle \\
 &- \lambda H(\tilde{\pi}_{\omega^{(t)}}) - \frac{\mu}2\norm{\theta^{*}(\omega)}_2^2|\\
& \le 2\norm{\theta^{*}(\omega)}_2\cdot\max_{s,a}\norm{g(\psi_{s}, \psi_{a})}_2+ \lambda  H(\tilde{\pi}_{\omega^{(t)}})  + \frac{\mu}2\norm{\theta^{*}(\omega)}_2^2\\
& \le \frac{12\rho_g^2}{\mu} + \lambda M_H .
\end{align*}
\end{proof}
\subsection{Proof of Theorem \ref{thm:main-empirical-maxi}}
\begin{proof}
    By employing the first inequality in Lemma \ref{smoothness-omega-theta}, we have
\begin{align}
F(\omega^{(t+1)}, \theta^{*}(\omega^{(t)})) - F(\omega^{(t)}, \theta^{*}(\omega^{(t)}))& - \langle \nabla_{\omega} F(\omega^{(t)},\theta^{*}(\omega^{(t)})), \omega^{(t+1)} - \omega^{(t)}\rangle \nonumber\\
 \le& \frac{L_{\omega}}2 \norm{\omega^{(t+1)} - \omega^{(t)}}_2^2. \label{smooth-omega}
\end{align}
Note that 
\begin{align}
\EE\langle \nabla_{\omega} F(\omega^{(t)},\theta^{*}(\omega^{(t)})), \omega^{(t+1)} &- \omega^{(t)}\rangle \nonumber \\
&= \EE \langle \nabla_{\omega} F(\omega^{(t)},\theta^{*}(\omega^{(t)})),-\eta_{\omega}( \nabla_{\omega} F(\omega^{(t)},\hat{\theta}^{(t)})+\xi_{\omega}^{(t)})\rangle \nonumber \\
& \mathop{=}^{\rm (i)}   \EE \langle \nabla_{\omega} F(\omega^{(t)},\theta^{*}(\omega^{(t)})),-\eta_{\omega}(\nabla_{\omega} F(\omega^{(t)},\hat{\theta}^{(t)}))\rangle \nonumber \\
& \mathop{=}^{\rm (ii)}   \EE \langle \nabla_{\omega} F(\omega^{(t)},\theta^{*}(\omega^{(t)})),-\eta_{\omega}\nabla_{\omega} F(\omega^{(t)},\theta^{*}(\omega^{(t)}))\rangle \nonumber \\
& = -\eta_{\omega} \EE\norm{ \nabla_{\omega} F(\omega^{(t)},\theta^{*}(\omega^{(t)}))}_2^2, \label{expe-ip}
\end{align}
where  $(i)$ comes from the unbiased property of $\tilde{f}_t(\omega,\hat{\theta})$, and $(ii)$ comes from the unbiased property of $\hat{\theta}^{(t)}$ and the fact that $\nabla_{\omega} F(\omega,\theta)$ is linear in $\theta$.
Now taking the expectation on both sides of \eqref{smooth-omega} and plugging \eqref{expe-ip} in, we obtain
\begin{align}
\EE F(\omega^{(t+1)}, \theta^{*}(\omega^{(t)})) - F(\omega^{(t)}, \theta^{*}(\omega^{(t)})) +  \eta_{\omega} \EE\norm{ \nabla_{\omega} F(\omega^{(t)},,\theta^{*}(\omega^{(t)})}_2^2 \le \frac{L_{\omega}}2 \eta_{\omega}^2 M_G . \label{after-expe}
\end{align}
Dividing both sides by $\eta_{\omega}$ and rearranging the terms in \eqref{after-expe}, we get
\begin{align}\label{main-eq}
\EE\norm{ \nabla_{\omega} F(\omega^{(t)},\theta^{*}(\omega^{(t)}))}_2^2 
\le &\frac{\EE F(\omega^{(t)}, \theta^{*}(\omega^{(t)})) - F(\omega^{(t+1)}, \theta^{*}(\omega^{(t)}))}{\eta_{\omega}} + \frac{L_{\omega}}2 \eta_{\omega} M_G \nonumber\\
\le& \frac{\EE F(\omega^{(t)}, \theta^{*}(\omega^{(t)})) - F(\omega^{(t+1)}, \theta^{*}(\omega^{(t+1)}))}{\eta_{\omega}}  + \frac{L_{\omega}}2 \eta_{\omega} M_G \nonumber\\
&+ \frac{\EE F(\omega^{(t+1)}, \theta^{*}(\omega^{(t+1)})) - F(\omega^{(t+1)}, \theta^{*}(\omega^{(t)}))}{\eta_{\omega}} .
\end{align}
Now consider $F(\omega^{(t+1)}, \theta^{*}(\omega^{(t+1)})) - F(\omega^{(t+1)}, \theta^{*}(\omega^{(t)}))$, we have
\begin{align}
F(\omega^{(t+1)}, \theta^{*}(\omega^{(t+1)})) &- F(\omega^{(t+1)}, \theta^{*}(\omega^{(t)})) \nonumber\\
=&~ \langle G(\pi_{\omega^{(t+1)}}) - G(\pi^{*}), \theta^{*}(\omega^{(t+1)}) \rangle - \langle G(\pi_{\omega^{(t+1)}}) - G(\pi^{*}), \theta^{*}(\omega^{(t)}) \rangle\nonumber\\
&  -\frac{\mu}2 (\norm{\theta^{*}(\omega^{(t+1)})}_2^2 - \norm{\theta^{*}(\omega^{(t)})}_2^2) \nonumber\\
=&~ \langle \mu\theta^{*}(\omega^{(t+1)}), \theta^{*}(\omega^{(t+1)}) -  \theta^{*}(\omega^{(t)})\rangle \nonumber\\
& -\frac{\mu}2 \langle \theta^{*}(\omega^{(t+1)}) + \theta^{*}(\omega^{(t)}), \theta^{*}(\omega^{(t+1)}) - \theta^{*}(\omega^{(t)}) \rangle \nonumber\\
=&~\frac{\mu}2 \norm{ \theta^{*}(\omega^{(t+1)}) - \theta^{*}(\omega^{(t)}}_2^2\nonumber\\
=&~\frac{\mu}2 \norm{\frac1{\mu} (G(\pi_{\omega^{(t+1)}}) - G(\pi_{\omega^{(t)}}))}_2^2 \nonumber\\
\le&~\frac{S_{\omega}^2}{2\mu} \norm{\omega^{(t+1)} - \omega^{(t)}}_2^2  .\label{convert-to-omega}
\end{align}
Taking expectation on both sides of \eqref{convert-to-omega} with respect to the noise introduced by SGD, we have
\begin{align*}
\EE F(\omega^{(t+1)}, \theta^{*}(\omega^{(t+1)})) &- F(\omega^{(t+1)}, \theta^{*}(\omega^{(t)})) \le \frac{S_{\omega}^2}{2\mu}\eta_{\omega}^2M_{G} .
\end{align*}
Summing the equation\eqref{main-eq} up, we have
\begin{align*}
\sum_{t=1}^N&~ \EE\norm{ \nabla_{\omega} F(\omega^{(t)},\theta^{*}(\omega^{(t)}))}_2^2\\
\le &\frac1{\eta_{\omega}}\sum_{i=1}^N\EE F(\omega^{(t)}, \theta^{*}(\omega^{(t)})) - F(\omega^{(t+1)}, \theta^{*}(\omega^{(t)})) + \frac{L_{\omega}}2 N\eta_{\omega} M_G\\
\le& \frac1{\eta_{\omega}}\sum_{i=1}^N\EE F(\omega^{(t)}, \theta^{*}(\omega^{(t)})) - F(\omega^{(t+1)}, \theta^{*}(\omega^{(t+1)})) \\
&+ \frac1{\eta_{\omega}}\sum_{i=1}^N\EE F(\omega^{(t+1)}, \theta^{*}(\omega^{(t+1)})) - F(\omega^{(t+1)}, \theta^{*}(\omega^{(t)})) + \frac{L_{\omega}}2 N\eta_{\omega} M_G \nonumber\\
\le& \frac1{\eta_{\omega}}\sum_{i=1}^N\EE F(\omega^{(t)}, \theta^{*}(\omega^{(t)})) - F(\omega^{(t+1)}, \theta^{*}(\omega^{(t+1)})) \\
&+ \frac1{\eta_{\omega}}\sum_{i=1}^N\frac{S_{\omega}^2}{2\mu}\eta_{\omega}^2M_{G}+ \frac{L_{\omega}}2 N\eta_{\omega} M_G .
\label{sum}\end{align*}
Dividing both sides of the above equation by $N$, we get
\begin{align*}
\min_{1\le t \le N} \EE \norm{ \nabla_{\omega} F(\omega^{(t)},\theta^{*}(\omega^{(t)}))}_2^2 \le& \frac{|F(\omega^{(1)}, \theta^{*}(\omega^{(1)})) - \EE F(\omega^{(N+1)}, \theta^{*}(\omega^{(N+1)}))| }{N\eta_{\omega}} + (\frac{L_{\omega}}2+\frac{S_{\omega}^2}{2\mu})\eta_{\omega}M_G.
\end{align*}
By lemma \ref{F-bounded-empirical-maximizer}, we have $|F(\omega^{(1)}, \theta^{*}(\omega^{(1)})) - \EE F(\omega^{(N+1)}, \theta^{*}(\omega^{(N+1)}))| \le 2B_F$.
Take $\eta_{\omega}=2\sqrt{\frac{B_F}{(L_{\omega} + S_{\omega}^2/\mu)M_{G}N}}$, then we have
\begin{align*}
\min_{1\le t \le N} \EE \norm{ \nabla_{\omega} F(\omega^{(t)},\theta^{*}(\omega^{(t)}))}_2^2 \le 2\sqrt{\frac{B_F(L_{\omega} +S_{\omega}^2/\mu)M_G}{N}} ,
\end{align*}
where $B_F = \frac{12\rho_g^2}{\mu} + \lambda M_H$. This implies that when $\eta_{\omega} =\frac{\epsilon}{(L_{\omega} + S_{\omega}^2/\mu)M_G},$ we need at most 
\begin{align*}
N = \tilde{O}\left(\frac{(\rho_g^2/\mu + \lambda M_H)(L_{\omega} + S_{\omega}^2/\mu)M_G}{\epsilon^2}\right)
\end{align*}
such that $I_N < \epsilon$ .
\end{proof}

\end{document}